\documentclass[hidelinks,11pt]{article} 

\usepackage{times}
\usepackage{soul}
\usepackage{url}
\usepackage[hidelinks]{hyperref}
\usepackage[utf8]{inputenc}
\usepackage[small]{caption}
\usepackage{graphicx}
\usepackage{amsmath,amssymb,amsthm}
\usepackage{booktabs}
\usepackage{algorithm}
\usepackage{algorithmicx}
\usepackage[noend]{algpseudocode}
\usepackage[switch]{lineno}
\usepackage{fullpage}
\usepackage{microtype}
\usepackage{graphicx}
\usepackage{booktabs} 
\usepackage{hyperref}
\usepackage{bm}
\usepackage{xcolor}
\usepackage{xcolor}
\usepackage{subcaption}
\usepackage{caption}
\usepackage{wrapfig}
\hypersetup{
    colorlinks=true,
    linkcolor=red,
    citecolor=blue,
    filecolor=magenta,      
    urlcolor=cyan,
   }

\usepackage[T1]{fontenc}    
\usepackage{amsfonts}       
\usepackage{nicefrac}       

\newtheorem{theorem}{Theorem}
\newtheorem{lemma}{Lemma}
\newtheorem{proposition}{Proposition}

\newtheorem{assumption}{Assumption}

\newtheorem{definition}{Definition}
\newtheorem{remark}{Remark}

\newcommand{\norm}[1]{\ensuremath{\left\| #1\right\|}}

\newcommand{\R}{\ensuremath{\mathbb{R}}}

\newcommand{\X}{\ensuremath{\mathcal{X}}}

\renewcommand{\O}{\ensuremath{\mathcal{O}}}

\newcommand{\CO}{\ensuremath{\mathcal{O}}}
\newcommand{\x}{\ensuremath{\mathrm{\mathbf{x}}}}
\newcommand{\supp}{\ensuremath{\mathrm{supp}}}
\newcommand{\G}{\ensuremath{\mathcal{G}}}

\DeclareMathOperator{\Tr}{\mathrm{tr}}

\DeclareMathOperator*{\argmax}{arg\,max}

\title{Learning to Sparsify Stochastic Linear Bandits}
\author{Zhengmiao Wang, Ming Chi, Zhi-Wei Liu, Lintao Ye, Carla Fabiana Chiasserini 
\thanks{Z. Wang, M. Chi, ZW. Liu and L. Ye are with the School of Artificial Intelligence and Automation, Huazhong University of Science and Technology, Wuhan 430074, China, email: \{wangzhengmiao,chiming,zwliu,yelintao93\}@hust.edu.cn. C.~F.~Chiasserini is with the Department of Electronics and Telecommunications, Politecnico di Torino, 10129 Torino, Italy, email: carla.chiasserini@polito.it. } 
}

\begin{document}
\maketitle

\begin{abstract}
    This paper addresses the problem of learning to sparsify stochastic linear bandits, where a decision-maker sequentially selects actions from a high-dimensional space subject to a sparsity constraint on the number of nonzero elements in the action vector. The key challenge lies in minimizing cumulative regret while tackling the potential NP-hardness of finding optimal sparse actions due to the inherent combinatorial structure of the problem. We propose an adaptively phased exploration and exploitation algorithmic framework, utilizing ordinary least squares for parameter learning and specialized subroutines for sparse action selection. When the action set is a Euclidean ball, optimal sparse actions can be efficiently computed, enabling us to establish a $\tilde{\mathcal{O}}(d\sqrt{T})$ regret, where $d$ is the dimension of the action vector and $T$ is the time horizon length. For general convex and compact action sets where finding optimal sparse actions is intractable, we employ a greedy subroutine. For general strongly convex action sets, we derive a $\tilde{\mathcal{O}}(d \sqrt{T})$ $\alpha$-regret; for general compact sets lacking strong convexity, we establish a $\tilde{\mathcal{O}}(d T^{2/3})$ $\alpha$-regret, where $\alpha$ pertains to the approximation ratio of the greedy algorithm. Finally, we validate the performance of our algorithms using extensive experiments including an application to recommendation system.
\end{abstract}

\section{Introduction}\label{sec:introduction}
Online decision-making under uncertainty, particularly the multi-armed bandit (MAB) framework, has been a central area of research \cite{auer2002finite,lattimore2020bandit}. Stochastic linear bandits model the reward at each decision step $t$ as an inner product of an unknown parameter  $\theta_*$ (i.e., context vector) and the action vector $\mathbf{x}_t$ plus stochastic noise \cite{abbasi2011improved}, which provides a simple yet general approach to model sequential decision-making in applications such as recommendation systems and adaptive control \cite{li2010contextual,chu2011contextual}. An important consideration in linear bandits for real-world settings is sparsity, i.e., the context vector is sparse and only a small subset of its features influences the rewards \cite{agrawal2013thompson,bastani2021mostly}. Exploiting this sparsity, especially in high-dimensional action space, can lead to more efficient algorithms and improved regret \cite{abbasi2012online,russo2018tutorial}. However, considering sparsity introduces an additional layer of complexity into the algorithm design and regret analysis \cite{oh2021sparsity,hao2020high,daivariance}.

{\bf Linear Bandits with Sparse Context Vector.} 
The earlier work \cite{abbasi2012online} established a $\tilde{\CO}(\sqrt{HdT})$ regret bound for sparse linear bandits when the sparsity pattern of the unknown context vector is known, where $H$ is the number of nonzero entries in $\theta_*\in\mathbb{R}^d$. Later, \cite{pacchiano2022best} studied sparse linear bandits with unknown sparsity pattern and \cite{jin2024sparsity} proved a $\tilde{\CO}(\sqrt{HdT})$ regret using a randomized model selection approach. The work of \cite{oh2021sparsity,ichikawa2024new} further studied the sparse contextual linear bandits setting. However, the aforementioned work assumed that the sparsity pattern of the context vector remains fixed and does not change over time.

{\bf Combinatorial Multi-Armed Bandits (CMAB).} In the standard CMAB setting, a learner selects a subset of base arms (i.e., a super-arm) 
at each step, and the reward is given by a function of the individual reward of each arm \cite{chen2013combinatorial,chen2016combinatorial}. For linear reward functions, an upper confidence bound method adapted from linear bandits has been proposed, which chooses a super-arm from the predefined set of base arms \cite{gai2012combinatorial}. For nonlinear reward functions, the recent work \cite{fourati2024combinatorial} proposes an online algorithm that achieves a $(1-1/e)$-regret bound of $\tilde{\mathcal{O}}(T^{2/3})$ for i.i.d. sampled monotone stochastic submodular reward functions, which leverages offline approximation algorithms as black-box oracles \cite{nie2023framework}. A multi-agent grouped CMAB problem has been studied in \cite{vannella2023statistical,vannella2023best}, which uses a mean field approximation approach and characterizes its statistical and computational trade-off.   

{\bf Online Combinatorial Optimization.} A broader area of work studies cases when the decision-maker first chooses an action that is a subset of a ground set at each step, and then a set function of the chosen action is revealed to return the corresponding reward \cite{streeter2008online,wan2023bandit}. Since the chosen action set often needs to satisfy certain combinatorial constraints (e.g., cardinality constraints), the problem is also termed online combinatorial optimization \cite{golovin2014online}. A difficulty emerges when the offline version of the problem (i.e., when the set functions are known) is NP-hard, which necessitates the use of $\alpha$-regret that compares the cumulative reward from an online algorithm against an approximately optimal reward with ratio $\alpha$ in hindsight \cite{streeter2008online}. The ratio $\alpha\in(0,1)$ is set to be the approximation ratio of an offline algorithm  \cite{das2018approximate}.

{\bf Motivation.} In the linear bandits with sparse context vector $\theta_*$, the sparsity pattern of the context vector is a prescribed condition on the problem, i.e., when choosing the (continuous) action vector $\mathbf{x}_t$, the learner cannot modify or design the sparsity of $\theta_*$. Since the reward in linear bandits depends on the inner product between $\theta_*$ and $\mathbf{x}_t$, the sparsity of $\theta_*$ directly enforces the same sparsity pattern on the action vector $\mathbf{x}_t$. We now ask the following natural questions: {\it What if the learner must simultaneously determine the sparsity pattern of the action vector $\mathbf{x}_t$ and the non-zero entries of $\mathbf{x}_t$? Can we design algorithms for the learner to achieve meaningful regret?} 
We refer to the problem as {\it learning to sparsify stochastic linear bandits.} Such a problem arises in many applications, e.g., recommendation systems and stock marketing.

{\bf Contributions.} Since finding an optimal sparse action even in the offline setting can be NP-hard, we are faced with the following major challenge: 
{\it How can a learner efficiently minimize cumulative regret when the action selection at each step constitutes a potentially NP-hard problem?}
This paper confronts this challenge by introducing a novel algorithmic framework for learning to sparsify stochastic linear bandits. We propose an adaptively phased exploration and exploitation algorithm that {\it decouples} parameter learning via Ordinary Least Squares (OLS) from the combinatorial action selection, and employs subroutines for efficient selection of sparse actions. Our key contributions are:\\
$\bullet$ We show that an optimal sparse action is tractable under a Euclidean ball action set $\X$. We introduce a novel adaptive warm-up strategy with a data-driven stopping condition that does not rely on prior knowledge of the unknown parameters. Leveraging this adaptive strategy to recover the support of the optimal sparse action, we establish a $\tilde{\mathcal{O}}(d\sqrt{T})$ regret.\\
$\bullet$ When finding optimal sparse solutions becomes intractable for general action sets, we propose a combinatorial greedy algorithm as a subroutine to select a sparse action efficiently. Based on the geometry of the action set, we distinguish between two cases: i) For \textit{strongly convex} action sets (e.g., an ellipsoid), we propose an algorithm that adaptively recovers the support of a greedy sparse action via the notion of greedy gap, and prove a $\tilde{\mathcal{O}}(d \sqrt{T})$ $\alpha$-regret, relying on Lipschitz continuity of the optimal action map due to strongly convexity. 
ii) For \textit{general compact} action sets (e.g., a hypercube), we show that a non-adaptive phased exploration and exploitation algorithm achieves a $\tilde{\mathcal{O}}(d T^{2/3})$ $\alpha$-regret.\\
$\bullet$ Finally, we validate our theoretical results using comprehensive numerical experiments across action sets with different geometries, including a real-world recommendation system instance.

Our regret results for Euclidean ball and strongly convex sets readily match the regret lower bounds shown for standard linear bandits (without sparsity) \cite{lattimore2020bandit}

{\bf Notation.}
For a matrix $P\in\R^{d\times d}$, let $P^{\top}$, $\Tr(P)$, $(P)_{ij}$ be its transpose, trace, and element in  $i$-th row and $j$-th column, respectively. 
Let $\lambda_\mathrm{min} (A)$, $\lambda_\mathrm{max} (A)$ be the minimum and maximum eigenvalues of a positive definite matrix $A$, respectively.
Let $I_d$ be the $d$-dimensional identity matrix. Denote $[n]\triangleq\{1,\dots,n\}$ for $n\ge1$. 
For any $\x\in\R^d$ and any $S\subseteq[d]$, let $\x(S)\in\R^d$ be such that $\supp(\x)=S$ and $(\x(S))_i=(\x)_i$ for all $i\in[d]$, where $(\x)_i$ denotes the $i$-th element of $\x$ and $\supp(\x)\triangleq\{i\in[d]:(\x)_i\ne 0\}$. 
Denote by $a_{i:j}$ the sequence $a_i,a_{i+1},\dots,a_j$.
For $y\in\R$, let $|y|$ be its absolute value. For a finite set $S$, let $|S|$ be its cardinality.

\section{Problem Formulation and Preliminaries}\label{sec:Problem Formulation}
The stochastic linear bandit problem models a scenario of learning under uncertainty \cite{dani2007price,abbasi2011improved,lattimore2020bandit}. Over $T$ time steps, a decision-maker sequentially picks an action $\x_t$ from a feasible action space $\X_t \subset \R^d$ and receives a reward $Y_t\in\R$, which is a random variable with mean determined by the inner product of the action and an unknown parameter $\theta_*\in\R^d$, i.e., $\mathbb{E}[Y_t|\x_t] = \theta_*^\top\x_t$. The challenge is to devise a policy that minimizes the cumulative regret $R_T$, which quantifies the sum of performance gaps between the optimal action in hindsight and the chosen action at each step: 
\begin{align}
    R_T=\sum_{t=1}^{T}(\max_{\x\in\mathcal{X}}\theta_*^\top \x-\theta_*^\top\x_t).\label{eqn:def of reg}
\end{align}
This basic model assumes that the reward $Y_t$ depends on the current action $\x_t$; however, in many cases, the external environment has requirements for the number of non-zero components of the action vector $\x_t$. For instance, i) the number of non-zero components of $\x_t$ must be smaller than $H$ for a deterministic $H\in[d]$ ($H$ is known to the decision maker); ii) some components of the action vector $\x_t$ are not available (they need to stay $0$ for time $t$). Specifically, letting $S_t=\supp(\x_t)$, at each time step $t$, the decision maker needs to select 
\begin{align}
    \x_t(S_t)\in\X, ~s.t.~ S_t\subseteq[d] ~\text{and}~ |S_t|\le H.
\end{align}
We make the following standard assumption on $\X$.
\begin{assumption}
\label{ass:feasible set bound}
The feasible action set $\X\subset\R^d$ is closed and bounded.
\end{assumption}
\begin{remark}\label{remark:fix action set}
    The action set $\X_t$ can generally change over time as in the standard stochastic linear bandits setup, e.g. \cite{abbasi2011improved}. More discussions on this will be provided in Appendix~\ref{subsec:time_varying_action_set}.
\end{remark}
\noindent 
The reward $Y_t$ at any time $t$ is then given by
\begin{equation}\label{eqn:reward function}
Y_t=\theta_*^\top\x_t(S_t)  +\eta_t,
\end{equation}
where $\eta_t$ is a random noise satisfying a certain tail condition specified below. 
\begin{assumption}
\label{ass:noise}
For any $t\in[T]$, $\eta_t$ is a random noise drawn from a conditionally $\sigma$-sub-Gaussian distribution with unknown variance proxy $\sigma$, i.e., $\mathbb{E}[\eta_t\mid \x_{1:t}(S_{1:t}),\eta_{1:t-1}]=0 $, and $\mathbb{E}[\exp(\lambda \eta_t)\mid \x_{1:t}(S_{1:t}),\eta_{1:t-1}]\le \exp(\frac{\lambda ^2 \sigma^2}{2} )$ for all $\lambda\in\R$. 
\end{assumption}

An optimal solution in hindsight (i.e., when $\theta_*$ is known) to the sparsifying linear bandits problem described above is given by solving
\begin{align}
    \x^\star(S^\star)\in\argmax_{\x(S)\in\X, S\subseteq[d],|S|\le H} \theta_*^\top\x(S).\label{eqn:def of exact solution}
\end{align}
Finding an optimal action $\x^\star(S^\star)$ involves solving an optimal subset selection problem, which is generally NP-hard \cite{das2018approximate,ye2025online}. This computational barrier typically makes it intractable for any polynomial-time algorithm to compete with the true optimum, necessitating a relaxed performance benchmark known as $\alpha$-regret \cite{kalai2005efficient,cesa2006prediction,streeter2008online}, which is given by
\begin{align}
\alpha\text{-}R_T\triangleq \alpha\left (\sum_{t=1}^{T}\theta_*^\top\x^\star(S^\star)\right )-\sum_{t=1}^{T}\theta_*^\top\x_t(S_t).\label{eqn:def of a-regret}
\end{align}
When $\alpha=1$, $\alpha\text{-}R_T$ in \eqref{eqn:def of a-regret} reduces to the ordinary regret $R_T$ in \eqref{eqn:def of reg}. The $\alpha$-regret defined in \eqref{eqn:def of a-regret} compares against an $\alpha$-approximation of the optimal action, where $\alpha\in(0, 1]$ is the approximation ratio of an algorithm for the offline version of problem \eqref{eqn:def of exact solution}. 

{\bf A Relevant Problem Formulation.}  
A sparse online linear optimization problem has been studied in \cite{ye2025online}. At each step $t$, the learner selects $\mathbf{x}_t\in\R^d$ and $S_t\subseteq[d]$ with $|S_t|\le H$ and receives a reward given by $f_t(\mathbf{x}_t,S_t)=\theta_t^{\top}\mathbf{x}_t(S_t)$, where $\theta_t\in\R^d$ is potentially generated by an adversary. However, there is no regret guarantee for the bandit setting. 
The online linear optimization framework for a single continuous action (without sparsity consideration) has been studied in \cite{bubeck2012towards}, which achieves a $\tilde{\CO}(\sqrt{dT})$-regret. In contrast, 
we study the stochastic linear bandits setting whose reward is a random variable given by \eqref{eqn:reward function}.

{\bf Roadmap.} Our algorithm design and regret analysis in the subsequent sections hinge on the geometry of the action set $\X$. We mainly split our approach into three cases:
(i) \textit{Euclidean Ball Action Sets} (Section~\ref{sec:euclidean_case}), where we show the optimal sparse action can be computed exactly and efficiently;
(ii) \textit{General Strongly Convex Action Sets} (Section~\ref{sec:strongly_convex_case}), where we utilize a greedy approximation algorithm to select a sparse action;
and (iii) \textit{Extensions to General Compact Sets} (Section~\ref{sec:extensions}), where we address action sets lacking strong convexity. 

A fundamental property required for our analysis is the variation of the optimal value function $h(S; \theta)$ with respect to the parameter $\theta$. We present the following proposition to characterize this variation at two levels. First, for general compact action sets, the value function itself is Lipschitz continuous. Second, when the action set is further strongly convex, we obtain a stronger regularity: the gradient of the value function (i.e., the optimal action) also becomes Lipschitz continuous. The latter extends the Smooth Best Arm Response (SBAR) condition used in \cite{rusmevichientong2010linearly} (leveraging the duality properties of strongly convex sets \cite[Corollary 4]{polovinkin1996strongly}) to our sparse support framework.\footnote{According to \cite[Definition~1]{polovinkin1996strongly}, a set $S\subseteq\R^d$ is said to be strongly convex set of radius $R > 0$ if we can represent it as an intersection of closed balls of radius $R$.}

\begin{proposition}[Lipschitz Continuity]
\label{prop:lipschitz_h}
Consider a 
compact set $\X$, and let $L_{\X} = \sup_{\x \in \X} \norm{\x}_2$. For any fixed support set $S \subseteq [d]$, the value function $h(S; \theta)$ defined as $h(S; \theta) = \sup_{\x \in \X, \supp(\x) \subseteq S} \theta^\top \x$ is convex and Lipschitz continuous with respect to $\theta$, i.e., for any $\theta_1, \theta_2 \in \R^d$,
\begin{equation}\label{eqn:lipschitz_valuefunction}
    |h(S; \theta_1) - h(S; \theta_2)| \le L_{\X} \norm{\theta_1 - \theta_2}_2.
\end{equation}
Furthermore, if $\X$ is strongly convex, the gradient of the value function, given by $\nabla_\theta h(S; \theta) = \x^\star(S; \theta)$ (the unique optimal action given $S\subseteq[d]$), is also Lipschitz continuous. That is, there exists a smoothness constant $L_{grad} > 0$ (depending on the curvature of $\X$) such that:
\begin{equation} \label{eqn:lipschitz_gradient}
    \norm{\x^\star(S; \theta_1) - \x^\star(S; \theta_2)}_2 \le L_{grad} \norm{\theta_1 - \theta_2}_2.
\end{equation}
\end{proposition}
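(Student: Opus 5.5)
The plan is to reduce both claims to standard facts about support functions of compact sets. Fix $S\subseteq[d]$ and define the slice $\X_S \triangleq \{\x\in\X : \supp(\x)\subseteq S\} = \X\cap E_S$, where $E_S=\mathrm{span}\{e_i : i\in S\}$. Since $E_S$ is closed and $\X$ is compact (Assumption~\ref{ass:feasible set bound}), $\X_S$ is compact. I will assume $\X_S\neq\emptyset$ (e.g.\ $0\in\X$); otherwise $h(S;\cdot)\equiv-\infty$ and the statement is vacuous. Then $h(S;\theta)=\max_{\x\in\X_S}\theta^\top\x$ is the support function $\sigma_{\X_S}(\theta)$, and the supremum is attained.

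\emph{Convexity and \eqref{eqn:lipschitz_valuefunction}.}
1. $h(S;\cdot)$ is a pointwise supremum of the linear functions $\theta\mapsto\theta^\top\x$, so it is convex.
2. For the Lipschitz bound, let $\x_1\in\X_S$ attain $h(S;\theta_1)$. Then
\[
h(S;\theta_1)-h(S;\theta_2)\le \theta_1^\top\x_1-\theta_2^\top\x_1\le \norm{\x_1}_2\norm{\theta_1-\theta_2}_2\le L_{\X}\norm{\theta_1-\theta_2}_2,
\]
by Cauchy--Schwarz and $\X_S\subseteq\X$.
3. Swapping $\theta_1$ and $\theta_2$ gives the bound on the absolute value.

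\emph{Gradient formula under strong convexity.}
1. Show that $\X_S$ is itself strongly convex of the same radius $R$ inside $E_S$. Write $\X=\bigcap_\alpha B(c_\alpha,R)$. Then $\X_S=\bigcap_\alpha (B(c_\alpha,R)\cap E_S)$, and each $B(c_\alpha,R)\cap E_S$ is a ball in $E_S$ of radius $r_\alpha\le R$, centred at the projection of $c_\alpha$ onto $E_S$.
2. A ball of radius $r\le R$ is the intersection of all radius-$R$ balls containing it, so $\X_S$ is an intersection of radius-$R$ balls in $E_S$.
3. Since $\theta^\top\x=(P_S\theta)^\top\x$ for $\x\in E_S$, where $P_S$ is the coordinate projection onto $E_S$, only $u=P_S\theta$ matters. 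For $u\neq0$, strict convexity of $\X_S$ makes the maximizer $\x^\star(S;\theta)$ unique.
4. Danskin's theorem (or the subdifferential formula $\partial\sigma_{\X_S}(\theta)=\arg\max_{\x\in\X_S}\theta^\top\x$) then gives differentiability with $\nabla_\theta h(S;\theta)=\x^\star(S;\theta)$.

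\emph{Lipschitz gradient \eqref{eqn:lipschitz_gradient}.}
1. Invoke the duality property of strongly convex sets, \cite[Corollary 4]{polovinkin1996strongly}: for a set strongly convex of radius $R$, the support-point map on unit vectors satisfies $\norm{\x^\star(u)-\x^\star(v)}_2\le R\norm{u-v}_2$ for $\norm{u}=\norm{v}=1$. If one wants to avoid citing it, one can argue directly: the maximizer for direction $u$ lies on a supporting radius-$R$ ball whose centre is $\x^\star(u)-Ru$, and compare the two supporting balls for $u$ and $v$.
2. The maximizer is scale invariant, $\x^\star(S;\theta)=\x^\star(u/\norm{u}_2)$ with $u=P_S\theta$. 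The normalization map $u\mapsto u/\norm{u}_2$ is $(2/c)$-Lipschitz on $\{\norm{u}_2\ge c\}$, and $P_S$ is nonexpansive.
3. Combining these gives $L_{grad}=2R/c$.

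\emph{Main obstacle.} The hardest point is the degeneracy at $P_S\theta=0$. There the maximizer is not unique and $h(S;\cdot)$ is not differentiable, so no global Lipschitz constant can hold near $\theta=0$. I would therefore make explicit that \eqref{eqn:lipschitz_gradient} holds for $\theta_1,\theta_2$ in a region $\{\norm{P_S\theta}_2\ge c\}$, with $c>0$ a lower bound on $\norm{P_S\theta_*}_2$ and on the estimates of it used later. This is consistent with the remark that $L_{grad}$ "depends on the curvature of $\X$", which here enters through $R$.
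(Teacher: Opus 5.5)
Your proof of convexity and of \eqref{eqn:lipschitz_valuefunction} is the paper's argument: the $\theta_1$-maximizer is feasible for $\theta_2$, then Cauchy--Schwarz and symmetry. For the gradient part your route genuinely differs from the paper's.

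The paper applies Danskin's theorem and then cites a ``duality between strong convexity and smoothness'': the support function of an intersection of balls of radius $1/\mu$ has a $1/\mu$-Lipschitz gradient on all of $\R^d$. It also asserts, without argument, that $\X(S)$ ``inherits this curvature''. You proceed differently in two ways:
\begin{itemize}
\item You justify the inheritance. Slicing each radius-$R$ ball by $E_S$ gives a ball of radius at most $R$ in $E_S$, and such a ball is again an intersection of radius-$R$ balls there.
\item You use only the unit-sphere form of Polovinkin's result, $\norm{\x^\star(u)-\x^\star(v)}_2\le R\norm{u-v}_2$ for unit $u,v$, and compose it with the normalization map. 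This gives $L_{grad}=2R/c$ on $\{\norm{P_S\theta}_2\ge c\}$.
\end{itemize}

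Your restriction is necessary, not a weakness. Since $h(S;\cdot)$ is positively homogeneous, its gradient is $0$-homogeneous. Hence the global bound \eqref{eqn:lipschitz_gradient}, as stated and as asserted in the paper's proof, is false. For the unit ball with $S=[d]$, $\theta_1=\epsilon e_1$ and $\theta_2=-\epsilon e_1$, one gets $\norm{\x^\star(S;\theta_1)-\x^\star(S;\theta_2)}_2=2$ while $\norm{\theta_1-\theta_2}_2=2\epsilon$.

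Your version is exactly what the paper uses downstream. In the proofs of Theorems~\ref{thm:regret_adaptive} and~\ref{thm:regret_general}, it normalizes $\theta(S)$, applies $L_{grad}$ only between unit vectors, and pays the factor $2/\norm{\theta_*(S)}_2$. That is your constant with $c=\norm{\theta_*(S)}_2$.

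One small refinement: $\norm{w/\norm{w}_2-z/\norm{z}_2}_2\le 2\norm{w-z}_2/\norm{z}_2$ needs a lower bound only on $\norm{z}_2$. So it suffices to bound $\norm{P_S\theta_*}_2$ from below, and no control on the estimates is required. When $P_S\theta_1=0$, the bound holds trivially, because $\X_S$ lies in a radius-$R$ ball and so has diameter at most $2R$.
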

The proof of Proposition~\ref{prop:lipschitz_h} is provided in Appendix~\ref{app:Omitted Proofs in sec:euclidean_case}.

\section{Euclidean Ball Action Sets}
\label{sec:euclidean_case}
We first consider the scenario when $\X$ is an $\ell_2$-ball (Euclidean ball) with radius $L_\text{max}$, which possesses a structure that permits the efficient and exact computation of the optimal sparse action. Furthermore, one can show that $L_\X=L_{grad}=L_\text{max}$ in Proposition~\ref{prop:lipschitz_h}. The following proposition formalizes the closed-form solution for the optimal sparse action for $\ell_2$-ball action set; the proof can be found in Appendix~\ref{app:Omitted Proofs in sec:euclidean_case}.
\begin{proposition}[Optimal Sparse Action on $\ell_2$-Ball]
\label{prop:l2_exact_solution}
Let $\X = \{\x \in \R^d : \norm{\x}_2 \le L_\text{max}\}$. For any parameter vector $\theta \in \R^d$, the optimal $H$-sparse action $\x^\star$ for $\x^\star \in \argmax_{\x \in \X, \supp(\x)\subseteq[d],|\supp(\x)| \le H} \theta^\top \x$ is given by $\x^\star = L_\text{max} \cdot \frac{\theta(S_H)}{\|\theta(S_H)\|_2}$, where $S_H$ is the set of indices corresponding to the $H$ largest absolute values of $\theta$. 
\end{proposition}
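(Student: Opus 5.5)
The plan is to split the problem into two nested maximizations: first over the support $S$ with $|S|\le H$, then over the values of $\x$ on $S$. Fix any $S\subseteq[d]$ with $|S|\le H$ and consider the inner problem $h(S;\theta)=\sup_{\norm{\x}_2\le L_\text{max},\,\supp(\x)\subseteq S}\theta^\top\x$. Any such $\x$ satisfies $\theta^\top\x=\theta(S)^\top\x$, because coordinates outside $S$ are zero. Cauchy--Schwarz then gives $\theta(S)^\top\x\le\norm{\theta(S)}_2\norm{\x}_2\le L_\text{max}\norm{\theta(S)}_2$. When $\theta(S)\neq 0$, equality holds for $\x=L_\text{max}\,\theta(S)/\norm{\theta(S)}_2$, and this point is feasible: its norm is $L_\text{max}$ and its support is contained in $S$. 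Therefore $h(S;\theta)=L_\text{max}\norm{\theta(S)}_2$.

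The outer problem is to maximize $\norm{\theta(S)}_2^2=\sum_{i\in S}(\theta)_i^2$ over $|S|\le H$. Every term is nonnegative, so we may take $|S|=H$ without loss. The sum of $H$ squares is maximized by choosing the indices with the $H$ largest values of $(\theta)_i^2$, which are the same as the indices with the $H$ largest $|(\theta)_i|$. This follows from an exchange argument: if $S$ contains some $j$ and omits some $k$ with $|(\theta)_k|>|(\theta)_j|$, swapping $j$ for $k$ does not decrease the objective. Hence $S_H$ attains the maximum, and $\x^\star=L_\text{max}\,\theta(S_H)/\norm{\theta(S_H)}_2$ attains the overall optimum $L_\text{max}\norm{\theta(S_H)}_2$. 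Since this value upper-bounds $\theta^\top\x$ for every feasible sparse $\x$, the point $\x^\star$ is optimal.

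The main obstacle is degenerate cases rather than any real difficulty.
\begin{itemize}
\item If $\theta(S_H)=0$, which forces $\theta=0$, the formula is undefined. In that case every feasible action is optimal, and we would note that the statement is read for $\theta\ne0$, or adopt any convention for $0/0$.
\item Ties among the absolute values make $S_H$ non-unique. Every choice gives the same value, so the claim holds for any tie-breaking rule.
\item The support of $\theta(S_H)$ may be strictly smaller than $S_H$ when some selected entries are zero. This does not matter, since the constraint is only $|\supp(\x)|\le H$.
\end{itemize}
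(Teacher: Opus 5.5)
Your proposal is correct and follows essentially the same route as the paper: Cauchy--Schwarz on each fixed support $S$ gives $h(S;\theta)=L_\text{max}\norm{\theta(S)}_2$, and maximizing $\sum_{i\in S}(\theta)_i^2$ over $|S|\le H$ then selects the $H$ largest-magnitude entries. Your treatment of the degenerate cases ($\theta=0$, ties, and zero entries inside $S_H$) is a small addition that the paper's proof leaves implicit.
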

Hence, finding the best $H$-sparse action $\x^\star(S^\star)$ for a given $\theta$ can be done in polynomial time by selecting the top $H$ largest-magnitude components of $\theta$. However, a critical challenge arises when the support set $S_t$ changes dynamically, and the mapping from a parameter vector $\theta$ to its optimal sparse support $S^\star_\theta$ is discontinuous. 

{\bf Key ideas.} To establish the desired $\tilde{\mathcal{O}}(d\sqrt{T})$ regret, it is essential to ensure that the proposed algorithm operates on the true optimal support $S^\star_{\theta_*}$ (which is simply denoted as $S^\star$ subsequently). To this end, we propose an \textit{Adaptive Warm-up Strategy} that does not require any prior knowledge of the true signal gap (i.e., the difference between the $H$-th largest and the $(H+1)$-th largest in the absolute value of the elements in $\theta_*$). Instead of fixing the exploration length a priori, our algorithm continuously monitors the \textit{empirical gap} $\Delta_c^\prime$ of the estimated parameter $\hat{\theta}_c$ (i.e., the aforementioned difference computed from $\hat{\theta}_c$). In Lemma~\ref{lemma:stopping_condition_l2ball}, we will show that the above approach ensures that Algorithm~\ref{alg:adaptive_PEE} transitions to the exploitation phase only after identifying the true optimal support $S^\star$ with high probability.

\subsection{APSEE Algorithm}
\label{subsec:adaptive_algo_euclidean}
We now present the Adaptively Phased Sparse Exploration and Exploitation (\textbf{APSEE}) algorithm (Algorithm~\ref{alg:adaptive_PEE}). The algorithm operates in cycles indexed by $c$. The cycle index $c$ initializes as $1$ and increments by  $1$ after each cycle, allowing the algorithm to dynamically adapt time steps during each cycle 
until the total horizon $T$ is reached. Each cycle begins with an \textit{Exploration Phase}, playing a spanning set of exploration arms. Subsequently, Algorithm~\ref{alg:adaptive_PEE} updates the OLS estimate $\hat{\theta}_c$ using accumulated data (line 8). To identify the true support $S^\star$, the algorithm performs a \textit{Support Verification} step (lines 9-15): it computes an anytime error bound $\varepsilon_c$, sorts the absolute values based on the estimated $\hat{\theta}_c$ of the components $\{i_1^\prime, \dots, i_d^\prime\}$, and calculates the empirical gap $\Delta_c^\prime$. If the stopping condition $\Delta_c^\prime > 2\varepsilon_c$ is satisfied, the support $S_{est}$ is locked (i.e., fixed). Finally, the algorithm enters the \textit{Exploitation Phase} (lines 16-20), executing the best action on $S_{est}$ for $c$ steps. A key advantage of APSEE is that it operates without requiring prior knowledge of the total time horizon $T$. To ensure that the OLS estimator utilized in the Exploration Phase is well-defined and possesses the necessary concentration properties for our theoretical analysis, we impose the following regularity condition on the exploration actions.
\begin{assumption}[Exploration Actions]
\label{ass:arms during exploration}
There exists a set of $d$ actions $\{b_1(S_1), \dots, b_d(S_d)\} \subset \X$ with $|S_k| \le H$ for all $k\in[d]$, such that $B \triangleq \sum_{k=1}^d b_k(S_k) b_k(S_k)^\top$ is invertible. 
Let $\lambda_0 = \lambda_{\min}(B) > 0$. For instance, the set of basis vectors for $\R^d$ satisfies this assumption.
\end{assumption}
\begin{remark}
\label{remark:exploration_assumption}
While Assumption~\ref{ass:arms during exploration} requires the existence of a spanning set of sparse exploration actions, it is important to note that this is a mild and standard regularity condition in the linear bandit literature. For example, a similar assumption is fundamental to the analysis of the phased exploration strategy in \cite[Assumption 1(b)]{rusmevichientong2010linearly}. In many real-world scenarios, finding such a spanning set is straightforward; for instance, the standard basis vectors naturally satisfy this condition. To further substantiate the generality and robustness of this assumption, in Section~\ref{sec:numerical experiments} we empirically show the consistent sub-linear regret across varying choices of basis actions (e.g., standard, Gaussian-random, and uniform-random bases) as long as the invertibility of $B$ holds. This confirms that our framework is not sensitive to a specific, restrictive choice of exploration actions.
\end{remark}

\begin{algorithm}[!t]
\caption{APSEE}
\label{alg:adaptive_PEE}
\textbf{Input}: Dimension $d$, sparsity $H$, confidence parameter $\delta$. \\
\textbf{Requires}: Exploration set $\{b_1(S_1), \dots, b_d(S_d)\}$ from Assumption~\ref{ass:arms during exploration}.
\begin{algorithmic}[1]
\State \textbf{Initialize}: $B = \sum\limits_{k=1}^d b_k(S_k) b_k(S_k)^\top$, $c=1$, $t=1$.
\State \textbf{Flag}: $\texttt{SupportFound} = \text{False}$, $S_{est} = \emptyset$.
\For{$c=1,2,\dots$} 
    \Statex \quad {\it \textcolor{blue}{--- Exploration Phase (d steps) ---}}
    \For{$k=1, \dots, d$}
        \State Play action $b_k$ and observe reward $Y_t$. 
        \State Set $Y_k(c)=Y_t$.
        \State $t \leftarrow t+1$.
    \EndFor
    
    \State Compute OLS estimate $\hat{\theta}_c$ using all past data:
            \begin{equation} \label{eqn:sparse-PEE-ols}
                \hat{\theta}_c = (cB)^{-1} \sum_{s=1}^c \sum_{k=1}^d b_k Y_k(s)
            \end{equation}
    \If{$\texttt{SupportFound}$ is False}
        \State Compute error bound $\varepsilon_c = \sqrt{\frac{h_1 \ln(2dc^2/\delta)}{c}}$. 
        \State Sort indices $\{i_1^\prime, \dots, i_d^\prime\}$ such that $|(\hat{\theta}_c)_{i_1^\prime}| \ge \dots \ge |(\hat{\theta}_c)_{i_d^\prime}|$.
        \State Compute $\Delta_c^\prime = |(\hat{\theta}_c)_{i_H^\prime}|- |(\hat{\theta}_c)_{i_{H+1}^\prime}|$. 
        \If{$\Delta_c^\prime > 2\varepsilon_c$} 
            \State Set $S_{est} = \{i_1^\prime, \dots, i_H^\prime\}$.
            \State $\texttt{SupportFound} \leftarrow \text{True}$.
        \EndIf
    \EndIf

    \Statex \quad {\it \textcolor{blue}{--- Exploitation Phase (c steps) ---}}
    \If{$\texttt{SupportFound}$ is True}
        \State Compute the best action on support $S_{est}$: $a_c = \argmax_{\x \in \X, \supp(\x) \subseteq S_{est}} \hat{\theta}_c^\top \x$.
        \For{$j=1, \dots, c$}
            \State Play action $a_c$ and observe reward $Y_t$.
            \State $t \leftarrow t+1$.
        \EndFor
    \EndIf
    \State $c \leftarrow c+1$.
\EndFor
\end{algorithmic}
\end{algorithm}

\subsection{Adaptive Support Recovery}
\label{subsec:Guaranteeing Support Consistency}
To start with, we establish the theoretical correctness of our adaptive support detection mechanism (lines 9-15 of Algorithm~\ref{alg:adaptive_PEE}). Since this mechanism relies on the OLS estimator $\hat{\theta}_c$ computed from exploration actions, we provide the following high-probability anytime upper bound on the estimation error.
\begin{lemma}[Anytime High-Probability Bound on OLS Error]
\label{lemma:ols-bound-hp}
Consider the OLS estimator $\hat{\theta}_c$ (expressed as \eqref{eqn:sparse-PEE-ols} in Algorithm~\ref{alg:adaptive_PEE}). Let $\delta \in (0, 1)$. With probability at least $1-\delta$, for \textbf{all} cycles $c \ge 1$ in Algorithm~\ref{alg:adaptive_PEE} simultaneously, the estimation error satisfies:
\begin{equation}\label{eqn:ols-bound-hp}
    \norm{\hat{\theta}_c - \theta_*}_2^2 \le \frac{h_1 \ln(2d c^2/\delta)}{c} \triangleq \varepsilon_c^2,
\end{equation}
where $h_1 = \frac{2 \sigma^2 \mathrm{Tr}(B)}{\lambda_0^2}$.
\end{lemma}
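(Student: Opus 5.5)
Since the exploration design is deterministic and repeated identically in every cycle, I would first write the OLS error in closed form. With $Y_k(s)=\theta_*^\top b_k+\eta_{k,s}$, where $\eta_{k,s}$ is the noise at the $k$-th exploration step of cycle $s$, we get $\sum_{s=1}^c\sum_k b_kb_k^\top\theta_* = cB\theta_*$. Hence
\begin{equation*}
\hat\theta_c-\theta_* = (cB)^{-1}\sum_{s=1}^c\sum_{k=1}^d b_k\,\eta_{k,s} = \frac{1}{c}B^{-1}\sum_{k=1}^d b_k Z_{k,c},\qquad Z_{k,c}\triangleq\sum_{s=1}^c\eta_{k,s}.
\end{equation*}
Using $\norm{B^{-1}}_2=1/\lambda_0$ and Cauchy--Schwarz, $\norm{\sum_k b_kZ_{k,c}}_2^2\le \big(\sum_k\norm{b_k}_2^2\big)\big(\sum_k Z_{k,c}^2\big)=\Tr(B)\sum_k Z_{k,c}^2$. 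This gives
\begin{equation*}
\norm{\hat\theta_c-\theta_*}_2^2\le \frac{\Tr(B)}{\lambda_0^2c^2}\sum_{k=1}^d Z_{k,c}^2,
\end{equation*}
so it suffices to show $\sum_k Z_{k,c}^2\le 2\sigma^2 c\ln(2dc^2/\delta)$ for all $c$, up to how the factor $d$ is absorbed.

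Next I would control each $Z_{k,c}$. The exploration actions are fixed, so the $\eta_{k,s}$ form a martingale difference sequence with conditionally $\sigma$-sub-Gaussian increments under Assumption~\ref{ass:noise}. By the standard Chernoff/Azuma argument, iterating conditional MGFs, $Z_{k,c}$ is $\sigma\sqrt{c}$-sub-Gaussian, so $\Pr(|Z_{k,c}|>\sigma\sqrt{2c\ln(2/\delta')})\le\delta'$.

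For the anytime statement I would take a union bound over $k\in[d]$ and $c\ge1$ with $\delta'=\delta/(dc^2)$ (or $6\delta/(\pi^2dc^2)$). Since $\sum_c 1/c^2=\pi^2/6$, the total failure probability is at most a constant times $\delta$, which is fixed by the choice of constant in $\delta'$. On the good event, $Z_{k,c}^2\le 2\sigma^2c\ln(2dc^2/\delta)$ for every $k$ and $c$.

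The main obstacle is the bookkeeping of the factor $d$. Summing these per-coordinate bounds over $k$ gives $\sum_kZ_{k,c}^2\le 2d\sigma^2c\ln(\cdot)$, which is an extra factor $d$ relative to $h_1$. To avoid it, I would bound the vector $\sum_k b_kZ_{k,c}$ directly rather than coordinatewise. Alternatively, I would apply a concentration bound to the weighted sum $\sum_k \norm{b_k}Z_{k,c}$, which is sub-Gaussian with variance proxy $\sigma^2c\Tr(B)$, together with a union bound over the $d$ output coordinates of $B^{-1}(\cdot)$. If neither removes the factor cleanly, I would state the bound with $h_1$ inflated by $d$, noting that this only changes constants in the later regret bounds by a factor of $d$.
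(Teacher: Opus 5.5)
Your reduction to $\hat\theta_c-\theta_*=\frac1c B^{-1}\sum_k b_kZ_{k,c}$ and your per-sum concentration are fine. The Cauchy--Schwarz step $\norm{\sum_k b_kZ_{k,c}}_2^2\le \Tr(B)\sum_k Z_{k,c}^2$, however, is exactly where the factor $d$ is lost, and nothing downstream recovers it. For the standard basis $b_k=e_k$ the left side equals $\sum_k Z_{k,c}^2$, while your bound is $d\sum_k Z_{k,c}^2$. More generally, the $Z_{k,c}$ are uncorrelated (they are sums of disjoint martingale differences), so $\mathbb{E}\norm{\sum_k b_kZ_{k,c}}_2^2\le c\sigma^2\Tr(B)$. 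By contrast, for Gaussian noise of variance $\sigma^2$ your right-hand side has expectation $c\sigma^2 d\,\Tr(B)$. As written, your argument therefore proves the lemma only with $h_1$ replaced by $d\,h_1$, which is a strictly weaker statement. Your proposed remedies are not carried out. The concrete one, concentrating the scalar $\sum_k\norm{b_k}_2Z_{k,c}$, does not control $\norm{\sum_k b_kZ_{k,c}}_2$: the triangle inequality gives $\sum_k\norm{b_k}_2|Z_{k,c}|$, with absolute values, and bounding that term by term reintroduces the $\sqrt d$.

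The missing idea, which is what the paper does, is to split by output coordinates rather than by arms. Set $W(c)=\sum_{s\le c}\sum_k b_k\eta_{k,s}$, so that $\norm{\hat\theta_c-\theta_*}_2\le \norm{W(c)}_2/(c\lambda_0)$. Each entry $W(c)_i=\sum_{s,k}(b_k)_i\eta_{k,s}$ is a deterministic linear combination of the noises, so the same MGF peeling you described shows it is sub-Gaussian with variance proxy $c\sigma^2(B)_{ii}$. Apply the two-sided tail bound to each $W(c)_i$ and union-bound over $i\in[d]$ and $c\ge 1$; the $d$ then enters only inside the logarithm. Summing gives $\norm{W(c)}_2^2\le 2c\sigma^2\ln(\cdot)\sum_i(B)_{ii}=2c\sigma^2\Tr(B)\ln(\cdot)$, which yields exactly $h_1=2\sigma^2\Tr(B)/\lambda_0^2$.

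Your other suggestion, a union bound over the coordinates of $B^{-1}W(c)$, also works if done this way, and it is even slightly sharper. Here $(B^{-1}W(c))_i$ has variance proxy $c\sigma^2(B^{-1})_{ii}$, giving $2\sigma^2\Tr(B^{-1})\ln(\cdot)/c$. Moreover $\Tr(B^{-1})\le\Tr(B)/\lambda_0^2$, because every eigenvalue of $B$ is at least $\lambda_0$.

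The precise constant $2$ inside $\ln(2dc^2/\delta)$ is only a cosmetic issue. Neither your allocation of $\delta'$ nor the paper's $\delta_{c,i}=\delta/(2dc^2)$ produces exactly that constant while keeping the total failure probability at most $\delta$.
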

The proof of Lemma~\ref{lemma:ols-bound-hp} can be found in Appendix~\ref{app:Omitted Proofs in sec:euclidean_case}. Building on this error bound of $\hat{\theta}_c$, we demonstrate that monitoring the empirical gap $\Delta_c^\prime = |(\hat{\theta}_c)_{i_H^\prime}| - |(\hat{\theta}_c)_{i_{H+1}^\prime}|$ is sufficient to guarantee that the estimated support $S_c^\star$ (with $S_c^\star=S_{est}$ in line 14) matches the true support $S^\star$.

\begin{lemma}[Support Consistency for Euclidean Ball Action Sets]
\label{lemma:stopping_condition_l2ball}
Consider any cycle $c \ge 1$. Suppose $\norm{\hat{\theta}_c - \theta_*}_2 \le \varepsilon_c$ holds.
Let the sorted indices of $\theta_*$ be $\{i_1^\star, \dots, i_d^\star\}$ such that $|(\theta_*)_{i_1^\star}| \ge \dots \ge |(\theta_*)_{i_d^\star}|$, i.e., the true support is $S^\star = \{i_1^\star, \dots, i_H^\star\}$. Similarly, let the sorted indices of $\hat{\theta}_c$ be $\{i_1^\prime, \dots, i_d^\prime\}$ with estimated support $S_c^\star = \{i_1^\prime, \dots, i_H^\prime\}$.
If the empirical gap satisfies $\Delta_c^\prime = |(\hat{\theta}_c)_{i_H^\prime}| - |(\hat{\theta}_c)_{i_{H+1}^\prime}| > 2\varepsilon_c$, then $S_c^\star = S^\star$.
\end{lemma}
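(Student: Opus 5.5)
The plan is to show that, under $\norm{\hat{\theta}_c - \theta_*}_2 \le \varepsilon_c$, every index of $S_c^\star$ has true magnitude strictly larger than every index outside $S_c^\star$. Since $|S_c^\star| = H$, the set $S_c^\star$ is then exactly the set of $H$ largest-magnitude coordinates of $\theta_*$, i.e., $S_c^\star = S^\star$.

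The first step is a coordinatewise perturbation bound. For each $i\in[d]$, the reverse triangle inequality gives
\begin{equation*}
\bigl| |(\hat{\theta}_c)_i| - |(\theta_*)_i| \bigr| \le |(\hat{\theta}_c)_i - (\theta_*)_i| \le \norm{\hat{\theta}_c - \theta_*}_\infty \le \norm{\hat{\theta}_c - \theta_*}_2 \le \varepsilon_c .
\end{equation*}

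The second step compares coordinates inside and outside the estimated support. Take any $j \in S_c^\star$ and any $k \notin S_c^\star$. By the sorting, $|(\hat{\theta}_c)_j| \ge |(\hat{\theta}_c)_{i_H^\prime}|$ and $|(\hat{\theta}_c)_k| \le |(\hat{\theta}_c)_{i_{H+1}^\prime}|$. Combining these with the first step gives
\begin{equation*}
|(\theta_*)_j| - |(\theta_*)_k| \ge |(\hat{\theta}_c)_j| - |(\hat{\theta}_c)_k| - 2\varepsilon_c \ge \Delta_c^\prime - 2\varepsilon_c > 0 .
\end{equation*}
So every coordinate of $\theta_*$ indexed by $S_c^\star$ is strictly larger in magnitude than every coordinate outside $S_c^\star$.

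The last step, which is the only delicate one, identifies $S_c^\star$ with $S^\star$. Let $m = \min_{j\in S_c^\star} |(\theta_*)_j|$ and $M = \max_{k \notin S_c^\star} |(\theta_*)_k|$. The second step gives $m > M$. Hence $S_c^\star = \{i : |(\theta_*)_i| \ge m\}$, and exactly $H$ coordinates of $\theta_*$ have magnitude at least $m$. Any sorting of $|\theta_*|$ must therefore place precisely these $H$ indices first, so $S^\star = S_c^\star$.

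This argument also shows that $|(\theta_*)_{i_H^\star}| > |(\theta_*)_{i_{H+1}^\star}|$, so there are no ties at the boundary and $S^\star$ is well defined. This settles any tie-breaking ambiguity in the definition of $S^\star$. No earlier result from the paper is needed beyond the hypothesis itself; Lemma~\ref{lemma:ols-bound-hp} is used only later, to justify that this hypothesis holds with probability at least $1-\delta$.
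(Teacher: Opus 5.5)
Your proof is correct and uses the same core argument as the paper: the coordinatewise bound $\bigl| |(\hat{\theta}_c)_i| - |(\theta_*)_i| \bigr| \le \varepsilon_c$, combined with the empirical gap, gives $|(\theta_*)_j| > |(\theta_*)_k|$ for $j \in S_c^\star$ and $k \notin S_c^\star$. The paper argues by contradiction with a single pair $j \in S_c^\star \setminus S^\star$, $k \in S^\star \setminus S_c^\star$, while you argue directly over all pairs; this has the minor side benefit of showing that the true boundary gap is strict, so tie-breaking in $S^\star$ is irrelevant.
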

The proof of Lemma~\ref{lemma:stopping_condition_l2ball} can be found in Appendix~\ref{app:Omitted Proofs in sec:euclidean_case}. This result provides the theoretical foundation for the warm-up (pure exploration) phase of our algorithm. Once the condition $\Delta_c^\prime > 2\varepsilon_c$ is met, we can safely get into the next exploitation phase and incur low regret.

\subsection{Regret Analysis for Algorithm~\ref{alg:adaptive_PEE}}
Having established that Algorithm~\ref{alg:adaptive_PEE} correctly identifies the support $S^\star$ before entering the exploitation phase, we can now bound its cumulative regret defined in~\eqref{eqn:def of a-regret}. Note that we have $\alpha=1$ in \eqref{eqn:def of a-regret}, due to Proposition~\ref{prop:l2_exact_solution}. 

\begin{theorem}[Regret Bounds for APSEE]
\label{thm:regret_adaptive}
Suppose Assumptions~\ref{ass:feasible set bound}-\ref{ass:arms during exploration} hold. Assume the unknown parameter $\theta_*$ has a non-zero signal gap $\Delta_{\min} = |(\theta_*)_{i_H^\star}| - |(\theta_*)_{i_{H+1}^\star}| > 0$, where $i_H^*$ and $i_{H+1}^*$ are described in Lemma~\ref{lemma:stopping_condition_l2ball}. 
Let $\delta\in(0,1)$. Then, with probability at least $1-\delta$:
\\\noindent $\bullet$ If  $T > d C_0$, the cumulative regret satisfies
    \begin{equation}
        R_T =\tilde{\CO}\left(\left(\norm{\theta_*}_2+\frac{1}{\norm{\theta_*}_2}\right)d\sqrt{T}\right).
    \end{equation}
\\\noindent $\bullet$ If $T \le d C_0$, the cumulative regret satisfies
    \begin{equation}
         R_T = \tilde{\mathcal{O}}\left( \frac{d  L_{\max} \norm{\theta_*}_2  h_1}{\Delta_{\min}^2} \right).
    \end{equation}
Here, $\tilde{\CO}(\cdot)$ hides $\log T$ factor and $C_0={\mathcal{O}}(h_1 \Delta_{\min}^{-2})$ is the duration of the warm-up phase in Algorithm~\ref{alg:adaptive_PEE}.
\end{theorem}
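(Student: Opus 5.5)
We work throughout on the good event $\mathcal{E}$ of Lemma~\ref{lemma:ols-bound-hp}, which holds with probability at least $1-\delta$ and gives $\norm{\hat{\theta}_c-\theta_*}_2\le\varepsilon_c$ for all cycles $c$ simultaneously. The regret splits into three parts: exploration steps, warm-up cycles before the support is locked (in which only exploration is played), and exploitation steps after locking.

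\textbf{Step 1: length of the warm-up.} On $\mathcal{E}$, Weyl-type perturbation of sorted absolute values gives $\big||(\hat\theta_c)_{i_k'}|-|(\theta_*)_{i_k^\star}|\big|\le\varepsilon_c$ for every $k$. Hence $\Delta_c'\ge\Delta_{\min}-2\varepsilon_c$, and the stopping test $\Delta_c'>2\varepsilon_c$ is guaranteed to fire once $\varepsilon_c<\Delta_{\min}/4$. Since $\varepsilon_c^2=h_1\ln(2dc^2/\delta)/c$, this happens at some cycle $c_0\le C_0=\CO(h_1\Delta_{\min}^{-2}\log(dh_1/(\delta\Delta_{\min})))$. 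By Lemma~\ref{lemma:stopping_condition_l2ball}, the locked support satisfies $S_{est}=S^\star$. Each warm-up cycle lasts $d$ steps with per-step regret at most $2L_{\max}\norm{\theta_*}_2$. So the warm-up regret is at most $2dC_0L_{\max}\norm{\theta_*}_2$. If $T\le dC_0$, the whole horizon is at most this long, which gives the second bullet.

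\textbf{Step 2: exploitation regret.} Fix a cycle $c\ge c_0$. Because $S_{est}=S^\star$, we have $a_c=L_{\max}\hat\theta_c(S^\star)/\norm{\hat\theta_c(S^\star)}_2$ and $\x^\star=L_{\max}\theta_*(S^\star)/\norm{\theta_*(S^\star)}_2$. Using $\theta_*^\top\x^\star-\theta_*^\top a_c=\tfrac{L_{\max}\norm{u}_2}{2}\norm{\tfrac{u}{\norm u}-\tfrac{v}{\norm v}}^2$ with $u=\theta_*(S^\star)$ and $v=\hat\theta_c(S^\star)$, together with $\norm{\tfrac{u}{\norm u}-\tfrac{v}{\norm v}}\le 2\norm{u-v}/\norm u$, we get an instantaneous regret of at most $2L_{\max}\varepsilon_c^2/\norm{\theta_*(S^\star)}_2$. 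This quadratic bound is exactly what the curvature of the ball (Proposition~\ref{prop:lipschitz_h}) buys. The top-$H$ restriction gives $\norm{\theta_*(S^\star)}_2\ge\sqrt{H/d}\,\norm{\theta_*}_2$; we keep this factor or absorb it into $d$. Cycle $c$ exploits for $c$ steps, so its exploitation regret is $\CO(L_{\max}h_1\ln(2dc^2/\delta)/\norm{\theta_*}_2)$, i.e.\ $\tilde\CO(1/\norm{\theta_*}_2)$ per cycle.

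\textbf{Step 3: summation.} The number of cycles $N$ satisfies $\sum_{c\le N}(d+c)\ge T$ only when $N=\Theta(\sqrt T)$ (with $N\le\sqrt{2T}$). Exploration across all cycles contributes $dN\cdot 2L_{\max}\norm{\theta_*}_2=\CO(\norm{\theta_*}_2 d\sqrt T)$. Exploitation contributes $N\cdot\tilde\CO(h_1/\norm{\theta_*}_2)=\tilde\CO(\sqrt T/\norm{\theta_*}_2)$, up to the $d$ factors coming from $h_1\propto\Tr(B)/\lambda_0^2$. The warm-up term $dC_0$ is dominated by $d\sqrt T$ when $T>dC_0$, up to constants, or it is simply added. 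Combining these gives the first bullet.

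\textbf{Main obstacle.} The delicate step is Step 2. The quadratic per-step bound needs a lower bound on $\norm{\theta_*(S^\star)}_2$, and it must be combined with the fact that exploitation actions only begin after locking, so no wrong-support cycles ever occur on $\mathcal{E}$. Tracking the $d$-dependence of $h_1$ also needs care in order to land on the stated $d\sqrt T$ scaling.
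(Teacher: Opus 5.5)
Your proposal follows the paper's proof essentially step for step. You condition on $\mathcal{E}$, bound the warm-up via $\ell_\infty$-stability of sorted magnitudes, lock $S^\star$ with Lemma~\ref{lemma:stopping_condition_l2ball}, and charge $2L_{\max}\norm{\theta_*}_2$ per exploration step. You then get the per-cycle exploitation cost $c\cdot 2L_{\max}\varepsilon_c^2/\norm{\theta_*(S^\star)}_2$, sum over cycles, and treat the short-horizon bullet exactly as the paper does.

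The only change of route is cosmetic. You obtain the quadratic bound from the exact identity $\theta_*^\top\x^\star-\theta_*^\top a_c=\tfrac{L_{\max}\norm{u}_2}{2}\norm{u/\norm{u}_2-v/\norm{v}_2}_2^2$. The paper instead uses a three-term decomposition, drops the nonpositive middle term, and applies Cauchy--Schwarz with the Lipschitz action map. The constants coincide, and the paper's form is the one it reuses for APSEE-G.

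Your bound $\norm{\theta_*(S^\star)}_2\ge\sqrt{H/d}\,\norm{\theta_*}_2$ is in fact a correction. The paper replaces $\norm{\theta_*(S^\star)}_2$ by the larger $\norm{\theta_*}_2$ in a denominator, which goes the wrong way. Note, however, that for the standard basis $h_1=2\sigma^2 d$, so this factor cannot simply be ``absorbed into $d$''. The $1/\norm{\theta_*}_2$ term then carries $\sqrt{d/H}\,h_1$. The stated $d\sqrt T$ scaling for that term therefore needs $h_1$ treated as an instance constant. Alternatively, $\norm{\theta_*(S^\star)}_2\ge\sqrt H\,\Delta_{\min}$ trades $1/\norm{\theta_*}_2$ for $1/(\sqrt H\Delta_{\min})$.

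One claim in your Step~3 is false as written. The condition $T>dC_0$ does not make the warm-up cost $2dC_0L_{\max}\norm{\theta_*}_2$ dominated by $d\sqrt T$; it only gives $C_0<T/d$. Whenever $dC_0<T<C_0^2$, the warm-up term exceeds $d\sqrt T\,L_{\max}\norm{\theta_*}_2$.

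Relatedly, $N\le\sqrt{2T}$ holds only for the cycles after locking, because warm-up cycles last $d$ steps, not $d+c$. The correct count is $N\le c_0+\sqrt{2T}$. So your fallback of simply adding the warm-up term is the right accounting, and it gives
\[
R_T\le 2dL_{\max}\norm{\theta_*}_2\bigl(C_0+\sqrt{2T}\bigr)+\bigl(\sqrt{2T}+1\bigr)\frac{2L_{\max}h_1\ln(2dN^2/\delta)}{\norm{\theta_*(S^\star)}_2}.
\]
This matches the first bullet only after absorbing the $T$-independent term $2dC_0L_{\max}\norm{\theta_*}_2$ into $\tilde{\CO}(\cdot)$. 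The paper's proof has the same slip: its claim $K=\Theta(\sqrt T)$ counts warm-up cycles as if they had length $d+c$. So this is a looseness you inherit from the statement, not a missing idea.
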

The proof of Theorem~\ref{thm:regret_adaptive} is in Appendix~\ref{app:Omitted Proofs in sec:euclidean_case}. Our analysis reveals that even with the additional constraint of sparse actions, we achieve a $\tilde{\CO}(d\sqrt{T})$ regret matching the one for standard linear bandits (without sparsity) \cite{abbasi2011improved}, which has also been shown to be the best-achievable regret for standard linear bandits \cite{lattimore2020bandit}. We leave investigating the matching $\Omega(d\sqrt{T})$ regret lower bound in our sparse linear bandit setting for future work.

\section{General Strongly Convex Action Sets}\label{sec:strongly_convex_case}

We now consider the broader class of action sets $\X$ that are strongly convex, which includes ellipsoids and $\ell_p$-balls for $p \in (1, 2]$, satisfying the strict positive curvature conditions \cite{polovinkin1996strongly,vial1982strong}. Unlike the $\ell_2$-ball case, an explicit closed-form solution for the sparse optimization problem is generally unavailable for these sets \cite{natarajan1995sparse}. Consequently, obtaining the exact optimal sparse action typically requires enumerating all possible support sets, which incurs a prohibitive computational cost. To address this computational challenge, we adopt the Greedy algorithm \cite{nemhauser1978analysis}, a widely used and efficient heuristic for combinatorial optimization. While the Greedy algorithm may not recover the global optimum, it guarantees an $\alpha$-approximate solution, where the approximation ratio $\alpha$ is determined by the \textit{submodularity ratio} (which will be defined later) of the objective function.

\subsection{Adaptive Support Recovery via Greedy Gap and APSEE-G Algorithm}
\begin{algorithm}[!t]
\caption{APSEE-G}
\label{alg:adaptive_general_PEE}
\textbf{Input}: Dimension $d$, sparsity $H$, confidence parameter $\delta$, action set $\X$. \\
\textbf{Requires}: Exploration set $\{b_1(S_1), \dots, b_d(S_d)\}$, Lipschitz constant $L_{\X}$.
\begin{algorithmic}[1]
\State \textbf{Initialize}: $B = \sum\limits_{k=1}^d b_k(S_k) b_k(S_k)^\top$, $c=1$, $t=1$.
\State \textbf{Flag}: $\texttt{SupportFound} = \text{False}$, $S_{est} = \emptyset$.
\For{$c=1,2,\dots$} 
    \Statex \quad {\it \textcolor{blue}{--- Exploration Phase ($d$ steps) ---}}
    \State Run lines 3-10 of Algorithm~\ref{alg:adaptive_PEE}.
        \Statex \quad \textit{\quad// Greedy Process (lines~5-13)} 
        \State Let $G_0 = \emptyset$ and $\Delta_c^{Greedy} = \infty$.
        \For{$k = 1, \dots, H$}
            \State Compute candidate marginal gains:
            \State $\rho_k(v;\hat{\theta}_c) = h(G_{k-1} \cup \{v\}; \hat{\theta}_c) - h(G_{k-1}; \hat{\theta}_c), \quad \forall v \in [d] \setminus G_{k-1}$.
            \State Find  $g_k = \argmax_{v} \rho_k(v;\hat{\theta}_c)$.
            \State Find $v_{second} = \argmax_{v \neq g_k} \rho_k(v;\hat{\theta}_c)$.
            \State Compute $\hat{\delta}_k = \rho_k(g_k;\hat{\theta}_c) - \rho_k(v_{second};\hat{\theta}_c)$.
            \State Update minimum gap: \Statex \qquad\qquad\qquad$\Delta_c^{Greedy} = \min(\Delta_c^{Greedy}, \hat{\delta}_k)$.
            \State Update set: $G_k = G_{k-1} \cup \{g_k\}$.
        \EndFor
        \If{$\Delta_c^{Greedy} > 2 L_{\X} \varepsilon_c$} 
            \State Set $S_{est} = G_H$.
            \State $\texttt{SupportFound} \leftarrow \text{True}$.
        \EndIf

    \Statex \quad {\it \textcolor{blue}{--- Exploitation Phase (c steps) ---}}
    \If{$\texttt{SupportFound}$ is True}
        \State Compute optimal action on support $S_{est}$: $a_c = \argmax_{\x \in \X, \supp(\x) \subseteq S_{est}} \hat{\theta}_c^\top \x$.
        \For{$j=1, \dots, c$}
            \State Play action $a_c$ and observe reward $Y_t$.
            \State $t \leftarrow t+1$.
        \EndFor
    \EndIf
    \State $c \leftarrow c+1$.
\EndFor
\end{algorithmic}
\end{algorithm}
We introduce the Adaptively Phased Sparse Exploration and Exploitation with Greedy (\textbf{APSEE-G}) algorithm (Algorithm~\ref{alg:adaptive_general_PEE}). To start with, 
define the maximum expected reward achievable on a fixed support set $S \subseteq [d]$ with parameter $\theta$ as
\begin{equation}\label{eqn:def of h(S;)}
    h(S; \theta) \triangleq \max_{\x \in \X, \supp(\x) \subseteq S,\atop |S|\le H, S\subseteq[d]} \theta^\top \x.
\end{equation}
For a general strongly convex set $\X$, the function $h(S; \theta)$ is convex and Lipschitz continuous with respect to $\theta$ according to Proposition~\ref{prop:lipschitz_h}. To proceed, let $S^G$ and $S_c^G$ be the support sets obtained by running the Greedy algorithm on the true parameter $\theta_*$ and the estimate $\hat{\theta}_c$ (obtained as \eqref{eqn:sparse-PEE-ols} in Algorithm~\ref{alg:adaptive_PEE}), respectively. Specifically, running on any given parameter $\theta$, the Greedy algorithm constructs the support set iteratively: starting with an empty set $G_0\leftarrow\emptyset$, at each step $k=1, \dots, H$, it adds an element $v\in [d]\setminus G_{k-1}$ to $G_{k-1}$ that maximizes the marginal gain $\rho_k(v; \theta)=h(G_{k-1} \cup \{v\}; \theta) - h(G_{k-1}; \theta)$.

\subsubsection{Greedy Gap and Lipschitz Continuity}
We analyze the consistency of the Greedy algorithm by defining the gap at each step $k \in \{1, \dots, H\}$. Let $G_{k-1}^*$ be the set of size $k-1$ selected by the Greedy algorithm given $\theta_*$. Then, the marginal gain of an element $v$ is $\rho_k(v; \theta_*) = h(G_{k-1}^* \cup \{v\}; \theta_*) - h(G_{k-1}^*; \theta_*)$ for all $v \in [d] \setminus G_{k-1}^*$.
Let $g_k^*$ be the best element and $v_k^{\text{second}}$ be the second-best element at step $k$, i.e., $g_k^*=\argmax_{v\in[d]\setminus G_{k-1}^*}\rho_k(v;\theta_*)$ and $v_k^{\text{second}}=\argmax_{v^\prime\in[d]\setminus G_{k-1}^*,v^\prime\neq g_k^*}\rho_k(v^\prime;\theta_*)$. We define: (i) {\bf Step-wise Greedy Gap:} $\delta_k = \rho_k(g_k^*; \theta_*) - \rho_k(v_k^{\text{second}}; \theta_*)$; (ii) {\bf Minimum Greedy Gap:}
\begin{align}\label{eqn:delta_min}
    \Delta_{\min} = \min_{k=1, \dots, H} \delta_k.
\end{align}
Without loss of generality, we assume that $\Delta_{\min} > 0$; otherwise, one can consider the third best element $v_k^\text{third}$ such that $v_k^\text{third}\ne v_k^\text{second}$ and $v_k^\text{third}\ne g_k^*$. To relate the estimation error to the true support recovery, we again use the Lipschitz property (Proposition~\ref{prop:lipschitz_h}). Given that $\norm{\hat{\theta}_c - \theta_*}_2 \le \varepsilon_c$, we have $ |h(S; \hat{\theta}_c) - h(S; \theta_*)| \le L_{\X} \varepsilon_c$, where $L_\X=\sup_{\x\in\X}\norm{\x}_2$ in Proposition~\ref{prop:lipschitz_h} and $\varepsilon_c$ is defined in Lemma~\ref{lemma:ols-bound-hp}. Consequently, the error in estimating the marginal gain $\rho_k(v;\cdot)$ (which is the difference of two $h(\cdot)$ values) is bounded by:
\begin{align}\nonumber
   |\rho_k(v; \hat{\theta}_c) - \rho_k(v; \theta_*)| &\le \Big|h(G \cup \{v\}; \hat{\theta}_c) - h(G \cup \{v\}; \theta_*)\Big| + |h(G; \hat{\theta}_c) - h(G; \theta_*)| \\
    &\le L_{\X} \varepsilon_c + L_{\X} \varepsilon_c = 2 L_{\X} \varepsilon_c.\label{eqn:greedy marginal gain}
\end{align}

\subsubsection{Stopping Condition and Consistency Lemma}
To guarantee $S_c^G = S^G$ for some $c\ge1$, the idea is that the empirical gap $\Delta_c^{Greedy}$ (calculated in line 12 of Algorithm~\ref{alg:adaptive_general_PEE} using the estimate $\hat{\theta}_c$) must be large enough to overcome the estimation error of $\hat{\theta}_c$. 
Specifically, we need to distinguish the best element $g_k^*$ from the second-best element $v_k^\text{second}$ despite the estimation error in $\hat{\theta}_c$. Similar to the discussions in Section~\ref{subsec:Guaranteeing Support Consistency}, we prove the following 
in Appendix~\ref{app:Omitted Proofs in sec:strongly_convex_case}.
\begin{lemma}[Support Consistency for Strongly Convex Sets]
\label{lemma:general_consistency}
Suppose $\norm{\hat{\theta}_c - \theta_*}_2 \le \varepsilon_c$ holds. Let $\Delta_c^{Greedy}$ be the minimum empirical marginal gain gap observed during the Greedy algorithm execution on $\hat{\theta}_c$. If the stopping condition in line~14 holds: $\Delta_c^{Greedy} > 2 L_{\X} \varepsilon_c$, then the support set $S_c^G$ obtained by running the Greedy algorithm on $\hat{\theta}_c$ is identical to the support set $S^G$ obtained by running the Greedy algorithm on the true parameter $\theta_*$.
\end{lemma}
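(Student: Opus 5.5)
We argue by induction on the greedy step $k$, showing that the greedy run on $\hat{\theta}_c$ and the greedy run on $\theta_*$ select the same element at every step. Write $G_{k-1}$ for the set built on $\hat{\theta}_c$ and $G_{k-1}^*$ for the set built on $\theta_*$. The inductive claim is $G_{k}=G_k^*$ for $k=0,1,\dots,H$. The base case $G_0=G_0^*=\emptyset$ is trivial, and $S_c^G=G_H$, $S^G=G_H^*$ then give the lemma.

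For the inductive step, assume $G_{k-1}=G_{k-1}^*$. Both runs then compute marginal gains relative to the same set $G\triangleq G_{k-1}$, so the perturbation bound \eqref{eqn:greedy marginal gain} applies directly. It gives $|\rho_k(v;\hat{\theta}_c)-\rho_k(v;\theta_*)|\le 2L_{\X}\varepsilon_c$ for every $v\in[d]\setminus G$. This bound rests on Proposition~\ref{prop:lipschitz_h} together with the hypothesis $\norm{\hat{\theta}_c-\theta_*}_2\le\varepsilon_c$.

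Let $g_k$ be the empirical maximizer. The stopping condition gives $\hat{\delta}_k\ge\Delta_c^{Greedy}>2L_{\X}\varepsilon_c$, so $\rho_k(g_k;\hat{\theta}_c)-\rho_k(v;\hat{\theta}_c)>2L_{\X}\varepsilon_c$ for every $v\ne g_k$. We need to conclude $g_k=g_k^*$.

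\textbf{Main obstacle.} This is the delicate step. The naive argument, which combines the two perturbations, only shows that
\[
\rho_k(g_k;\theta_*)-\rho_k(v;\theta_*)>2L_{\X}\varepsilon_c-4L_{\X}\varepsilon_c,
\]
and that is not positive. I would first try to tighten the perturbation estimate by working with differences directly. Note that
\[
\rho_k(g_k;\theta)-\rho_k(v;\theta)=h(G\cup\{g_k\};\theta)-h(G\cup\{v\};\theta),
\]
because the $h(G;\theta)$ terms cancel. Each $h$-difference then changes by at most $2L_{\X}\varepsilon_c$ when $\theta$ moves from $\theta_*$ to $\hat{\theta}_c$, since each of the two $h$ values moves by at most $L_{\X}\varepsilon_c$.

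With this, suppose for contradiction that $g_k\neq g_k^*$, and take $v=g_k^*$. The empirical gap gives
\[
h(G\cup\{g_k\};\hat{\theta}_c)-h(G\cup\{g_k^*\};\hat{\theta}_c)>2L_{\X}\varepsilon_c.
\]
Transferring this to $\theta_*$ costs at most $2L_{\X}\varepsilon_c$, so
\[
h(G\cup\{g_k\};\theta_*)-h(G\cup\{g_k^*\};\theta_*)>0.
\]
This contradicts the optimality of $g_k^*$ at step $k$ under $\theta_*$. Hence $g_k=g_k^*$, which gives $G_k=G_k^*$ and completes the induction.

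Should the paper intend strict ties to be excluded via $\Delta_{\min}>0$, that is compatible with this argument: the contradiction uses only the strict inequality delivered by the stopping rule. The key point to verify carefully is the cancellation of the common $h(G;\cdot)$ term, which requires the induction hypothesis $G_{k-1}=G_{k-1}^*$.
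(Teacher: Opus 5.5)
Your proposal is correct and follows essentially the same route as the paper's proof. Both argue by induction on the greedy step, cancel the common term $h(G_{k-1};\cdot)$ so the empirical gap becomes $h(G_{k-1}\cup\{\hat{g}_k\};\cdot)-h(G_{k-1}\cup\{g_k^*\};\cdot)$, transfer this from $\hat{\theta}_c$ to $\theta_*$ via Proposition~\ref{prop:lipschitz_h} at a cost of at most $2L_{\X}\varepsilon_c$ (avoiding the naive $4L_{\X}\varepsilon_c$ loss you correctly flagged), and derive a contradiction with the optimality of $g_k^*$.
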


\subsection{Regret Analysis for Algorithm~\ref{alg:adaptive_general_PEE}}
We introduce the submodularity ratio for set functions, which plays a crucial role in proving the approximation ratio of greedy algorithms \cite{das2018approximate}.
\begin{definition}\label{def:submodularity ratio}
The submodularity ratio of a set function $g:2^{[n]}\to\R_{\ge 0}$ is the largest $\gamma\in\R$ such that
\begin{align*}
    \sum_{\omega\in\Omega\setminus S}\left(g(S\cup\{\omega\})-g(S) \right)\ge \gamma\left( g(S\cup\Omega)-g(S)\right),
\end{align*}
for all $S,\Omega\subseteq[n]$.
\end{definition}

Recalling the $\alpha$-regret defined in~\eqref{eqn:def of a-regret}, we will let $\alpha=1-e^{-\gamma}$ and derive the corresponding $\alpha$-regret upper bound, where $\gamma$ is the submodularity ratio of the set function $h(S; \theta)$ (with respect to $S\subseteq[d]$) defined in \eqref{eqn:def of h(S;)}. We now show the following essential result that $\gamma>0$, which implies that $\alpha>0$, i.e., the $\alpha$-regret is not vacuous.

\begin{proposition}[Submodularity Ratio for General (Strongly) Convex Sets]
\label{prop:submodularity_general}
Let $\X$ be a convex compact set containing the origin. For any fixed parameter $\theta$, consider the set function $h(S) = \max_{\x \in \X, \supp(\x) \subseteq S} \theta^\top \x$. The submodularity ratio $\gamma$ of $h(\cdot)$ is lower bounded by:
\begin{equation}
    \gamma \ge \min_{S, \Omega \subseteq [d]} \frac{\sum_{\omega \in \Omega \setminus S} \max_{\x \in \X, \supp(\x) \subseteq \{\omega\}} \theta^\top \x}{h(S \cup \Omega) - h(S)}.
\end{equation}
Moreover, $\gamma > 0$ when $\theta$ satisfies $\exists \x\in\X,\ s.t. \ \theta^\top\x\ne 0$.
\end{proposition}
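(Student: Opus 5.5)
Two steps, then positivity. The first step is to show that each marginal gain in Definition~\ref{def:submodularity ratio} dominates the corresponding singleton value, namely
\[
h(S\cup\{\omega\})-h(S)\ \ge\ h(\{\omega\}) \triangleq \max_{\x\in\X,\supp(\x)\subseteq\{\omega\}}\theta^\top\x, \qquad \omega\notin S.
\]
Summing this over $\omega\in\Omega\setminus S$ and dividing by $h(S\cup\Omega)-h(S)$ gives the displayed lower bound on $\gamma$ directly from the definition. Pairs with $h(S\cup\Omega)=h(S)$ impose no constraint because $h(\{\omega\})\ge 0$. That nonnegativity holds since $0\in\X$, and for the same reason $h$ is nonnegative and monotone under inclusion.

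For the superadditivity step, I take $\x_S$ attaining $h(S)$ and $\x_\omega$ attaining $h(\{\omega\})$. These exist by compactness, and their supports are disjoint. The natural candidate is $\x_S+\x_\omega$, whose support lies in $S\cup\{\omega\}$ and whose value is $h(S)+h(\{\omega\})$. The obstacle is that $\x_S+\x_\omega$ need not lie in $\X$. Convexity only guarantees the midpoint $\tfrac12(\x_S+\x_\omega)\in\X$, which yields the weaker bound $h(S\cup\{\omega\})\ge \tfrac12(h(S)+h(\{\omega\}))$.

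I would handle this obstacle in one of two ways. The first is to assume, as the paper's examples such as coordinate-symmetric sets suggest, that $\X$ is closed under coordinate projections, i.e.\ $\x(A)\in\X$ whenever $\x\in\X$. Then I would instead start from the optimizer $\x_{S\cup\Omega}$ and decompose it by coordinates. The second is to carry a constant factor through the argument. The constant does not affect the qualitative claim $\gamma>0$, so if the exact inequality fails I would record the bound with a geometry-dependent constant and proceed.

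\textbf{Positivity.} Assume some $\x\in\X$ has $\theta^\top\x\neq 0$. I will show that the ratio in the displayed bound is bounded below by a positive constant.

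\emph{Denominator.} By Proposition~\ref{prop:lipschitz_h} and Cauchy--Schwarz, $h(S\cup\Omega)-h(S)\le L_{\X}\|\theta_{(S\cup\Omega)\setminus S}\|_2$, restricting $\theta$ to the new coordinates.

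\emph{Numerator.} On each coordinate $\omega$ with $\theta_\omega\ne0$ and $\X$ extending along $e_\omega$, the singleton value satisfies $h(\{\omega\})\ge c_\omega|\theta_\omega|$ for some $c_\omega>0$. Here $c_\omega$ is the extent of $\X$ along the appropriate signed axis. It is positive whenever a positive gain exists, because if every singleton gain were zero then convexity and $0\in\X$ would force $h(S\cup\Omega)=h(S)$ as well.

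Since there are only finitely many pairs $(S,\Omega)$, the minimum over pairs with a nonzero denominator is a minimum of finitely many positive numbers, and hence $\gamma>0$.
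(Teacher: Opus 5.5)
Both parts of the statement are false for general convex compact $\X\ni 0$. So no argument can close the two gaps below without extra hypotheses.

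\textbf{Step 1 fails.} The inequality $h(S\cup\{\omega\})-h(S)\ge h(\{\omega\})$ fails not only because $\x_S+\x_\omega$ may leave $\X$; it goes the wrong way in the paper's own main example. For the Euclidean ball of radius $L$, which is closed under coordinate projections,
$h(S\cup\{\omega\})-h(S)=L\bigl(\sqrt{\norm{\theta(S)}_2^2+\theta_\omega^2}-\norm{\theta(S)}_2\bigr)<L|\theta_\omega|=h(\{\omega\})$
whenever $\theta(S)\neq 0\neq\theta_\omega$. Decomposing $\x^\star(S\cup\Omega)$ by coordinates under projection-closure yields only the reverse inequality $h(S\cup\Omega)-h(S)\le\sum_{\omega\in\Omega\setminus S}h(\{\omega\})$. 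That bounds the denominator by your numerator, but says nothing about the marginal gains $h(S\cup\{\omega\})-h(S)$ that actually define $\gamma$. The midpoint bound gives $h(S\cup\{\omega\})-h(S)\ge\tfrac12\bigl(h(\{\omega\})-h(S)\bigr)$, which is negative whenever $h(S)>h(\{\omega\})$, so no constant-factor version exists either.

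\textbf{The displayed bound is false.} Take $\X=\mathrm{conv}\{0,\mathbf{e}_1,\mathbf{e}_2,\mathbf{e}_3,\mathbf{e}_2+\mathbf{e}_3,\mathbf{e}_1+\mathbf{e}_2+\mathbf{e}_3\}$ and $\theta=(1,1,1)^\top$. All generators are nonnegative, so each slice $\{\x\in\X:\supp(\x)\subseteq S\}$ is the hull of the generators supported in $S$. This gives $h(\{i\})=1$, $h(\{1,2\})=h(\{1,3\})=1$, $h(\{2,3\})=2$ and $h(\{1,2,3\})=3$. The pair $S=\{1\}$, $\Omega=\{2,3\}$ has zero numerator and denominator $2$, so $\gamma=0$. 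Yet every ratio on the right-hand side of the display is at least $1/2$. Adding $\pm\epsilon\mathbf{e}_i$ with $0<\epsilon<1$ to the generators puts $0$ in the interior and still gives $h(\{1\})=h(\{1,2\})=h(\{1,3\})=1$ and $h(\{1,2,3\})=3$, hence $\gamma=0$. The valid inequalities $(\x)_1+(\x)_2-(\x)_3\le 1$ and $(\x)_1-(\x)_2+(\x)_3\le 1$ certify this.

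\textbf{The positivity step fails.} It rests on the implication ``all single-coordinate gains vanish $\Rightarrow$ $h(S\cup\Omega)=h(S)$'', which is false.
\begin{itemize}
\item With marginal gains, the example above is a counterexample: $h(\{1,2\})-h(\{1\})=h(\{1,3\})-h(\{1\})=0<2=h(\{1,2,3\})-h(\{1\})$.
\item With singleton values, take $\X=\mathrm{conv}\{0,\mathbf{e}_1+\mathbf{e}_2\}\subset\R^2$ and $\theta=(1,1)^\top$. Then $h(\{1\})=h(\{2\})=0<2=h(\{1,2\})$, so $S=\emptyset$, $\Omega=\{1,2\}$ forces $\gamma=0$, although $\theta^\top\x=2\neq 0$ for $\x=\mathbf{e}_1+\mathbf{e}_2$.
\item The segment also breaks your denominator estimate. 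For $S=\{1\}$, $\Omega=\{2\}$ the gain is $2>\sqrt2=L_\X\norm{\theta(\{2\})}_2$. Enlarging the support also enlarges the feasible slice in the old coordinates, so Proposition~\ref{prop:lipschitz_h} cannot be applied to the new coordinates alone.
\end{itemize}

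\textbf{Comparison with the paper.} The paper's proof does not rescue the statement. It never derives the displayed bound, and it dismisses its ``Case 4'' (zero numerator, positive denominator) with the same implication you use, justified only by an appeal to ``strict curvature''.

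\textbf{A hypothesis that works.} A $C^1$ boundary with $0\in\operatorname{int}\X$ makes the implication true, and hence gives $\gamma>0$, though still not the display. Suppose $\theta(S)\neq0$. Then $\x^\star(S)\in\partial\X$ has a unique outer normal $n$, with $\theta(S)=\lambda\,n(S)$ for some $\lambda>0$. A zero gain means $\x^\star(S)$ is also optimal on the slice $S\cup\{\omega\}$. Since the slice passes through an interior point, this reads $\theta(S\cup\{\omega\})=\lambda\,n(S\cup\{\omega\})$ with that same $\lambda$. Hence $\theta(S\cup\Omega)=\lambda\,n(S\cup\Omega)$, so $\x^\star(S)$ is optimal on $S\cup\Omega$. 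The case $\theta(S)=0$ is handled by taking $\x^\star(S)=0$.
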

\noindent It is worth mentioning that when $\X$ is an ellipsoid $\X = \{\x \in \R^d : \x^\top A \x \le 1\}$, where $A$ is positive definite and assumed without loss of generality that $\lambda_\text{max}(A)\le 1$, the submodularity ratio $\gamma$ of the set function $h(\cdot)$ in Proposition~\ref{prop:submodularity_general} satisfies that $\gamma>\lambda_\text{min}(A)$. The proof of Proposition~\ref{prop:submodularity_general} and analysis of the ellipsoid case are included in Appendix~\ref{app:Omitted Proofs in sec:strongly_convex_case}. 
Under the above guarantees and the condition that the Greedy supports match (i.e., $S_c^G = S^G$), we derive the regret upper bound for Algorithm~\ref{alg:adaptive_general_PEE}; the proof of the following theorem is provided in Appendix~\ref{app:Omitted Proofs in sec:strongly_convex_case}.
\begin{theorem}[Regret bounds for APSEE-G]
\label{thm:regret_general}
Suppose Assumptions~\ref{ass:feasible set bound}-\ref{ass:arms during exploration} hold. Let $L_\text{max}=\max_{\x\in\X}\norm{\x}_2$. Assume the true parameter $\theta_*$ satisfies the Minimum Greedy Gap \eqref{eqn:delta_min} condition with $\Delta_{\min} > 0$. Let $\delta\in(0,1)$. Then, with probability at least $1-\delta$:
\\\noindent $\bullet$  If $T > d C_0$, the cumulative $\alpha$-regret satisfies
    \begin{equation}
        \alpha\text{-}R_T =  \tilde{\mathcal{O}}\left( d \sqrt{T} \left( \norm{\theta_*}_2 + \frac{1}{\norm{\theta_*}_2} \right) \right).
    \end{equation}
\\\noindent $\bullet$ If $T \le d C_0$, the cumulative $\alpha$-regret satisfies
    \begin{equation}
         \alpha\text{-}R_T = \tilde{\mathcal{O}}\left( \frac{d  L_{\max} \norm{\theta_*}_2  h_1}{\Delta_{\min}^2} \right).
    \end{equation}
Here, $\tilde{\CO}(\cdot)$ hides $\log T$ factor; $C_0={\mathcal{O}}(h_1 \Delta_{\min}^{-2})$ is the duration of the warm-up phase in Algorithm~\ref{alg:adaptive_general_PEE}; $\alpha=1-e^{-\gamma}$, and $\gamma$ is the submodularity ratio of $h(\cdot;\theta_*)$ defined as \eqref{eqn:def of h(S;)}.
\end{theorem}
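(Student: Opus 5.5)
We prove Theorem~\ref{thm:regret_general} on the event $\mathcal{E}$ of Lemma~\ref{lemma:ols-bound-hp}, which holds with probability at least $1-\delta$ and gives $\norm{\hat{\theta}_c-\theta_*}_2\le\varepsilon_c$ for all cycles $c$. The argument follows the one for Theorem~\ref{thm:regret_adaptive}. Split the horizon into a warm-up stage, consisting of the cycles before \texttt{SupportFound} becomes true, and a locked stage. Bound the $\alpha$-regret in each stage separately.

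\textbf{Step 1 (locking time and correctness).} On $\mathcal{E}$, every greedy marginal gain computed from $\hat{\theta}_c$ is within $2L_\X\varepsilon_c$ of its true value, by \eqref{eqn:greedy marginal gain}.
\begin{itemize}
\item By induction over the greedy steps, once $4L_\X\varepsilon_c<\Delta_{\min}$ the empirical greedy run picks the same $g_k^*$ as the true run at every step. Each empirical step gap is then at least $\delta_k-4L_\X\varepsilon_c$, so $\Delta_c^{Greedy}\ge \Delta_{\min}-4L_\X\varepsilon_c$.
\item Hence the stopping rule $\Delta_c^{Greedy}>2L_\X\varepsilon_c$ fires as soon as $6L_\X\varepsilon_c<\Delta_{\min}$. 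Since $\varepsilon_c^2=h_1\ln(2dc^2/\delta)/c$, this happens by the first cycle $C_0=\tilde{\O}(h_1L_\X^2\Delta_{\min}^{-2})$.
\item Whenever the rule fires, Lemma~\ref{lemma:general_consistency} gives $S_{est}=S^G$.
\end{itemize}
During warm-up only exploration arms are played, for at most $dC_0$ rounds. Each round costs at most $\alpha\,\theta_*^\top\x^\star(S^\star)-\theta_*^\top b_k\le 2L_{\max}\norm{\theta_*}_2$ by Cauchy--Schwarz. This yields the $T\le dC_0$ bound $\tilde{\O}(dL_{\max}\norm{\theta_*}_2h_1/\Delta_{\min}^2)$, and the same quantity is an additive term in the other case.

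\textbf{Step 2 (benchmark reduction).} The greedy guarantee of \cite{das2018approximate} uses the submodularity ratio $\gamma$ from Proposition~\ref{prop:submodularity_general}, with $h(\emptyset;\theta_*)=0$ and $h$ monotone because $0\in\X$. It gives $h(S^G;\theta_*)\ge(1-e^{-\gamma})\,\theta_*^\top\x^\star(S^\star)$. Therefore, after locking, the per-round $\alpha$-regret is at most the per-round regret against the fixed-support optimum $h(S^G;\theta_*)$.

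\textbf{Step 3 (exploitation).} In cycle $c$ the played action is $a_c=\x^\star(S^G;\hat{\theta}_c)$. Write $x^*=\x^\star(S^G;\theta_*)$. By optimality of $a_c$ for $\hat\theta_c$ and the smoothness in Proposition~\ref{prop:lipschitz_h},
\[
\theta_*^\top x^*-\theta_*^\top a_c=(\theta_*-\hat{\theta}_c)^\top(x^*-a_c)+\big(\hat{\theta}_c^\top x^*-\hat{\theta}_c^\top a_c\big)\le \norm{\theta_*-\hat{\theta}_c}_2\norm{x^*-a_c}_2\le L_{grad}\varepsilon_c^2 .
\]
Here we use that the bracket is $\le 0$, and that $\norm{x^*-a_c}_2\le L_{grad}\norm{\theta_*-\hat{\theta}_c}_2$ by \eqref{eqn:lipschitz_gradient}.

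\textbf{Main obstacle: the constant $L_{grad}$.} For a strongly convex set of radius $R$, the optimal action is the support-function gradient, which is Lipschitz only away from $\theta=0$, with constant of order $R/\norm{\theta}_2$ (the SBAR condition of \cite{rusmevichientong2010linearly}). We apply this restricted to the coordinates $S^G$. So we need $\norm{\hat{\theta}_c(S^G)}_2\gtrsim\norm{\theta_*(S^G)}_2$, which holds once $\varepsilon_c$ is a constant fraction of it. This is where the factor $1/\norm{\theta_*}_2$ in the final bound comes from. We also need $\norm{\theta_*(S^G)}_2$ comparable to $\norm{\theta_*}_2$; this is absorbed into the constants via $\Delta_{\min}>0$.

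\textbf{Step 4 (summing over cycles).} Cycle $c$ contributes $c$ exploitation rounds, each with regret at most $L_{grad}\varepsilon_c^2=\tilde{\O}(h_1/(\norm{\theta_*}_2 c))$, plus $d$ exploration rounds with regret at most $2L_{\max}\norm{\theta_*}_2$ each.
\begin{itemize}
\item The number of cycles is $K=\O(\sqrt{T})$, because the first $K$ cycles already use $\sum_{c\le K}(c+d)\ge K^2/2$ rounds.
\item Exploitation contributes $\sum_{c\le K} c\cdot\tilde{\O}(h_1/(\norm{\theta_*}_2c))=\tilde{\O}(h_1\sqrt{T}/\norm{\theta_*}_2)$. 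Since $h_1\propto \Tr(B)/\lambda_0^2$ scales like $d$ for a standard-basis exploration set, this is $\tilde{\O}(d\sqrt{T}/\norm{\theta_*}_2)$.
\item Exploration contributes $dK\cdot 2L_{\max}\norm{\theta_*}_2=\O(d\sqrt{T}\norm{\theta_*}_2)$.
\end{itemize}
For $T>dC_0$ the warm-up term $\tilde{\O}(dL_{\max}\norm{\theta_*}_2h_1/\Delta_{\min}^2)$ from Step~1 is at most $\O(d\sqrt{T}\norm{\theta_*}_2)$ up to logarithmic factors, and $\Delta_{\min}$-dependent constants are absorbed into the $\tilde{\O}$. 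Adding the three contributions gives $\tilde{\O}\big(d\sqrt{T}(\norm{\theta_*}_2+1/\norm{\theta_*}_2)\big)$.
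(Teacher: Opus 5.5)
Your proposal is correct at the paper's level of rigor and follows essentially the same route: lock the greedy support via Lemma~\ref{lemma:general_consistency}, reduce the $\alpha$-regret to fixed-support regret against $h(S^G;\theta_*)$ with the greedy guarantee, bound the per-round exploitation regret by $\norm{\hat\theta_c-\theta_*}_2\norm{x^*-a_c}_2\lesssim\varepsilon_c^2/\norm{\theta_*(S^G)}_2$, and sum $c\,\varepsilon_c^2$ over $K=\O(\sqrt{T})$ cycles together with $dK$ exploration rounds. The differences are minor: you derive the locking time explicitly, which correctly picks up the $L_\X^2$ factor that the paper's $\mathcal{O}(h_1\Delta_{\min}^{-2})$ suppresses, and you handle the $1/\norm{\theta}_2$ blow-up of the action-map Lipschitz constant with a local argument, whereas the paper writes $\x^\star(S;\theta)=\x^\star(S;\theta(S)/\norm{\theta(S)}_2)$ and uses the unconditional bound $\norm{w/\norm{w}_2-z/\norm{z}_2}_2\le 2\norm{w-z}_2/\norm{z}_2$, which avoids the extra burn-in condition $\norm{\hat\theta_c(S^G)}_2\gtrsim\norm{\theta_*(S^G)}_2$ that your version needs.
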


\section{General Compact Action Sets }\label{sec:extensions}
When the action set $\X$ is a general compact set (e.g., a hypercube or a polytope), it does not necessarily possess the strong convexity property (curvature) as the Euclidean balls or ellipsoids. 
\begin{sloppypar}  
As discussed in Section~\ref{sec:strongly_convex_case}, we use the submodularity ratio $\gamma$ of the set function $h(S; \theta) =\allowbreak \max_{\x \in \X, \supp(\x) \subseteq S} \theta^\top \x$ (with $S\subseteq[d]$) to parameterize the $\alpha$-regret as $\alpha=1-e^{-\gamma}$.
To ensure $\gamma > 0$ for general compact set $\X$, we make the following assumption.
\end{sloppypar}
\begin{assumption}[Convex Hull Regularity]
\label{ass:convex_hull}
Let $\X^\star = \{ \x^\star(S) : S \subseteq [d], |S| \le H \}$, where $\x^\star(S) \in \argmax_{\x \in \X, \supp(\x) \subseteq S} \theta_*^\top \x$. We assume that 
$\text{conv}(\X^\star) \subseteq \X$.\footnote{The convex hull of a set $S$, denoted $\text{conv}(S)$, is the intersection of all convex sets containing $S$ \cite{bertsekas2015convex}.}
\end{assumption}

One can show that Assumption~\ref{ass:convex_hull} directly holds for convex $\X$. 
Under Assumption~\ref{ass:convex_hull}, we show in Appendix~\ref{app:Omitted Proofs in sec:extensions} that $h(\cdot; \theta_*)$ possesses a strictly positive submodularity ratio $\gamma > 0$. We then prove the following regret bounds for a modified version of Algorithm~\ref{alg:adaptive_general_PEE} detailed in Appendix~\ref{app:Omitted Proofs in sec:extensions}.
\begin{theorem}[Regret Bounds for General Compact Sets]
\label{thm:regret_general_compact}
Suppose Assumptions~\ref{ass:feasible set bound}-\ref{ass:convex_hull} hold. Run a modified version of Algorithm~\ref{alg:adaptive_general_PEE}, which skips the support verification step (lines~14-17) and sets the exploitation length (line~19) to be $\lfloor\sqrt{c} \rfloor$ for every cycle $c=1, 2, \dots$. Let $\delta\in(0,1)$. Then, with probability at least $1-\delta$, the cumulative $\alpha$-regret satisfies $\alpha\text{-}R_T =\tilde{\mathcal{O}}\left( d T^{2/3} \right)$, where $\tilde{\CO}(\cdot)$ hides $\log T$ factor, $\alpha=1-e^{-\gamma}$, and $\gamma$ is the submodularity ratio of $h(\cdot;\theta_*)$ defined as \eqref{eqn:def of h(S;)}.
\end{theorem}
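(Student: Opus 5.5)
We will decompose the horizon into cycles $c=1,\dots,C$, where cycle $c$ consists of $d$ exploration steps followed by $\lfloor\sqrt{c}\rfloor$ exploitation steps. Since $T\ge\sum_{c\le C}(d+\lfloor\sqrt c\rfloor)\gtrsim dC+C^{3/2}$, the number of completed cycles satisfies $C=\O(T^{2/3})$ (and also $C\le T/d$). We will work on the event of Lemma~\ref{lemma:ols-bound-hp}, which holds with probability at least $1-\delta$ and gives $\norm{\hat\theta_c-\theta_*}_2\le\varepsilon_c=\tilde\O(\sqrt{h_1/c})$ for every $c$ simultaneously. The $\alpha$-regret then splits into an exploration part and an exploitation part.

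\emph{Exploration part.} Each exploration step costs at most $\alpha\,\theta_*^\top\x^\star(S^\star)-\theta_*^\top b_k\le 2L_\X\norm{\theta_*}_2$. Summing over $dC$ steps gives $\O(dC)=\O(dT^{2/3})$.

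\emph{Exploitation part.} In cycle $c$ the algorithm plays $a_c=\x^\star(S_c^G;\hat\theta_c)$, where $S_c^G$ is the greedy support computed on $\hat\theta_c$. We do not require $S_c^G=S^G$; instead we argue through values. The first step is to invoke the greedy guarantee of \cite{das2018approximate} for the set function $h(\cdot;\hat\theta_c)$. This function is monotone because adding coordinates enlarges the feasible set. It is nonnegative, since $\X$ contains $0$ (we assume this, as in Proposition~\ref{prop:submodularity_general}, or else normalize $h(\emptyset)$). Its submodularity ratio will be shown positive under Assumption~\ref{ass:convex_hull}. The guarantee gives
\[
h(S_c^G;\hat\theta_c)\ge(1-e^{-\hat\gamma_c})\max_{|S|\le H}h(S;\hat\theta_c)\ge (1-e^{-\hat\gamma_c})h(S^\star;\hat\theta_c).
\]
The second step combines this with the Lipschitz bound of Proposition~\ref{prop:lipschitz_h}, once on each side:
\[
\theta_*^\top a_c\ge\hat\theta_c^\top a_c-L_\X\varepsilon_c=h(S_c^G;\hat\theta_c)-L_\X\varepsilon_c,
\qquad
h(S^\star;\hat\theta_c)\ge h(S^\star;\theta_*)-L_\X\varepsilon_c .
\]
Together these yield a per-step exploitation regret of $\O(L_\X\varepsilon_c)$, provided $\hat\gamma_c\ge\gamma$.

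\emph{Main obstacle.} The ratio $\alpha=1-e^{-\gamma}$ is defined via $\gamma$ for $\theta_*$, whereas greedy is run on $\hat\theta_c$. To handle this mismatch, we will redo the standard greedy proof directly, keeping $\theta_*$ in the submodularity inequality and transferring each marginal gain between $\hat\theta_c$ and $\theta_*$ at a cost of $2L_\X\varepsilon_c$, as in \eqref{eqn:greedy marginal gain}. Propagating these additive errors through the $H$ greedy rounds should give
\[
h(S_c^G;\theta_*)\ge(1-e^{-\gamma})h(S^\star;\theta_*)-\O(HL_\X\varepsilon_c).
\]
Alongside this, we must show $\gamma>0$ under Assumption~\ref{ass:convex_hull}. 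The idea is that a convex combination of the single-coordinate optima $\x^\star(\{\omega\})$, together with $\x^\star(S)$, lies in $\X$; bounding $h(S\cup\Omega)-h(S)$ by the sum of the singleton gains then mimics Proposition~\ref{prop:submodularity_general}.

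\emph{Summation.} The total exploitation regret is at most
\[
\sum_{c\le C}\sqrt c\cdot\tilde\O\!\left(H L_\X\sqrt{h_1/c}\right)=\tilde\O(C)=\tilde\O(T^{2/3}),
\]
with constants depending on $d$ through $h_1$ and $H\le d$. Adding the exploration part gives $\alpha\text{-}R_T=\tilde\O(dT^{2/3})$. A final partial cycle contributes at most $\O(d+\sqrt C)$, which is absorbed in this bound.
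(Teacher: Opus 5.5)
Your argument is correct. It shares the paper's skeleton: the uniform OLS event of Lemma~\ref{lemma:ols-bound-hp}, the per-cycle split into $d$ exploration and $\lfloor\sqrt c\rfloor$ exploitation steps, and $C=\mathcal{O}(T^{2/3})$. Your exploitation step, however, is genuinely different, and it is the more careful one.

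The paper first uses the greedy guarantee at $\theta_*$ to pass to the intermediate benchmark $\theta_*^\top\x^G(S^G)$. It then splits $\theta_*^\top\x^G(S^G)-\theta_*^\top\x_c^G(S_c^G)$ into three terms and discards $\hat\theta_c^\top\bigl(\x^G(S^G)-\x_c^G(S_c^G)\bigr)$ as nonpositive, which gives $2L_{\max}\varepsilon_c$ per step. That discard requires $h(S^G;\hat\theta_c)\le h(S_c^G;\hat\theta_c)$. It is automatic when $S_c^G=S^G$, as in Theorem~\ref{thm:regret_general}, where support verification enforces it. It does not follow from optimality of $\x_c^G$ on the support $S_c^G$, though. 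Once verification is dropped, it can fail at (near-)ties in the greedy steps by an amount not controlled by $\varepsilon_c$.

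Your route never passes through $\x^G(S^G)$. Take $f=h(\cdot;\theta_*)$ and let $G_k$ be the greedy sets built on $\hat\theta_c$. Using the submodularity ratio of $f$ at $(G_k,S^\star)$ and monotonicity, you get $f(G_{k+1})-f(G_k)\ge\max_{v\notin G_k}\bigl[f(G_k\cup\{v\})-f(G_k)\bigr]-2L_\X\varepsilon_c\ge\frac{\gamma}{H}\bigl(f(S^\star)-f(G_k)\bigr)-2L_\X\varepsilon_c$. With $f(\emptyset)=0$, i.e.\ $0\in\X$ as you note, this unrolls to $f(S_c^G)\ge(1-e^{-\gamma})f(S^\star)-2HL_\X\varepsilon_c$. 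Combined with $\theta_*^\top a_c\ge f(S_c^G)-2L_\X\varepsilon_c$, the per-step $\alpha$-regret is at most $(2H+2)L_\X\varepsilon_c$.

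The price is an extra factor $H$ in the exploitation term. It is absorbed into $\tilde{\mathcal{O}}(dT^{2/3})$ since $H\le d$ and, as in the paper, $h_1$, $L_\X$, $\norm{\theta_*}_2$ are treated as constants. The gain is a bound that actually holds when $S_c^G\neq S^G$, which the modified algorithm allows.

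One item in your plan should be dropped: you do not need $\gamma>0$. The recursion $D_{k+1}\le(1-\gamma/H)D_k+2L_\X\varepsilon_c$ for $D_k=f(S^\star)-f(G_k)$ works for every $\gamma\in[0,1]$, and the theorem asserts nothing about positivity. Your sketch would not deliver it anyway: mixing $\x^\star(S)$ with $\x^\star(\{\omega\})$ only gives $h(S\cup\{\omega\})-h(S)\ge\lambda\bigl(h(\{\omega\})-h(S)\bigr)$, which is vacuous when $h(S)\ge h(\{\omega\})$.

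In fact $\gamma$ can vanish under all the stated assumptions. Take the convex body $\X=\mathrm{conv}\bigl(\{e_1,e_2+e_3\}\cup\{\x:\norm{\x}_2\le 1/2\}\bigr)\subset\R^3$ and $\theta_*=(1,1,1)^\top$. One checks $h(\{1\})=h(\{1,2\})=h(\{1,3\})=1$ while $h(\{1,2,3\})\ge 2$, so the pair $S=\{1\}$, $\Omega=\{2,3\}$ forces $\gamma=0$.
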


\section{Numerical Experiments}
\label{sec:numerical experiments}

In this section, we empirically validate the theoretical performance and core mechanisms of our proposed algorithms. We focus our primary evaluation on the ellipsoid action set, which belongs to the strongly convex setting discussed in Section~\ref{sec:strongly_convex_case}. We refer the readers to Appendix~\ref{app:additional_experiments} for additional experiments on Euclidean balls, general compact sets, and a real-world recommendation system application.

\textbf{Regret Performance on Ellipsoid Action Sets.} We fix the action dimension to $d=20$ and the sparsity constraint to $H=5$. The true parameter $\theta_*$ is drawn uniformly from the hypercube $[-1, 1]^d$. We simulate the environment for 2,000 time steps, where the noise $\eta_t$ in the reward is i.i.d. Gaussian noise drawn from $\mathcal{N}(0, 0.5^2)$. The results are based on 20 independent trials. We construct an ellipsoid action set $\X = \{\x \in \R^d : \x^\top A \x \le 1\}$, where $A$ is a randomly generated positive definite matrix with $\lambda_\text{max}(A)\le 1$. We employ the \textbf{APSEE-G} algorithm (Algorithm~\ref{alg:adaptive_general_PEE}) and validate the results established in Theorem~\ref{thm:regret_general}. Figure~\ref{fig:case4} (Left) illustrates the cumulative $\alpha$-regret of Algorithm~\ref{alg:adaptive_general_PEE}. The scatter cloud highlights the variance across different runs, while the mean trajectory demonstrates a sub-linear growth, indicating that the algorithm successfully learns the optimal sparse action. To verify the regret bounds provided in Theorem~\ref{thm:regret_general}, Figure~\ref{fig:case4} (Right) plots the $\alpha$-regret normalized by $\sqrt{T}$. The curve of the mean normalized regret stabilizes and exhibits a slow, logarithmic growth trend, confirming the $\tilde{\mathcal{O}}(\sqrt{T})$ bound established in Theorem~\ref{thm:regret_general}.

\begin{figure}[!htbp]
    \captionsetup{justification=centering}
    \centering
    \includegraphics[width=1.0\linewidth]{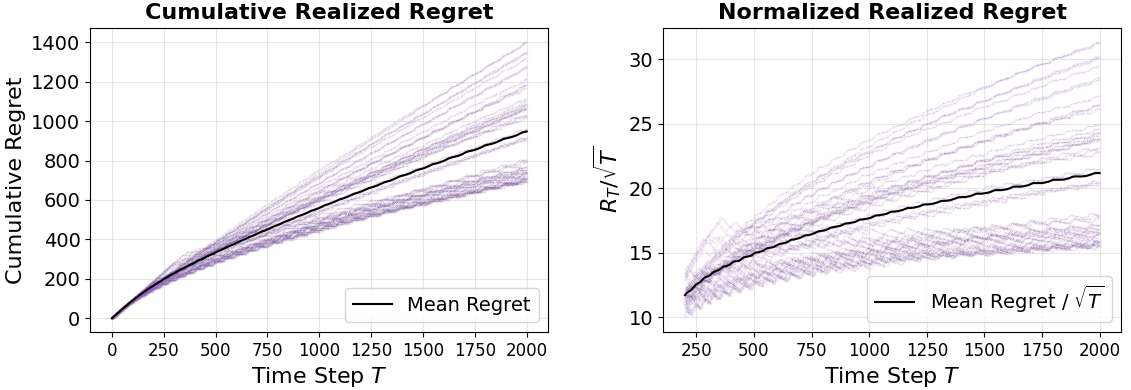}
    \caption{Numerical results for the APSEE-G algorithm on the ellipsoid action set.}
    \label{fig:case4}
\end{figure}

\textbf{Impact of Basis Actions on Exploration.} Next, we examine the sensitivity of the APSEE-G algorithm (Algorithm~\ref{alg:adaptive_general_PEE}) to the choice of basis actions utilized during the exploration phase. 
We conduct experiments using the following three distinct sets of basis actions under the same ellipsoid action set: (i) the standard basis vectors in $\mathbb{R}^d$ (consistent with the main evaluation), (ii) a random orthogonal basis generated from a standard Gaussian distribution, and (iii) a random orthogonal basis sampled from a uniform distribution. All these sets of basis actions satisfy the invertibility condition in Assumption~\ref{ass:arms during exploration}.

As illustrated in Figure~\ref{fig:app_basis_actions}, the cumulative $\alpha$-regret trajectories of APSEE-G under these different basis sets are nearly indistinguishable, all exhibiting matching sub-linear growth rates. This empirical evidence corroborates the theoretical claims derived from Lemmas~\ref{lemma:stopping_condition_l2ball} and \ref{lemma:general_consistency}: provided that the chosen basis actions span the necessary space to satisfy Assumption~\ref{ass:arms during exploration}, the exact distribution of the basis actions in the exploration phase has a negligible impact on the overall regret of the algorithm. 

\begin{figure}[!htbp]
    \centering
    \includegraphics[width=0.65\linewidth]{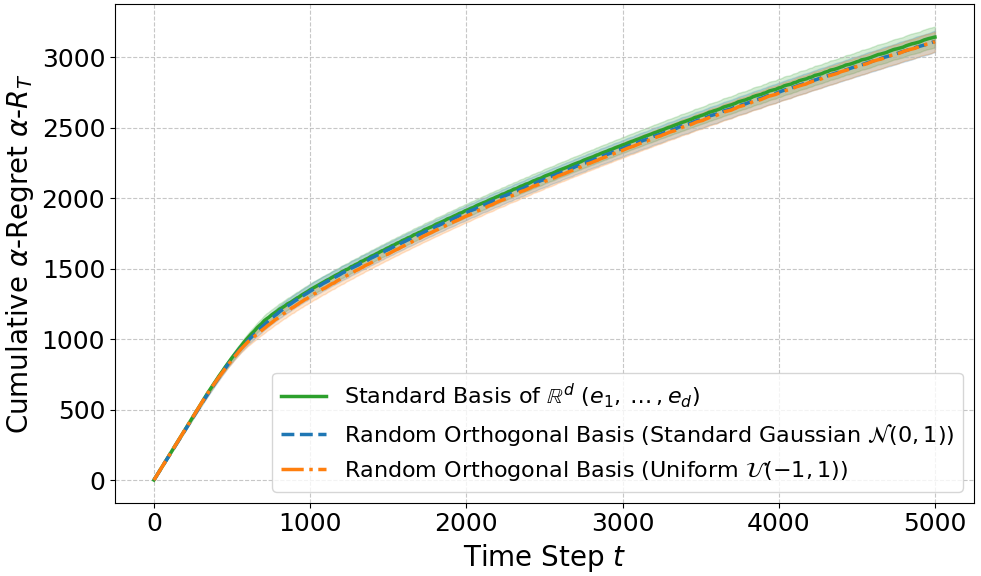} 
    \caption{Cumulative $\alpha$-regret of the APSEE-G algorithm under different choices of basis actions in the exploration phase. }
    \label{fig:app_basis_actions}
\end{figure}

\textbf{Empirical Verification of Support Recovery.} While the recovery of the true support $S^{\star}$ is theoretically justified in Lemmas~\ref{lemma:stopping_condition_l2ball} and \ref{lemma:general_consistency}, we show that this guarantee holds in practical evaluations. To this end, we conduct additional experiments to track the real-time support recovery process of the APSEE algorithm (Algorithm~\ref{alg:adaptive_PEE}) under two distinct configurations of the ground-truth parameter vector $\theta_{*}\in\mathbb{R}^{20}$: a ``standard'' $\theta_{*}$ (where the non-zero elements have more separated absolute values, making them easily identifiable) and an ``adversarial'' $\theta_{*}$ (where the absolute values of the non-zero elements are nearly identical, making them highly difficult to distinguish). Both parameter vectors are constructed to share the exact same signal gap $\Delta_{\min}$ (defined as the absolute difference between the $H$-th and $(H+1)$-th largest elements of $\theta_{*}$). According to our theoretical proofs, sharing an identical $\Delta_{\min}$ ensures that the theoretical worst-case stopping time for the exploration phase, $t^{*}=\mathcal{O}(1/\Delta_{\min}^2)$, is exactly the same for both cases.

Figure~\ref{fig:app_support_recovery} plots the evolution of the support recovery over time. For the above two distinct cases, we conduct 50 trials each and plot scatter plots as well as average curves. The results clearly demonstrate that despite the differing difficulty landscapes of the standard and adversarial parameter vectors, the APSEE algorithm successfully identifies the true support $S^{\star}$ within the theoretical stopping time $t^{*}$. 
  
\begin{figure}[!htbp]
    \centering
    \includegraphics[width=0.65\linewidth]{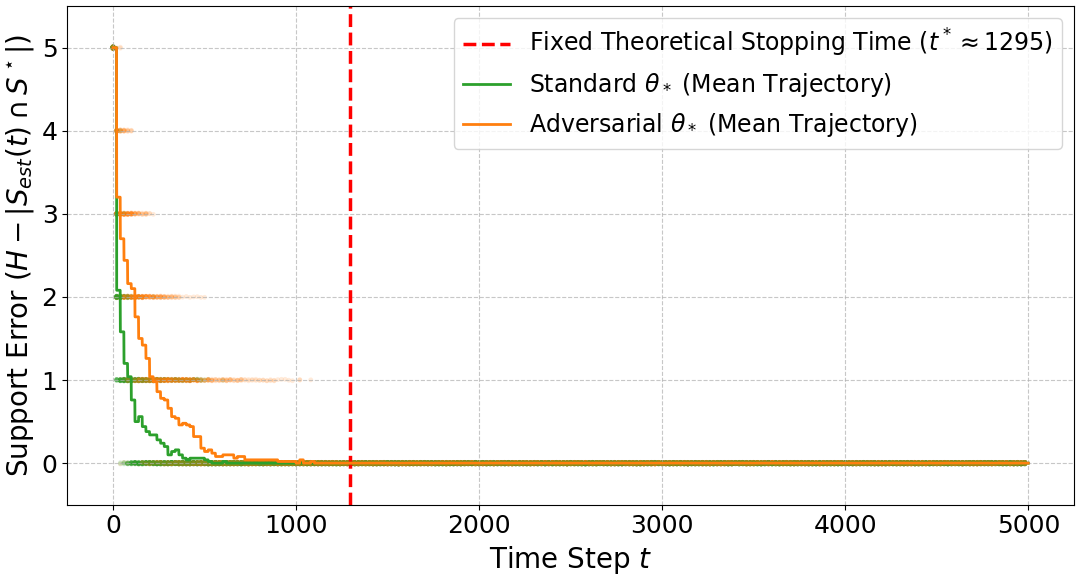} 
    \caption{Support recovery performance of the APSEE algorithm over time under standard and adversarial $\theta_{*}$ vectors with the same signal gap $\Delta_{\min}$. }
    \label{fig:app_support_recovery}
\end{figure}
\color{black}

\section{Conclusion}
We formulated and analyzed the problem of learning to sparsify stochastic linear bandits with sparse action constraints. We proposed a novel algorithmic framework that effectively decouples parameter learning via OLS from combinatorial action selection. By leveraging an adaptive warm-up mechanism, we achieved $\tilde{\mathcal{O}}(d\sqrt{T})$ regret for Euclidean and strongly convex action sets, and $\tilde{\mathcal{O}}(dT^{2/3})$ regret for compact sets. The regret upper bound $\tilde{\mathcal{O}}(d\sqrt{T})$ readily matches the minimax lower bound for standard linear stochastic bandits, demonstrating the near-optimality of our methods in favorable geometries.
Our results bridge the gap between high-dimensional linear bandits and online combinatorial optimization, opening future avenues for bandits or general RL problems with sparsity constraints. Future work includes: (i) establish the regret lower bound specific to our problem setting; (ii) for general compact sets, the current $\tilde{\mathcal{O}}(dT^{2/3})$ regret bound leaves room for improvement; 
(iii) regarding the approximation factor $\alpha$ in our regret bounds, it remains an open question whether the tightness of the $\alpha$-regret bound can be further improved.

\bibliographystyle{unsrt}
\bibliography{main}

\appendix
\counterwithin{lemma}{section}
\counterwithin{theorem}{section}
\counterwithin{proposition}{section}
\counterwithin{corollary}{section}
\counterwithin{definition}{section}
\counterwithin{equation}{section}
\onecolumn

\section*{Organization of the Appendix and Notations} 
{\bf Organization.} More related work can be found in Appendix \ref{app:related work}. Appendices~\ref{app:Omitted Proofs in sec:euclidean_case},~\ref{app:Omitted Proofs in sec:strongly_convex_case} and~\ref{app:Omitted Proofs in sec:extensions} supplement the omitted proofs of Sections~\ref{sec:Problem Formulation},~\ref{sec:euclidean_case},~\ref{sec:strongly_convex_case} and~\ref{sec:extensions}, respectively. Appendix~\ref{subsec:time_varying_action_set} provides extensions to time-varying action sets. More simulation experiments are provided in Appendix \ref{app:additional_experiments}.
Technical lemmas are provided in Appendix~\ref{app:tech lemmas}. 

{\bf Notations used in the Appendix.} For a vector $\x\in\R^d$, let $\|\mathbf{x}\|_p = \left( \sum_{i=1}^d |(\mathbf{x})_i|^p \right)^{\frac{1}{p}}$ be its $\ell_p$-norm for $p>0$. Further, let $\|\mathbf{x}\|_0 = |\{ i \in [d] \mid (\mathbf{x})_i \neq 0 \}|$ and $\norm{\x}_\infty=\max_{i\in[d]}|(\x)_i|$ be its $\ell_0$-norm and $\ell_\infty$-norm, respectively.

\section{More Related Work}\label{app:related work}
Building upon the context provided in Section~\ref{sec:introduction}, we give a more complete review of the problems studied in the literature that are related to our \textit{learning to sparsify linear bandits} problem. Our work addresses a unique intersection of optimization involving both discrete and continuous variables, a domain previously explored in offline \cite{adibi2022minimax,bunton2022joint} and adversarial online \cite{ye2025online} settings. However, as highlighted earlier, the existing literature lacks specific regret guarantees for stochastic bandit feedback under these hybrid sparse-continuous constraints. 
In the following subsections, we categorize the relevant literature into four distinct streams to explicitly delineate our contributions: we contrast our action-constrained setting with Linear Bandits with Sparse Parameters (where the environment itself is sparse); we distinguish our continuous-valued selection problem from standard Combinatorial Multi-Armed Bandits (CMAB); we compare our stochastic approach with adversarial Online Combinatorial Optimization; and finally, we connect our algorithmic design to the literature on Subset Selection and Submodularity in Regression, which provides the essential theoretical toolkit for analyzing the offline greedy approximation oracle.

\subsection{Linear Bandits with Sparse Parameters}
The problem of high-dimensional linear bandits where the unknown parameter $\theta_*$ is sparse (i.e., $\|\theta_*\|_0 \le s \ll d$) has been extensively studied. This setting is often referred to as \textit{Sparse Linear Bandits}. The primary goal in this literature is to leverage the sparsity of the environment to achieve regret bounds that depend on the sparsity level $s$ rather than the ambient dimension $d$. 
Early work by \cite{abbasi2012online} introduced the Online-to-Confidence-Set framework but noted that standard regularization does not automatically adapt to sparsity. Subsequent works, such as \cite{carpentier2012bandit} and \cite{bastani2021mostly}, integrated Compressed Sensing techniques and Lasso (L1-regularized) estimators into the bandit framework. \cite{liu2025minimax} and \cite{hao2020high} provided minimax optimal rates for this setting, typically achieving regret scaling with $\sqrt{s T}$ or similar terms. 
\cite{kim2019doubly} and \cite{oh2021sparsity} explored cases where the sparsity pattern is unknown and must be learned adaptively. 

The fundamental difference between these works and ours is the source of sparsity. In Sparse Linear Bandits, the unknown parameter $\theta_*$ is sparse, and the learner typically chooses actions from a standard set (e.g., the unit ball) to probe this sparsity. In our \textit{Learning to Sparsify Linear Bandits} setting, the \textit{learner's action} $\mathbf{x}_t$ itself is constrained to be sparse. This shifts the difficulty from statistical estimation of a sparse vector to the computational and algorithmic challenge of solving a sparse combinatorial optimization problem at each step.

\subsection{Combinatorial Multi-Armed Bandits (CMAB)}
Our problem shares structural similarities with Combinatorial MABs, where the action space consists of subsets of base arms. The classic framework by \cite{chen2013combinatorial} and \cite{chen2016combinatorial} considers a learner selecting a subset $S \subseteq [d]$ (often satisfying a matroid or cardinality constraint) and observing a reward that is a function of the outcomes of the selected arms.
For linear rewards, this problem is relatively straightforward if the base arms are independent. However, when the reward function is nonlinear or submodular, the problem becomes intricate. \cite{chen2016combinatorial} introduced the concept of utilizing an offline $(\alpha, \beta)$-approximation oracle to achieve $\alpha$-regret. More recently, \cite{fourati2024combinatorial} and \cite{nie2023framework} have extended these results to more general stochastic submodular rewards.

While CMAB deals with discrete subset selection, our problem involves a hybrid decision: selecting a support set $S$ (discrete) \textit{and} assigning continuous values to the non-zero entries $\mathbf{x}(S)$ (continuous). Our feasible set is the intersection of a sparsity constraint and a continuous (convex) compact set (e.g., an ellipsoid), which requires techniques from continuous optimization (like Lipschitz continuity) that are absent in standard CMAB.

\subsection{Online Combinatorial Optimization with Continuous Variables}
As we mentioned in the Introduction, a sparse online linear optimization problem has been studied in \cite{ye2025online}. At each step $t$, the learner selects $\mathbf{x}_t\in\R^d$ and $S_t\subseteq[d]$ with $|S_t|\le H$ and receives a reward given by $f_t(\mathbf{x}_t,S_t)=\theta_t^{\top}\mathbf{x}_t(S_t)$, where $\theta_t$ is potentially generated by an adversary. However, there is no regret guarantee for the bandit setting (i.e., the learner only observes $\theta_t^{\top}\mathbf{x}_t(S_t)$) and they achieved an $\alpha$-regret of $\tilde{\CO}(\sqrt{H}dT^{5/6})$ when two quires of $f_t(\cdot)$ is available to the learner at each step $t$ by utilizing the Exp3.S algorithm for the discrete component and online gradient descent for the continuous component. The online linear optimization framework for a single continuous action (without sparsity consideration) has also been studied in \cite{bubeck2012towards}, which achieves a $\tilde{\CO}(\sqrt{dT})$-regret. In contrast with the setting of online linear optimization, we study the stochastic linear bandits setting whose reward is a random variable given by \eqref{eqn:reward function}.

Our work differs from \cite{ye2025online} in three key aspects: (i) We focus on the stochastic setting with a fixed $\theta_*$, allowing us to use OLS estimation rather than adversarial online gradient descent. This enables us to achieve a tighter $\tilde{\mathcal{O}}(\sqrt{T})$ regret compared to the $T^{5/6}$ or $T^{2/3}$ rates often seen in adversarial combinatorial semi-bandits with partial feedback. (ii) We adopt an Explore-Then-Commit (Phased) architecture. This separates the learning phase (estimating $\theta_*$) from the optimization phase, simplifying the combinatorial challenge to a pure offline optimization problem given the estimates. (iii) We provide specific analyses for $\ell_2$-balls (tractable) and general strongly convex sets (greedy approximation), leveraging the geometry of the action set to refine the regret bounds.

\subsection{Subset Selection and Submodularity in Regression}
The offline oracle in our problem, $\max_{\mathbf{x} \in \X, \|\mathbf{x}\|_0 \le H} \theta^\top \mathbf{x}$, is mathematically equivalent to the \textit{Sparse Principal Component Analysis} (Sparse PCA) or \textit{Best Subset Selection} problem, depending on the constraints. 
It is well-known that maximizing a quadratic or linear form under sparsity constraints is NP-hard \cite{natarajan1995sparse}.
To tackle this, \cite{das2011submodular} and \cite{das2018approximate} introduced the \textit{submodularity ratio} to analyze the performance of Greedy algorithms on non-submodular objectives, particularly in the context of subset selection for regression.
Our work bridges this literature with online learning. We adapt the submodularity ratio analysis to the specific geometry of our action sets (as shown in Proposition \ref{prop:submodularity_general}) to guarantee that our greedy exploitation phase yields a provable $\alpha$-approximation, thereby allowing valid $\alpha$-regret bounds in the bandit setting.

\section{Omitted Proofs in Sections~\ref{sec:Problem Formulation} and~\ref{sec:euclidean_case}}\label{app:Omitted Proofs in sec:euclidean_case}
\subsection{Proof of Proposition~\ref{prop:lipschitz_h}}
\begin{proof}
Let $S \subseteq [d]$ be a fixed support set. The function $h(S; \theta)$ is equivalent to the support function of the restricted set $\X(S)= \{\x \in \X : \supp(\x) \subseteq S\}$. Since $\X$ is compact, its intersection with the subspace of vectors supported on $S$, denoted as $\X(S)$, remains compact (or reduces to a singleton/empty set in degenerate cases).

\textbf{Lipschitz Continuity of the Value Function.}
Let $\x_1^\star \in \X(S)$ be the maximizer for $\theta_1$, such that $h(S; \theta_1) = \theta_1^\top \x_1^\star$.
Consider the difference $h(S; \theta_1) - h(S; \theta_2)$:
\begin{align*}
    h(S; \theta_1) - h(S; \theta_2) &= \theta_1^\top \x_1^\star - \sup_{\x \in \X(S)} \theta_2^\top \x \\
    &\overset{\text{(a)}}\le \theta_1^\top \x_1^\star - \theta_2^\top \x_1^\star \\
    &= (\theta_1 - \theta_2)^\top \x_1^\star \\
    &\overset{\text{(b)}}\le \norm{\theta_1 - \theta_2}_2 \norm{\x_1^\star}_2 ,
\end{align*}
where (a) holds because $\x_1^\star \in \X(S)$ is a feasible candidate for $\theta_2$, and (b) follows from the Cauchy-Schwarz inequality. By the definition of $L_{\X}$, for any $\x \in \X$ (and thus any $\x \in \X(S)$), we have $\norm{\x}_2 \le \sup_{\mathbf{y} \in \X} \norm{\mathbf{y}}_2 = L_{\X}$.
Therefore,
$$ h(S; \theta_1) - h(S; \theta_2) \le L_{\X} \norm{\theta_1 - \theta_2}_2. $$
By symmetry, swapping $\theta_1$ and $\theta_2$ yields $h(S; \theta_2) - h(S; \theta_1) \le L_{\X} \norm{\theta_1 - \theta_2}_2$. Combining these two inequalities proves the absolute value bound:
$$ |h(S; \theta_1) - h(S; \theta_2)| \le L_{\X} \norm{\theta_1 - \theta_2}_2. $$

\textbf{Lipschitz Continuity of the Gradient (Optimal Action).}
The optimal action is given by the gradient of the value function due to Danskin's Theorem \cite{danskin2012theory}: $\nabla_\theta h(S; \theta) = \x^\star(S; \theta)$.
We rely on the fundamental duality in convex analysis between strong convexity and smoothness (see \cite{polovinkin1996strongly}).
Specifically, if a set $\mathcal{K}$ is $\mu$-strongly convex (meaning it can be represented as the intersection of balls of radius $1/\mu$), its support function $h_{\mathcal{K}}(\theta) = \sup_{x \in \mathcal{K}} \theta^\top x$ is differentiable, and its gradient is Lipschitz continuous with constant $L_{grad} = 1/\mu$.
Since $\X$ is strongly convex, the restricted set $\X(S)$ inherits this curvature property within the subspace spanned by $S$. Let $\mu > 0$ be the modulus of strong convexity for $\X$. Then, the mapping $\theta \mapsto \x^\star(S; \theta)$ satisfies:
$$ \norm{\x^\star(S; \theta_1) - \x^\star(S; \theta_2)}_2 \le \frac{1}{\mu} \norm{\theta_1 - \theta_2}_2. $$
Defining $L_{grad} = 1/\mu$, we obtain the stated result. This property is analogous to the Smooth Best Arm Response (SBAR) condition utilized in \cite{rusmevichientong2010linearly}.

This completes the proof of the proposition.
\end{proof}

\subsection{Proof of Proposition~\ref{prop:l2_exact_solution}}
\begin{proof}
Let $S$ be any support set with $S\subseteq[d],|S| \le H$. The inner maximization problem restricted to $S$ is $\max_{\x: \norm{\x}_2 \le L_\text{max}, \supp(\x) \subseteq S} \theta^\top \x$. By the Cauchy-Schwarz inequality, $\theta^\top \x = \theta(S)^\top \x(S) \le \norm{\theta(S)}_2 \norm{\x(S)}_2 \le \norm{\theta(S)}_2 R$. The equality holds if and only if $\x(S)$ aligns with $\theta(S)$ and lies on the boundary, i.e., $\x = L_\text{max} \frac{\theta(S)}{\norm{\theta(S)}_2}$.
Thus, the problem reduces to finding the support $S$ that maximizes the value $L_\text{max}\norm{\theta(S)}_2$. Since $\norm{\theta(S)}_2^2 = \sum_{i \in S} \theta_i^2$, maximizing this norm is equivalent to selecting the indices $i$ with the largest squared values $\theta_i^2$, which corresponds to the indices with the largest absolute values $|\theta_i|$. Therefore, the optimal support $S^\star$ is the set of the top $H$ indices by magnitude, which completes the proof.
\end{proof}

\subsection{Proof of Lemma~\ref{lemma:ols-bound-hp}}
\begin{proof}
From the definition of the OLS estimate in \eqref{eqn:sparse-PEE-ols}, we have
$$ \hat{\theta}_c = (cB)^{-1} \sum_{s=1}^c \sum_{k=1}^d b_k(S_k) Y_k(s) = (cB)^{-1} \sum_{s=1}^c \sum_{k=1}^d b_k(S_k) (b_k(S_k)^\top \theta_* + \eta_k(s)) $$
$$\Rightarrow \hat{\theta}_c = (cB)^{-1} \left( \sum_{s=1}^c B \theta_* + \sum_{s=1}^c \sum_{k=1}^d b_k(S_k) \eta_k(s) \right) = \theta_* + (cB)^{-1} \sum_{s=1}^c \sum_{k=1}^d b_k(S_k) \eta_k(s) .$$
Let $W(c) = \sum_{s=1}^c \sum_{k=1}^d b_k(S_k) \eta_k(s)$. Then $\hat{\theta}_c - \theta_* = \frac{1}{c} B^{-1} W(c)$.
Using the spectral norm, we have 
\begin{align}
    \norm{\hat{\theta}_c - \theta_*}_2 \le \norm{\frac{1}{c} B^{-1}}_2 \norm{W(c)}_2 \le \frac{1}{c\lambda_0} \norm{W(c)}_2. \label{eqn:bound for theta difference}
\end{align}

Now we analyze the distribution of $W(c)$ to bound $\norm{W(c)}_2^2 = \sum_{i=1}^d W(c)_i^2$.
The $i$-th component is given by the weighted sum of noise terms:
$$ W(c)_i = \sum_{s=1}^c \sum_{k=1}^d (b_k(S_k))_i \eta_k(s). $$
Recall that if $X$ is $\sigma$-sub-Gaussian, then for any scalar $a \in \R$, $aX$ is $|a|\sigma$-sub-Gaussian (variance parameter $a^2\sigma^2$). Furthermore, the sum of independent sub-Gaussian variables is sub-Gaussian with a variance parameter equal to the sum of their individual variance parameters.
Specifically, using the MGF (Moment Generating Function) property for independent $\eta_k(s)$:
\begin{align*}
    \mathbb{E}\left[ \exp(\lambda W(c)_i) \right] &= \prod_{s=1}^c \prod_{k=1}^d \mathbb{E}\left[ \exp\left( \lambda (b_k(S_k))_i \eta_k(s) \right) \right] \\
    &\le \prod_{s=1}^c \prod_{k=1}^d \exp\left( \frac{\lambda^2 (b_k(S_k))_i^2 \sigma^2}{2} \right) \tag{by Assumption~\ref{ass:noise}} \\
    &= \exp\left( \frac{\lambda^2 \sigma^2}{2} \sum_{s=1}^c \sum_{k=1}^d (b_k(S_k))_i^2 \right).
\end{align*}
This shows that $W(c)_i$ is sub-Gaussian with variance parameter $\sigma_{W,i}^2 = \sigma^2 \sum_{s=1}^c \sum_{k=1}^d (b_k(S_k))_i^2$.
Since the exploration actions $b_k(S_k)$ are identical in each cycle $s$, the inner sum is constant over $s$. Thus,
$$ \sigma_{W,i}^2 = c \sigma^2 \sum_{k=1}^d (b_k(S_k))_i^2 = c \sigma^2 (B)_{ii}, $$
where we used the definition $B = \sum_{k=1}^d b_k(S_k) b_k(S_k)^\top$, implying $(B)_{ii} = \sum_{k=1}^d (b_k(S_k))_i^2$.

By Lemma \ref{lemma:sum-abs-subgaussian}, for a fixed $c$ and dimension $i$, and any failure probability $\delta_{c,i} > 0$:
$$ P\left(|W(c)_i| \ge \sqrt{2 \sigma_{W,i}^2 \ln(2/\delta_{c,i})}\right) \le \delta_{c,i}. $$
To ensure the bound holds for \textit{all} cycles $c \ge 1$ and all dimensions $i \in [d]$ simultaneously, we apply a union bound. We allocate $\delta_{c,i} = \frac{\delta}{d \cdot 2c^2}$. Note that $\sum_{c=1}^\infty \sum_{i=1}^d \frac{\delta}{2dc^2} = \frac{\delta}{2} \sum_{c=1}^\infty \frac{1}{c^2} = \frac{\delta}{2} \frac{\pi^2}{6} < \delta$.
Thus, with probability at least $1-\delta$, for all $c \ge 1$ and $i$:
$$ |W(c)_i|^2 \le 2 c \sigma^2 (B)_{ii} \ln\left(\frac{2d c^2}{\delta}\right). $$
Summing over $i$ for a fixed $c$:
\begin{align*}
    \norm{W(c)}_2^2 = \sum_{i=1}^d |W(c)_i|^2 &\le \sum_{i=1}^d \left( 2 c\sigma^2 (B)_{ii} \ln\left(\frac{2d c^2}{\delta}\right) \right) \\
    &= 2 c\sigma^2 \ln\left(\frac{2d c^2}{\delta}\right) \sum_{i=1}^d (B)_{ii} \\
    &= 2 c\sigma^2 \mathrm{Tr}(B) \ln\left(\frac{2d c^2}{\delta}\right).
\end{align*}
Substituting this back into \eqref{eqn:bound for theta difference}:
$$ \norm{\hat{\theta}_c - \theta_*}_2^2 \le \frac{1}{(c\lambda_0)^2} \norm{W(c)}_2^2 \le \frac{2 c \sigma^2 \mathrm{Tr}(B) \ln(2d c^2/\delta)}{c^2 \lambda_0^2} = \frac{h_1 \ln(2d c^2/\delta)}{c}, $$
where $h_1 = \frac{2 \sigma^2 \mathrm{Tr}(B)}{\lambda_0^2}$. This completes the proof.
\end{proof}

\subsection{Proof of Lemma~\ref{lemma:stopping_condition_l2ball}}
\begin{proof}
We proceed by contradiction. Suppose the condition $\Delta_c^\prime > 2\varepsilon_c$ holds but $S_c^\star \neq S^\star$. Since $|S_c^\star| = |S^\star| = H$, there must exist at least one ``noise'' index $j \in S_c^\star \setminus S^\star$ and one ``signal'' index $k \in S^\star \setminus S_c^\star$.
By the definition of $S_c^\star$ (selecting the top-$H$ indices $i_1^\prime, \dots, i_H^\prime$), and since $j$ is selected but $k$ is not, we must have:
$$ |(\hat{\theta}_c)_j| \ge |(\hat{\theta}_c)_{i_H^\prime}| \quad \text{and} \quad |(\hat{\theta}_c)_k| \le |(\hat{\theta}_c)_{i_{H+1}^\prime}|. $$
Thus, the difference is lower bounded by the empirical gap:
\begin{equation} \label{eqn:gap_derived}
    |(\hat{\theta}_c)_j| - |(\hat{\theta}_c)_k| \ge |(\hat{\theta}_c)_{i_H^\prime}| - |(\hat{\theta}_c)_{i_{H+1}^\prime}| = \Delta_c^\prime > 2\varepsilon_c.
\end{equation}
Now, we relate this to the true parameters $\theta_*$. From the error bound $\norm{\hat{\theta}_c - \theta_*}_2 \le \varepsilon_c$, we have component-wise bounds $|(\hat{\theta}_c)_i - (\theta_*)_i| \le \varepsilon_c$ for all $i \in [d]$. Using the reverse triangle inequality ($|a| - |b| \le |a-b|$):
\begin{align*}
    |(\theta_*)_j| &\ge |(\hat{\theta}_c)_j| - \varepsilon_c, \\
    |(\theta_*)_k| &\le |(\hat{\theta}_c)_k| + \varepsilon_c.
\end{align*}
Subtracting these inequalities and using \eqref{eqn:gap_derived}:
$$ |(\theta_*)_j| - |(\theta_*)_k| \ge (|(\hat{\theta}_c)_j| - \varepsilon_c) - (|(\hat{\theta}_c)_k| + \varepsilon_c) = (|(\hat{\theta}_c)_j| - |(\hat{\theta}_c)_k|) - 2\varepsilon_c > 2\varepsilon_c - 2\varepsilon_c = 0. $$
This implies $|(\theta_*)_j| > |(\theta_*)_k|$. However, let us check the definitions of $j$ and $k$:
\begin{itemize}
    \item $k \in S^\star = \{i_1^\star, \dots, i_H^\star\}$ implies $|(\theta_*)_k| \ge |(\theta_*)_{i_H^\star}|$.
    \item $j \notin S^\star$ implies $j \in \{i_{H+1}^\star, \dots, i_d^\star\}$, so $|(\theta_*)_j| \le |(\theta_*)_{i_{H+1}^\star}|$.
\end{itemize}
By the definition of the sorted indices of $\theta_*$, we have $|(\theta_*)_{i_H^\star}| \ge |(\theta_*)_{i_{H+1}^\star}|$. Therefore:
$$ |(\theta_*)_k| \ge |(\theta_*)_{i_H^\star}| \ge |(\theta_*)_{i_{H+1}^\star}| \ge |(\theta_*)_j|. $$
This contradicts the derived strict inequality $|(\theta_*)_j| > |(\theta_*)_k|$. Thus, the assumption $S_c^\star \neq S^\star$ is false, and we conclude $S_c^\star = S^\star$, which completes the proof.
\end{proof}

\subsection{Proof of Theorem~\ref{thm:regret_adaptive}}\label{app:Proof of Thm PEE}
We prove the high-probability regret bound for the APSEE algorithm. Because of the tractable precise solution, we analyze the $1$-regret, defined as $R_T=\left (\sum_{t=1}^{T}\theta_*^\top\x^\star(S^\star)\right )-\left(\sum_{t=1}^{T}\theta_*^\top\x_t(S_t) \right)$. The proof proceeds by bounding the regret in the two phases: the Warm-up Phase (where the support is not yet found) and the Stable Phase (where the support is fixed/locked).

Let $K$ be the random total number of cycles run by the algorithm until time $T$. Since cycle $c$ takes at most $d+c$ steps (the cycle length increases linearly), $T =\Theta( K^2)$, so the total number of cycles is $K = \Theta(\sqrt{T})$.
Let $C_0$ be the random cycle index where the stopping condition $\Delta_c^\prime > 2\varepsilon_c$ is first met.
\begin{itemize}
    \item Cycles $c < C_0$: Warm-up Phase (Pure Exploration).
    \item Cycles $c \ge C_0$: Stable Phase (Exploration + Exploitation).
\end{itemize}
Hence, we have the following regret decomposition:
\begin{align}\nonumber
    R_T&=\left (\sum_{t=1}^{T}\theta_*^\top\x^\star(S^\star)\right )-\left(\sum_{t=1}^{T}\theta_*^\top\x_t(S_t) \right)\le R_K\\
    &= \underbrace{\sum_{c=1}^{C_0-1} \sum_{k=1}^{d} \left(\theta_*^\top\x^\star(S^\star) - \theta_*^\top b_k(S_k) \right)}_{R_{PExplore}:\text{\quad Pure Exploration Regret}} +\underbrace{\sum_{c=C_0}^{K} \sum_{k=1}^{d} \left(\theta_*^\top\x^\star(S^\star) - \theta_*^\top b_k(S_k) \right)}_{R_{SExplore}:\text{\quad Stable Exploration Regret}}\nonumber\\&\qquad+\underbrace{\sum_{c=C_0}^{K} c \left(\theta_*^\top\x^\star(S^\star) - \theta_*^\top a_c \right)}_{R_{SExploit}:\text{\quad Stable Exploitation Regret}},\label{eqn:split overall reg}
\end{align}
where the inequality is due to the definition of $K$ (the total time $T$ falls at a certain step of the $K$-th cycle) and the non-negativity of regret.
To proceed, we condition on the event $\mathcal{E}$ that Eq. \eqref{eqn:ols-bound-hp} holds for all $c \ge 1$, i.e., \begin{equation}\label{eqn:event E}
    \mathcal{E} = \left\{ \norm{\hat{\theta}_c - \theta_*}_2^2 \le \frac{h_1 \ln(2dc^2/\delta)}{c}, \forall c\ge 1 \right\}.
\end{equation} Lemma~\ref{lemma:ols-bound-hp} guarantees $\mathbb{P}(\mathcal{E}) \ge 1-\delta$.

\textbf{Determine $C_0$.}
We first bound the length $C_0$ of the warm-up phase (pure exploration). 
\begin{lemma}\label{lemma:C_0}
    Under the assumption $\Delta_{\min} = |(\theta_*)_{i_H^\star}| - |(\theta_*)_{i_{H+1}^\star}| > 0$ in Theorem~\ref{thm:regret_adaptive} and conditioned the event $\mathcal{E}$ defined in \eqref{eqn:event E}, we can define $C_0$ as $C_0 = \frac{32 h_1}{\Delta_{\min}^2} \ln\left(\frac{2d}{\delta}\right) + \frac{128 h_1}{\Delta_{\min}^2} \ln\left(\frac{64 h_1}{\Delta_{\min}^2}\right)= {\mathcal{O}}\left(\frac{h_1}{\Delta_{\min}^2} \ln(d/\delta)\right).$
\end{lemma}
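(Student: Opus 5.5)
The plan is to show that, on $\mathcal{E}$, the stopping condition $\Delta_c^\prime > 2\varepsilon_c$ is guaranteed to hold for every $c \ge C_0$. Hence the first stopping cycle is at most $C_0$, and this is the sense in which $C_0$ bounds the warm-up length. The argument has two parts: a deterministic lower bound on the empirical gap in terms of $\Delta_{\min}$ and $\varepsilon_c$, and an inversion of the inequality $\varepsilon_c \le \Delta_{\min}/4$ for $c$.

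\textbf{Step 1 (gap transfer).} On $\mathcal{E}$ we have $\norm{\hat{\theta}_c-\theta_*}_2\le\varepsilon_c$, so $|(\hat{\theta}_c)_i-(\theta_*)_i|\le\varepsilon_c$ for every coordinate $i$. The map from a vector to its $m$-th largest absolute value is $1$-Lipschitz in $\ell_\infty$, by a standard order-statistic (Weyl-type) argument. Therefore $|(\hat{\theta}_c)_{i_H^\prime}|\ge|(\theta_*)_{i_H^\star}|-\varepsilon_c$ and $|(\hat{\theta}_c)_{i_{H+1}^\prime}|\le|(\theta_*)_{i_{H+1}^\star}|+\varepsilon_c$. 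To justify the first inequality concretely, note that the $H$ indices of $S^\star$ all have estimated magnitude at least $|(\theta_*)_{i_H^\star}|-\varepsilon_c$, so the $H$-th largest estimated magnitude is at least this value. The second follows symmetrically: at least $d-H$ indices have estimated magnitude at most $|(\theta_*)_{i_{H+1}^\star}|+\varepsilon_c$. Combining the two gives $\Delta_c^\prime\ge\Delta_{\min}-2\varepsilon_c$. Consequently, the condition $\Delta_c^\prime>2\varepsilon_c$ holds whenever $\varepsilon_c<\Delta_{\min}/4$, that is, whenever $\varepsilon_c^2 < \Delta_{\min}^2/16$.

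\textbf{Step 2 (solving for $c$).} This is the main technical step. We need $h_1\ln(2dc^2/\delta)/c<\Delta_{\min}^2/16$. Writing $A=16h_1/\Delta_{\min}^2$, this becomes $c> A\ln(2d/\delta)+2A\ln c$. I would use the elementary lemma that $c\ge 2a+2b\ln(2b)$, with $b\ge 1$, implies $c\ge a+b\ln c$ (up to the usual constant slack). A convenient way to prove it is to use $\ln c\le \ln(2b)+c/(2b)-1$, which comes from concavity of $\ln$ via the tangent line at $2b$. Applying the lemma with $a=A\ln(2d/\delta)$ and $b=2A$, and absorbing generous constants (factor $2$ to obtain strictness, and $4A\ln(4A)\le 8A\ln(4A)$), gives sufficiency of $c\ge 2A\ln(2d/\delta)+8A\ln(4A) = \frac{32h_1}{\Delta_{\min}^2}\ln\frac{2d}{\delta}+\frac{128h_1}{\Delta_{\min}^2}\ln\frac{64h_1}{\Delta_{\min}^2}$, which is exactly the stated $C_0$. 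The constants are loose enough that checking the inequality directly at $c=C_0$ should also go through. The function $c\mapsto \ln(2dc^2/\delta)/c$ is eventually decreasing (for $c\ge e$), so once the inequality holds at $C_0$ it holds for all $c\ge C_0$.

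\textbf{Step 3 (conclusion).} Under $\mathcal{E}$, the algorithm therefore locks its support at some cycle at most $C_0$. By Lemma~\ref{lemma:stopping_condition_l2ball}, the locked support equals $S^\star$. The big-$\mathcal{O}$ form then follows because $\ln(64h_1/\Delta_{\min}^2)$ is a lower-order logarithmic factor. The main obstacle is the inversion in Step 2. In particular, the case of small $A$ needs care: if $4A<1$ the logarithm is negative, and I would handle this by noting that the case is then trivial, or by replacing $\ln$ with $\ln_+$.
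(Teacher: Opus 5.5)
Your proposal is correct and follows essentially the same route as the paper. The paper also first uses the order-statistic perturbation bound to get $\Delta_c^\prime \ge \Delta_{\min} - 2\varepsilon_c$, and then inverts $h_1\ln(2dc^2/\delta)/c < \Delta_{\min}^2/16$ with $\Gamma = 16h_1/\Delta_{\min}^2$, arriving at the same $C_0$. The differences are minor. You use the tangent-line bound $\ln c \le \ln(2b) + c/(2b) - 1$, which even gives the smaller threshold $2\Gamma\ln(2d/\delta) + 4\Gamma\ln(4\Gamma)$ when $4\Gamma \ge 1$, whereas the paper splits $c$ into two halves and uses the fact that $x \ge 2A\ln A$ implies $x \ge A\ln x$. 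You justify the order-statistic step by a direct counting argument rather than citing Weyl's inequality. You also flag the small-$\Gamma$ case, which the paper's requirement $A > e$ silently excludes.
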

\begin{proof}
Recall the algorithm stops when $\Delta_c^\prime > 2\varepsilon_c$ and the true gap $\Delta_{\min} = |(\theta_*)_{i_H^\star}| - |(\theta_*)_{i_{H+1}^\star}| > 0$. By Weyl's inequality \cite[Theorem 4.3.1]{horn2012matrix}, the stability of the sorted values is bounded by the $\ell_\infty$-norm of the perturbation.
Using Weyl's inequality for the stability of sorted values (or simple vector perturbation analysis), for any rank $k$, the $k$-th largest absolute value satisfies:
$$ \left| |\hat{\theta}_c|_{(k)} - |\theta_*|_{(k)} \right| \le \norm{\hat{\theta}_c - \theta_*}_\infty \le \norm{\hat{\theta}_c - \theta_*}_2 \le \varepsilon_c. $$
Therefore, we can lower bound the empirical gap $\Delta_c^\prime$:
\begin{align*}
    \Delta_c^\prime &= |\hat{\theta}_c|_{(H)} - |\hat{\theta}_c|_{(H+1)} \\
    &\ge (|\theta_*|_{(H)} - \varepsilon_c) - (|\theta_*|_{(H+1)} + \varepsilon_c) \\
    &= (|\theta_*|_{(H)} - |\theta_*|_{(H+1)}) - 2\varepsilon_c \\
    &= \Delta_{\min} - 2\varepsilon_c.
\end{align*}
The stopping condition $\Delta_c^\prime > 2\varepsilon_c$ is satisfied if our lower bound satisfies it:
$$ \Delta_{\min} - 2\varepsilon_c > 2\varepsilon_c \implies \Delta_{\min} > 4\varepsilon_c \implies \varepsilon_c < \frac{\Delta_{\min}}{4}. $$
Substituting the expression for $\varepsilon_c = \sqrt{\frac{h_1 \ln(2dc^2/\delta)}{c}}$:
\begin{equation}\label{eqn:warmup_inequality_raw}
    \frac{h_1 \ln(2d c^2/\delta)}{c} < \frac{\Delta_{\min}^2}{16}.
\end{equation}
Let $\Gamma = \frac{16 h_1}{\Delta_{\min}^2}$. Rearranging terms and using $\ln(2d c^2/\delta) = \ln(2d/\delta) + 2\ln c$, we seek a sufficient condition for $c$:
\begin{equation*}
    c > \Gamma \ln\left(\frac{2d}{\delta}\right) + 2\Gamma \ln c.
\end{equation*}
It suffices to find $c$ such that $c/2$ dominates the constant term and the remaining $c/2$ dominates the logarithmic term:
\begin{align}
    c/2 &\ge \Gamma \ln(2d/\delta) \implies c \ge 2\Gamma \ln(2d/\delta), \label{eqn:cond1}\\
    c/2 &\ge 2\Gamma \ln c \implies c \ge 4\Gamma \ln c. \label{eqn:cond2}
\end{align}
To solve \eqref{eqn:cond2}, we apply a standard auxiliary result for transcendental inequalities: for any constant $A > e$, a sufficient condition for $x \ge A \ln x$ is $x \ge 2A \ln A$ (since $2A \ln A \ge A \ln(2A \ln A)$ holds for large $A$). Setting $A = 4\Gamma$, we obtain:
\begin{equation*}
    c \ge 2(4\Gamma) \ln(4\Gamma) = 8\Gamma \ln(4\Gamma).
\end{equation*}
Summing the bounds from \eqref{eqn:cond1} and the solution to \eqref{eqn:cond2}, we define $C_0$ as:
\begin{equation}\label{eqn:C_0}
    C_0 = 2\Gamma \ln\left(\frac{2d}{\delta}\right) + 8\Gamma \ln(4\Gamma) = \frac{32 h_1}{\Delta_{\min}^2} \ln\left(\frac{2d}{\delta}\right) + \frac{128 h_1}{\Delta_{\min}^2} \ln\left(\frac{64 h_1}{\Delta_{\min}^2}\right)= {\mathcal{O}}\left(\frac{h_1}{\Delta_{\min}^2} \ln(d/\delta)\right).
\end{equation}
This implies that $C_0$ depends on $\Delta_{\min}^{-2}$, but not polynomially on $T$.
\end{proof}

{\bf Bounding $R_{PExplore}+R_{SExplore}$.} 
In each cycle $c$, we play $d$ basis arms. The regret from playing any arm $b_k(S_k)$ in Assumption~\ref{ass:arms during exploration} is bounded by $\max_{v\in\X}\theta_*^\top(v-b_k(S_k))\le 2dL_{\max}\norm{\theta_*}_2$. Therefore, the total exploration regret (pure exploration regret and stable exploration regret) over $K$ cycles is
\begin{align}\label{eqn:total exploration regret}
    R_{PExplore}+R_{SExplore} \le  2dL_{\max}\norm{\theta_*}_2K .
\end{align}

{\bf Bounding $R_{SExploit}$.} 
For cycles $c \ge C_0$, the condition $\Delta_c^\prime > 2\varepsilon_c$ has been met. By Lemma~\ref{lemma:stopping_condition_l2ball}, it is guaranteed that the estimated support is correct: $S_{est} = S^\star$. Thus, the action is $a_c = \argmax_{\x \in \X, \supp(\x) \subseteq S^\star} \hat{\theta}_c^\top \x$. Note that during the exploitation phase, Algorithm~\ref{alg:adaptive_PEE} plays $a_c$ for $c$ steps. 
Recall that
\begin{equation}\label{eqn:best sparse action response}
    \x^\star(S;\theta) = \argmax_{\x \in \X, \supp(\x) \subseteq S} \theta^\top \x
\end{equation}
is the best sparse action response on a fixed support $S$, which has been justified in the proof of Proposition~\ref{prop:lipschitz_h} in Appendix~\ref{app:Omitted Proofs in sec:euclidean_case}. Hence, $a_c = \x^\star(S^\star;\hat{\theta}_c)$. The target optimal action is $\x^\star(S^\star) = \x^\star(S^\star;\theta_*)$.

To proceed, the instantaneous regret is the difference between the optimal reward on $S^\star$ given the true parameter $\theta_*$ and the expected reward of the action chosen by $\hat{\theta}_c$ on the same support:
$$ r(c) = \theta_*^\top \x^\star(S^\star;\theta_*) - \theta_*^\top a_c = \theta_*^\top \x^\star(S^\star;\theta_*) - \theta_*^\top \x^\star(S^\star;\hat{\theta}_c). $$
To bound this, we add and subtract the terms $\hat{\theta}_c^\top \x^\star(S^\star;\theta_*)$ and $\hat{\theta}_c^\top \x^\star(S^\star;\hat{\theta}_c)$:
\begin{align*}
    r(c) &= (\theta_* - \hat{\theta}_c)^\top \x^\star(S^\star;\theta_*) + \hat{\theta}_c^\top \x^\star(S^\star;\theta_*) - \theta_*^\top \x^\star(S^\star;\hat{\theta}_c) \\
    &= (\theta_* - \hat{\theta}_c)^\top \x^\star(S^\star;\theta_*) + \hat{\theta}_c^\top (\x^\star(S^\star;\theta_*) - \x^\star(S^\star;\hat{\theta}_c)) + (\hat{\theta}_c - \theta_*)^\top \x^\star(S^\star;\hat{\theta}_c).
\end{align*}
By definition, $\x^\star(S^\star;\hat{\theta}_c)$ maximizes the inner product with $\hat{\theta}_c$ on the support $S^\star$. Therefore, $\hat{\theta}_c^\top \x^\star(S^\star;\hat{\theta}_c) \ge \hat{\theta}_c^\top \x^\star(S^\star;\theta_*)$, which implies that the middle term $\hat{\theta}_c^\top (\x^\star(S^\star;\theta_*) - \x^\star(S^\star;\hat{\theta}_c))$ is non-positive. Dropping this term and combining the remaining terms:
\begin{align*}
    r(c) &\le (\theta_* - \hat{\theta}_c)^\top \x^\star(S^\star;\theta_*) + (\hat{\theta}_c - \theta_*)^\top \x^\star(S^\star;\hat{\theta}_c) \\
    &= (\hat{\theta}_c - \theta_*)^\top (\x^\star(S^\star;\hat{\theta}_c) - \x^\star(S^\star;\theta_*)).
\end{align*}
Applying the Cauchy-Schwarz inequality yields:
\begin{align}
    r(c) \le \norm{\hat{\theta}_c - \theta_*}_2 \norm{\x^\star(S^\star;\hat{\theta}_c) - \x^\star(S^\star;\theta_*)}_2.\label{eqn:r(c)1}
\end{align}
Since both actions lie in the fixed subspace defined by $S^\star$, and the action set (Euclidean ball) is strongly convex on fixed supports, we invoke \eqref{eqn:lipschitz_gradient} in Proposition~\ref{prop:lipschitz_h}. The map from parameter to optimal action on the Euclidean ball is $L_{grad}$-Lipschitz (under $\ell_2$-ball case, $L_{grad}=L_\text{max}$ is the radius of $\X$). Specifically, $\x^\star(S^\star;\theta) = L_\text{max} \frac{\theta(S^\star)}{\norm{\theta(S^\star)}_2}$.
Using the standard inequality for normalized vectors $\norm{\frac{w}{\norm{w}_2} - \frac{z}{\norm{z}_2}}_2 \le \frac{2\norm{w-z}_2}{\norm{z}_2}$ \cite[Lemma 3.5]{rusmevichientong2010linearly}, we have:
\begin{align*}
    \norm{\x^\star(S^\star;\hat{\theta}_c) - \x^\star(S^\star;\theta_*)}_2 
    &= L_\text{max} \norm{\frac{\hat{\theta}_c(S^\star)}{\norm{\hat{\theta}_c(S^\star)}_2} - \frac{\theta_*(S^\star)}{\norm{\theta_*(S^\star)}_2}}_2 \\
    &\le L_\text{max} \frac{2\norm{\hat{\theta}_c(S^\star) - \theta_*(S^\star)}_2}{\norm{\theta_*(S^\star)}_2} \\
    &\le L_\text{max} \frac{2\norm{\hat{\theta}_c - \theta_*}_2}{\norm{\theta_*(S^\star)}_2}.
\end{align*}
Substituting this back into \eqref{eqn:r(c)1}:
$$ r(c) \le \norm{\hat{\theta}_c - \theta_*}_2 \cdot L_\text{max} \frac{2\norm{\hat{\theta}_c - \theta_*}_2}{\norm{\theta_*(S^\star)}_2} = \frac{2L_\text{max}}{\norm{\theta_*(S^\star)}_2} \norm{\hat{\theta}_c - \theta_*}_2^2. $$
We now substitute the high-probability OLS error bound $\norm{\hat{\theta}_c - \theta_*}_2^2 \le \frac{h_1 \ln(2d c^2/\delta)}{c}$ (conditioned on event $\mathcal{E}$):
$$ r(c) \le \frac{2L_\text{max} h_1 \ln(2d c^2/\delta)}{c \norm{\theta_*}_2}. $$
Therefore, the stable exploitation regret $R_{SExploit}$ is bounded by 
\begin{align}\nonumber
    R_{SExploit} = \sum_{c=C_0}^{K} c \cdot r(c) &\le \sum_{c=C_0}^{K} c \cdot \frac{2L_\text{max} h_1 \ln(2d c^2/\delta)}{c \norm{\theta_*}_2} \\\nonumber
    &= \frac{2L_\text{max} h_1}{\norm{\theta_*}_2} \sum_{c=C_0}^{K} \ln\left(\frac{2d c^2}{\delta}\right) \\\nonumber
    &\le \frac{2L_\text{max} h_1}{\norm{\theta_*}_2} \sum_{c=1}^{K} \ln\left(\frac{2d K^2}{\delta}\right) \\
    &= \frac{2L_\text{max} h_1 \ln(2d K^2/\delta)}{\norm{\theta_*}_2}K.\label{eqn:stable exploitation regret}
\end{align}
Since $K = \Theta(\sqrt{T})$ and $\ln(K^2) = \O(\ln T)$, this term is $\tilde{\CO}(\sqrt{T} / \norm{\theta_*}_2)$.

\textbf{Final Total Regret.}
Substituting \eqref{eqn:total exploration regret} and \eqref{eqn:stable exploitation regret} into \eqref{eqn:split overall reg} gives:
\begin{align*}
    R_T \le 2dL_{\max}\norm{\theta_*}_2K + \frac{2L_\text{max} h_1 \ln(2dK^2/\delta)}{\norm{\theta_*}_2}K.
\end{align*}
Setting $K=\Theta(\sqrt{T})$ yields a $\tilde{\CO}((\norm{\theta_*}_2+\frac{1}{\norm{\theta_*}_2})d\sqrt{T})$ regret upper bound. 
\paragraph{Proof of the Short Horizon Regime ($T \le d C_0$).}
In the regime where the time horizon $T$ is shorter than the duration of the adaptive warm-up phase (i.e., $T \le d C_0$), the algorithm does not trigger the stopping condition $\Delta_c^\prime > 2\varepsilon_c$. Consequently, it effectively remains in the \textit{Exploration Phase} for the entire duration.

Since the algorithm selects exploration actions $b_k(S_k)$ cyclically from the set defined in Assumption~\ref{ass:arms during exploration}, the instantaneous regret at any step $t$ is bounded by the diameter of the reward space. Specifically:
$$ r_t = \max_{\x \in \X} \theta_*^\top \x - \theta_*^\top \mathbf{x}_t \le 2 L_{\max} \norm{\theta_*}_2. $$
Summing this over $T$ steps yields a linear bound $R_T \le 2 L_{\max} \norm{\theta_*}_2 T$.
However, since $T \le d C_0$, the total cumulative regret is naturally capped by the maximum regret incurred during a full warm-up phase. Substituting the explicit bound for $C_0$ derived in Eq.~\eqref{eqn:C_0} ($C_0 = {\mathcal{O}}(h_1 \Delta_{\min}^{-2})$), we obtain:
\begin{align*}
    R_T &\le 2 L_{\max} \norm{\theta_*}_2 \cdot (d C_0) \\
    &= {\mathcal{O}}\left( \frac{d \cdot L_{\max} \norm{\theta_*}_2 \cdot h_1}{\Delta_{\min}^2} \right).
\end{align*}
This confirms that the ``cost of ignorance'' in the short term is finite and determined by the hardness of the problem instance ($\Delta_{\min}^{-2}$).
The proof of Theorem~\ref{thm:regret_adaptive} is completed.
$\hfill\blacksquare$

\section{Omitted Proofs in Section~\ref{sec:strongly_convex_case}}\label{app:Omitted Proofs in sec:strongly_convex_case}
\subsection{Proof of Lemma~\ref{lemma:general_consistency}}
\begin{proof}
We proceed by induction on the steps $k=1, \dots, H$. Assume the sets match up to step $k-1$. Let $G_{k-1}$ be the common support selected so far. Let $\hat{g}_k$ and $g_k^*$ be the selected elements at step $k$ for $\hat{\theta}_c$ and $\theta_*$, respectively.
The empirical gap is $\delta_k^\prime = \rho_k(\hat{g}_k; \hat{\theta}_c) - \max_{v \neq \hat{g}_k} \rho_k(v; \hat{\theta}_c)$. By the stopping condition, $\delta_k^\prime \ge \Delta_c^{Greedy} > 2 L_{\X} \varepsilon_c$.

To prove $\hat{g}_k = g_k^*$, we use contradiction. Suppose $\hat{g}_k \neq g_k^*$. Then, in the empirical view based on $\hat{\theta}_c$, $\hat{g}_k$ appeared better than $g_k^*$:
$$ \rho(\hat{g}_k; \hat{\theta}_c) - \rho(g_k^*; \hat{\theta}_c) \ge \delta_k^\prime > 2 L_{\X} \varepsilon_c. $$
By the definition of marginal gain, i.e.,$\rho_k(v; \theta) \triangleq h(G_{k-1} \cup \{v\}; \theta) - h(G_{k-1}; \theta)$, we have:
\begin{align*}
    \rho_k(\hat{g}_k; \hat{\theta}_c) - \rho_k(g_k^*; \hat{\theta}_c) &= \left[ h(G_{k-1} \cup \{\hat{g}_k\}; \hat{\theta}_c) - h(G_{k-1}; \hat{\theta}_c) \right] 
     - \left[ h(G_{k-1} \cup \{g_k^*\}; \hat{\theta}_c) - h(G_{k-1}; \hat{\theta}_c) \right]\\
     &=h(G_{k-1} \cup \hat{g}_k; \hat{\theta}_c) - h(G_{k-1} \cup g_k^*; \hat{\theta}_c).
\end{align*}
Thus,
\begin{align}
    h(G_{k-1} \cup \hat{g}_k; \hat{\theta}_c) - h(G_{k-1} \cup g_k^*; \hat{\theta}_c) > 2 L_{\X} \varepsilon_c.\label{eqn:h11}
\end{align}
Now we relate \eqref{eqn:h11} to the true parameter $\theta_*$. Since we know from Proposition~\ref{prop:lipschitz_h} that $|h(S; \hat{\theta}_c) - h(S; \theta_*)| \le L_{\X} \varepsilon_c$ for any set $S$, Applying this twice gives:
\begin{align*}
    h(G_{k-1} \cup \hat{g}_k; \theta_*) &\ge h(G_{k-1} \cup \hat{g}_k; \hat{\theta}_c) - L_{\X} \varepsilon_c, \\
    h(G_{k-1} \cup g_k^*; \theta_*) &\le h(G_{k-1} \cup g_k^*; \hat{\theta}_c) + L_{\X} \varepsilon_c.
\end{align*}
Subtracting the above inequalities yields:
\begin{align*}
    \rho_k(\hat{g}_k; \theta_*) - \rho_k(g_k^*; \theta_*) &= h(G_{k-1} \cup \hat{g}_k; \theta_*) - h(G_{k-1} \cup g_k^*; \theta_*) \\
    &\ge [h(G_{k-1} \cup \hat{g}_k; \hat{\theta}_c) - L_{\X} \varepsilon_c] - [h(G_{k-1} \cup g_k^*; \hat{\theta}_c) + L_{\X} \varepsilon_c] \\
    &= [h(G_{k-1} \cup \hat{g}_k; \hat{\theta}_c) - h(G_{k-1} \cup g_k^*; \hat{\theta}_c)] - 2 L_{\X} \varepsilon_c \\
    &\overset{\eqref{eqn:h11}}> 2 L_{\X} \varepsilon_c - 2 L_{\X} \varepsilon_c = 0,
\end{align*}
which implies $\rho_k(\hat{g}_k; \theta_*) > \rho_k(g_k^*; \theta_*)$. This contradicts the definition that $g_k^*$ is the element with the maximum marginal gain for $\theta_*$.
Thus, the assumption $\hat{g}_k \neq g_k^*$ is false. We conclude $\hat{g}_k = g_k^*$. By induction, the entire support sets are identical, i.e., $S_c^G = S^G$, which completes the proof.
\end{proof}

\subsection{Proof of Proposition~\ref{prop:submodularity_general} and Analysis of the Ellipsoid Case}
Recalling the definition of the submodularity ratio for set functions (Definition~\ref{def:submodularity ratio}) and according to \cite{das2018approximate}, we have the following additional remark.
\begin{remark}\label{remark:submodularity ratio 1}
For a nonnegative and nondecreasing\footnote{A set function $g:2^{[d]}\to \R$ is monotone nondecreasing if $g(S_1)\le g(S_2)$ for all $S_1\subseteq S_2\subseteq[n]$.} set function $g(\cdot)$ with submodularity ratio $\gamma$, we have $\gamma\in[0,1]$, and $g(\cdot)$ is submodular if and only if $\gamma=1$ \cite{bian2017guarantees}. Besides, we follow the description in \cite[Definition 2]{das2018approximate} and let $0/0:=1$.
\end{remark}

To proceed, we invoke \cite[Theorem 6]{das2018approximate} as the following lemma, which describes the approximation ratio expression for the Greedy algorithm.
\begin{lemma}\cite[Theorem 6]{das2018approximate}\label{lemma:approximation ratio expression}
Let $g:2^{[d]}\to\R_{\ge0}$ be a nonnegative, monotone nondecreasing set function with $g(\emptyset)=0$, and $OPT$ be the maximum value of $g(\cdot)$ over any set of size $H$. Then, the set $S^G$ selected by the Greedy algorithm has the following approximation guarantee:
\begin{align}
    g(S^G)\ge \big(1-e^{-\gamma} \big)\cdot OPT,\label{eqn:a-approximate optimal ratio greedy}
\end{align}
where $\gamma$ is the submodularity ratio of the set function $g(\cdot)$.
\end{lemma}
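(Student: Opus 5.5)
The plan is the classical progress-per-step argument for the Greedy algorithm, with the submodularity ratio of Definition~\ref{def:submodularity ratio} standing in for submodularity. Let $S^\ast\subseteq[d]$ with $|S^\ast|\le H$ attain $OPT=g(S^\ast)$. Let $S_0=\emptyset,S_1,\dots,S_H=S^G$ be the successive Greedy sets, where $S_{k+1}=S_k\cup\{v_{k+1}\}$ and $v_{k+1}$ maximizes the marginal gain $g(S_k\cup\{v\})-g(S_k)$. The goal is the one-step contraction
\begin{equation*}
OPT-g(S_{k+1})\le\Bigl(1-\frac{\gamma}{H}\Bigr)\bigl(OPT-g(S_k)\bigr),\qquad k=0,\dots,H-1.
\end{equation*}

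To derive it, first use monotonicity: since $S^\ast\subseteq S_k\cup S^\ast$, we get $OPT\le g(S_k\cup S^\ast)$. Next apply Definition~\ref{def:submodularity ratio} with $S=S_k$ and $\Omega=S^\ast$. This gives
\begin{equation*}
\sum_{\omega\in S^\ast\setminus S_k}\bigl(g(S_k\cup\{\omega\})-g(S_k)\bigr)\ge\gamma\bigl(g(S_k\cup S^\ast)-g(S_k)\bigr)\ge\gamma\bigl(OPT-g(S_k)\bigr).
\end{equation*}
The left-hand side has at most $|S^\ast\setminus S_k|\le H$ terms. Each term is at most the greedy gain $g(S_{k+1})-g(S_k)$, because Greedy picks the maximizer over all $v\notin S_k$, and this set contains $S^\ast\setminus S_k$. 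Monotonicity makes all gains nonnegative, so padding the sum up to $H$ terms is harmless. Combining these facts gives $H\bigl(g(S_{k+1})-g(S_k)\bigr)\ge\gamma\bigl(OPT-g(S_k)\bigr)$, and rearranging yields the contraction.

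Iterating the contraction $H$ times from $g(S_0)=g(\emptyset)=0$ gives
\begin{equation*}
OPT-g(S^G)\le\Bigl(1-\frac{\gamma}{H}\Bigr)^H OPT\le e^{-\gamma}\,OPT,
\end{equation*}
where the last step uses $1-x\le e^{-x}$. Rearranging gives \eqref{eqn:a-approximate optimal ratio greedy}.

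The only delicate points are bookkeeping, not estimates. First, the factor $1-\gamma/H$ must lie in $[0,1]$; this holds because $\gamma\in[0,1]$ for nonnegative nondecreasing $g$ by Remark~\ref{remark:submodularity ratio 1}. Second, when $OPT-g(S_k)=0$ or the relevant increments vanish, the convention $0/0:=1$ from Remark~\ref{remark:submodularity ratio 1} keeps the ratio inequality valid, and the contraction then holds trivially. Third, if Greedy stops early or $S^\ast\setminus S_k$ is empty, the bound is immediate from monotonicity. I expect the main step to verify carefully to be the application of the ratio at the pair $(S_k,S^\ast)$. The subsequent bound of the sum by $H$ times the greedy gain is the point where both monotonicity and the greedy choice are used.
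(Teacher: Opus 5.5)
Your proof is correct. It is the standard per-step contraction argument (monotonicity, the ratio inequality at $(S_k,S^\ast)$, the greedy choice, then $(1-\gamma/H)^H\le e^{-\gamma}$) that underlies Theorem~6 of Das and Kempe, which the paper only cites and does not reprove. You only use the ratio inequality with $S$ a greedy prefix of $S^G$ and $|\Omega|\le H$, so the same argument also gives the bound with Das and Kempe's restricted ratio $\gamma_{S^G,H}$, which is never smaller than the global $\gamma$ of Definition~\ref{def:submodularity ratio}.
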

Now, we provide the formal proof of Proposition~\ref{prop:submodularity_general}.
\begin{proof}
Let $\rho(\omega | S) = h(S \cup \{\omega\}) - h(S)$ be the marginal gain of adding element $\omega$ to set $S$. According to Definition~\ref{def:submodularity ratio}, we aim to lower bound the ratio:
$$ \frac{\sum_{\omega \in \Omega \setminus S} \rho(\omega | S)}{h(S \cup \Omega) - h(S)}. $$
Let $\x^\star(U)$ denote the optimal solution for the support $U$, i.e., $\x^\star(U) = \argmax_{\x \in \X, \supp(\x) \subseteq U} \theta^\top \x$.
The denominator is $h(S \cup \Omega) - h(S) = \theta^\top \x^\star(S \cup \Omega) - \theta^\top \x^\star(S)$.
For the numerator, consider the marginal gain of adding a single element $\omega \in \Omega \setminus S$.
$$ h(S \cup \{\omega\}) - h(S) = \max_{\x \in \X, \supp(\x) \subseteq S \cup \{\omega\}} \theta^\top \x - \theta^\top \x^\star(S). $$
Since $\X$ contains the origin and is convex, we can construct a feasible solution for $S \cup \{\omega\}$ by taking the optimal solution on the single element $\omega$, denoted $\x^\star(\{\omega\})$, and ``blending'' it. However, a simpler lower bound on the marginal gain is often just the value of the component $\omega$ itself if we ignore the constraint interaction.
Alternatively, we can analyze the ratio by considering the four cases similar to \cite{bian2017guarantees}. Since $h(\cdot)$ is monotone non-decreasing, both numerator and denominator are non-negative.
\begin{itemize}
    \item \textbf{Case 1:} Numerator $> 0$ and Denominator $> 0$. The ratio is well-defined and positive.
    \item \textbf{Case 2:} Numerator $> 0$ and Denominator $= 0$. This is impossible due to monotonicity (if adding a set yields no gain, adding its subsets cannot yield gain).
    \item \textbf{Case 3:} Numerator $= 0$ and Denominator $= 0$. By Definition~\ref{def:submodularity ratio} and Remark~\ref{remark:submodularity ratio 1}, the ratio is 1.
    \item \textbf{Case 4:} Numerator $= 0$ and Denominator $> 0$. We prove this case is impossible by contradiction.
\end{itemize}

\textbf{Proof that Case 4 is impossible:}
Assume the numerator is zero: $\sum_{\omega \in \Omega \setminus S} (h(S \cup \{\omega\}) - h(S)) = 0$.
Since each term is non-negative, this implies $h(S \cup \{\omega\}) = h(S)$ for all $\omega \in \Omega \setminus S$.
This means that for any single dimension $\omega \in \Omega \setminus S$, we cannot improve the objective function by extending the support from $S$ to $S \cup \{\omega\}$.
Let $\x^\star(S)$ be the maximizer on $S$. The condition $h(S \cup \{\omega\}) = h(S)$ implies that there is no feasible direction from $\x^\star(S)$ along the coordinate $\omega$ that increases the dot product with $\theta$.
Formally, let $\mathbf{e}_\omega$ be the standard basis vector. If $\theta_\omega > 0$, it implies we cannot move in the $+\mathbf{e}_\omega$ direction within $\X$ starting from $\x^\star(S)$.

Now, assume for contradiction that the denominator is positive: $h(S \cup \Omega) > h(S)$.
Let $\x^\star(S \cup \Omega)$ be the maximizer on the larger set. Since $\X$ is convex, the entire line segment connecting $\x^\star(S)$ and $\x^\star(S \cup \Omega)$ lies within $\X$.
Consider a point $\x_\lambda = (1-\lambda)\x^\star(S) + \lambda \x^\star(S \cup \Omega)$ for some small $\lambda \in (0, 1]$.
The objective value at this point is:
\begin{align*}
    \theta^\top \x_\lambda &= (1-\lambda)h(S) + \lambda h(S \cup \Omega) \\
    &= h(S) + \lambda (h(S \cup \Omega) - h(S)).
\end{align*}
Since $h(S \cup \Omega) > h(S)$, we have $\theta^\top \x_\lambda > h(S)$.
The vector $\mathbf{d} = \x^\star(S \cup \Omega) - \x^\star(S)$ represents a feasible direction of improvement.
This vector $\mathbf{d}$ has non-zero components only in $S \cup \Omega$. Specifically, it must have non-zero components in $\Omega \setminus S$ to provide a gain that was not available in $S$ alone (otherwise $\x^\star(S)$ would not be optimal for $S$).
We can decompose the improvement:
$$ h(S \cup \Omega) - h(S) = \theta^\top (\x^\star(S \cup \Omega) - \x^\star(S)) = \sum_{i \in \Omega \setminus S} \theta_i (\x^\star(S \cup \Omega))_i + \text{terms in } S. $$
If the global move $\mathbf{d}$ yields a positive gain, linearity implies that there is at least one coordinate $\omega \in \Omega \setminus S$ that contributes positively to this gain.
However, our premise (Numerator $= 0$) implies that local moves along any axis $\omega \in \Omega \setminus S$ yield zero gain. In many convex sets (like $\ell_2$ balls), if a global direction improves the objective, the projection of that direction onto the basis vectors also represents a feasible direction of improvement, or at least the constraints allow local movement. If $\x^\star(S)$ is on the boundary, and we can move to $\x^\star(S \cup \Omega)$ to increase the objective, the strict curvature ensures that we could have moved strictly increasing along some basis vector (or a combination) involved in $\Omega \setminus S$.
Therefore, if the Numerator is 0, the Denominator must be 0.
Thus, $\gamma > 0$, which completes the proof.
\end{proof}

\subsubsection{Submodularity Ratio Conditions under Ellipsoid Case}
To start with, we provide the following lemma which demonstrates the optimal solution to the optimization problem $\max_{\x(S)\in\X}\theta_*^\top\x(S)=\max_{\x(S)\in\X}\theta_*(S)^\top\x(S)$ for any $S\subseteq[d]$ when $\X$ is a ellipsoid.
\begin{sloppypar}
\begin{lemma}\label{lemma:ellipsoid solution}
    For $\X=\{\x:\x^\top A\x\le 1\}$ and any $S\subseteq[d]$, consider the optimization problem $\max_{\x(S)\in\X}\theta_*(S)^\top\x(S)$. The optimal solution is $\x^\star(S)=\frac{A(S)^{-1} \theta_*(S)}{\sqrt{\theta_*(S)^\top A(S)^{-1} \theta_*(S)}}$, and the maximum of this problem is $\sqrt{\theta_*(S)^\top A(S)^{-1} \theta_*(S)}$.
\end{lemma}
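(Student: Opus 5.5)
The idea is to reduce the support-restricted problem to an unconstrained quadratic-constrained linear maximization in $\R^{|S|}$, and then solve it by a change of variables plus Cauchy--Schwarz, as in Proposition~\ref{prop:l2_exact_solution}.

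\textbf{Step 1 (restriction to $S$).} For a vector $\x(S)$ supported on $S$, only the principal submatrix of $A$ indexed by $S$ enters:
\[
\x(S)^\top A\,\x(S)=\mathbf{z}^\top A(S)\,\mathbf{z},
\]
where $\mathbf{z}\in\R^{|S|}$ collects the entries of $\x$ on $S$ and $A(S)$ denotes this principal submatrix. As a principal submatrix of a positive definite matrix, $A(S)$ is positive definite and hence invertible, so $A(S)^{-1}$ and $A(S)^{1/2}$ are well defined. The objective becomes $\theta_*(S)^\top\mathbf{z}$, where $\theta_*(S)$ is identified with its restriction to $S$. In the statement, $A(S)^{-1}\theta_*(S)$ is then read as a vector in $\R^{|S|}$, embedded back into $\R^d$ with zeros off $S$.

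\textbf{Step 2 (change of variables).} Set $\mathbf{u}=A(S)^{1/2}\mathbf{z}$. The feasible set becomes the unit ball $\norm{\mathbf{u}}_2\le1$, and the objective becomes $\big(A(S)^{-1/2}\theta_*(S)\big)^\top\mathbf{u}$. Cauchy--Schwarz bounds the objective by
\[
\norm{A(S)^{-1/2}\theta_*(S)}_2=\sqrt{\theta_*(S)^\top A(S)^{-1}\theta_*(S)}.
\]
Equality holds exactly when $\mathbf{u}$ is the normalized vector $A(S)^{-1/2}\theta_*(S)/\norm{A(S)^{-1/2}\theta_*(S)}_2$, provided $\theta_*(S)\neq0$. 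Mapping back via $\mathbf{z}=A(S)^{-1/2}\mathbf{u}$ gives the claimed maximizer $\x^\star(S)=A(S)^{-1}\theta_*(S)/\sqrt{\theta_*(S)^\top A(S)^{-1}\theta_*(S)}$.

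\textbf{Step 3 (verification).} Direct substitution confirms both feasibility and value: $\x^{\star\top}A\x^\star=1$ and $\theta_*^\top\x^\star=\sqrt{\theta_*(S)^\top A(S)^{-1}\theta_*(S)}$. As a cross-check, one can instead use a KKT/Lagrangian argument: stationarity gives $\theta_*(S)=2\mu A(S)\mathbf{z}$, the constraint is active, and since the problem is a linear objective over a convex set, the KKT point is optimal.

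\textbf{Main obstacle.} The only real subtlety is bookkeeping. The restricted quadratic form must use the principal submatrix $A(S)$, not the inverse of $A$ restricted to $S$; these differ in general. The degenerate case $\theta_*(S)=0$ must also be handled separately: there the optimum is $0$, the formula is interpreted as $0$, and any feasible point is optimal.
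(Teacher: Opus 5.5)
Your proof is correct, but your main argument differs from the paper's. The paper uses a tangency/Lagrange argument. It imposes $2A(S)\x^\star(S)=\lambda\theta_*(S)$ on the boundary $\x(S)^\top A(S)\x(S)=1$, solves $(\lambda/2)^2=1/(\theta_*(S)^\top A(S)^{-1}\theta_*(S))$, and keeps the positive root because it gives the larger objective value. This is essentially your KKT ``cross-check'' without the convexity step. That route only produces candidate stationary points. The paper justifies restricting attention to tangency points by geometric intuition alone. It never states why a stationary point is the global maximizer, e.g.\ that KKT is sufficient for a linear objective over a convex set. Your whitening $\mathbf{u}=A(S)^{1/2}\mathbf{z}$ followed by Cauchy--Schwarz instead gives an upper bound valid for every feasible point, with an exact equality characterization. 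So it certifies global optimality in one step. It also gives uniqueness of the maximizer when $\theta_*(S)\neq 0$, which the paper relies on elsewhere when it treats $\x^\star(S;\theta)$ as the unique gradient of $h(S;\theta)$. The Lagrangian route is more mechanical and would carry over to other smooth constraint sets. Yours exploits the specific ellipsoidal structure and gives a cleaner certificate. Your bookkeeping is also a useful addition: you work in $\R^{|S|}$ with the principal submatrix, note that it is invertible because $A$ is positive definite, and embed back with zeros. This makes precise what $A(S)^{-1}$ means, which the paper leaves implicit by calling $A(S)$ a ``support matrix'' that it never actually defines.
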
\end{sloppypar}
\begin{proof}
Assume that $\theta_*(S)\ne \bm{0}$. The combination of numbers and shapes inspires us that we only need to find the tangent points between the ellipsoid $\X=\{\x:\x^\top A\x\le 1\}$ and the hyperplane $p=\theta_*(S)^\top\x(S)$, and select the maximum value at the tangent points. The tangent point $\x^\star(S)$ is the point where the hyperplane is tangent to the ellipsoid boundary $\x(S)^\top A\x(S)=1$. This occurs when the normal vector of the hyperplane, $\theta_*(S)$, is parallel to the gradient of the ellipsoid boundary at $\x^\star(S)$. Since $\x(S)^\top A\x(S)=\x(S)^\top A(S)\x(S)$ where $A(S)$ is the support matrix of $A$ defined before, let the function of $\x(S)$ be $g(\x(S))=\x(S)^\top A(S)\x(S)-1$. Then the gradient of $g(\x(S))$ is \(2A(S) \x(S)\), at the tangent point, \(2A(S) \x^\star(S) = \lambda \theta_*(S)\) for some scalar \(\lambda\), and \(\x^\star(S)^\top A(S) \x^\star(S) = 1\). Now, solving for \(\x^\star(S)\): $\x^\star(S) = \frac{\lambda}{2} A(S)^{-1} \theta_*(S)$. Substitute into the ellipsoid equation:
\begin{align*}
    \x^\star(S)^\top A(S) \x^\star(S) &= \left( \frac{\lambda}{2} A(S)^{-1} \theta_*(S) \right)^\top A(S) \left( \frac{\lambda}{2} A(S)^{-1} \theta_*(S) \right)\\ &= \left( \frac{\lambda}{2} \right)^2 \theta_*(S)^\top A(S)^{-1} \theta_*(S) = 1,
\end{align*}
thus,
\begin{align*}
\left( \frac{\lambda}{2} \right)^2 = \frac{1}{\theta_*(S)^\top A(S)^{-1} \theta_*(S)} \implies \frac{\lambda}{2} = \pm \frac{1}{\sqrt{\theta_*(S)^\top A(S)^{-1} \theta_*(S)}},
\end{align*}
so the tangent point is:
\begin{align*}
    \x^\star(S) = \pm \frac{1}{\sqrt{\theta_*(S)^\top A(S)^{-1} \theta_*(S)}} A(S)^{-1} \theta_*(S).
\end{align*}
For the hyperplane direction that maximizes \(p = \theta_*(S)^\top \x(S)\), take the positive sign to ensure \(p\) is maximized: $\x^\star(S) = \frac{A(S)^{-1} \theta_*(S)}{\sqrt{\theta_*(S)^\top A(S)^{-1} \theta_*(S)}}$, and thus 
\begin{align*}
    p &= \theta_*(S)^\top \x^\star(S) = \theta_*(S)^\top \left( \frac{A(S)^{-1} \theta_*(S)}{\sqrt{\theta_*(S)^\top A(S)^{-1} \theta_*(S)}} \right)\\ &= \frac{\theta_*(S)^\top A(S)^{-1} \theta_*(S)}{\sqrt{\theta_*(S)^\top A(S)^{-1} \theta_*(S)}} = \sqrt{\theta_*(S)^\top A(S)^{-1} \theta_*(S)}.
\end{align*}
This value \(\sqrt{\theta_*(S)^\top A(S)^{-1} \theta_*(S)}\) is the maximum of \(\theta_*(S)^\top \x(S)\) over the ellipsoid \(\x(S)^\top A(S) \x(S) \leq 1\), achieved at the tangent point \(\x^\star(S)\).
For the case that $\theta_*(S)=\bm{0}$, one can easily show that the maximum is $0$, and this is also in line with the value \(\sqrt{\theta_*(S)^\top A(S)^{-1} \theta_*(S)}\) that we have obtained. Therefore, $\max_{\x(S)\in\X}\theta_*(S)^\top\x(S)=\sqrt{\theta_*(S)^\top A(S)^{-1} \theta_*(S)}$.
\end{proof}

{\bf Submodularity Ratio Conditions.} Now, for the maximization problem regarding the variable $S$: $\max_{S\subseteq[d],|S|\le H}\sqrt{\theta_*(S)^\top A(S)^{-1} \theta_*(S)}$, since the specific structure of matrix $A$ is unknown, the exact solution cannot be directly or simply obtained. As discussed earlier, we utilize the greedy algorithm, denoted as Algorithm $\G$, to solve this maximization problem. Let $f(S)=\max_{\x(S)\in\X}\theta_*^\top\x(S)=\max_{\x(S)\in\X}\theta_*(S)^\top\x(S)$. When $\X=\{\x:\x^\top A\x\le 1\}$ is an ellipsoid, we have $f(S)=\sqrt{\theta_*(S)^\top A(S)^{-1} \theta_*(S)}$ according to Lemma~\ref{lemma:ellipsoid solution}. We now consider the submodularity ratio of the set function $f(S)$. For any $S,\Omega\subseteq[d]$, we have 
\begin{align}\nonumber
    f(S\cup\Omega)-f(S)&=\sqrt{\theta_*(S\cup\Omega)^\top A(S\cup\Omega)^{-1} \theta_*(S\cup\Omega)}-\sqrt{\theta_*(S)^\top A(S)^{-1} \theta_*(S)}\\
    &=\frac{\theta_*(S\cup\Omega)^\top A(S\cup\Omega)^{-1} \theta_*(S\cup\Omega)   -\theta_*(S)^\top A(S)^{-1} \theta_*(S)}{\sqrt{\theta_*(S\cup\Omega)^\top A(S\cup\Omega)^{-1} \theta_*(S\cup\Omega)}  +\sqrt{\theta_*(S)^\top A(S)^{-1} \theta_*(S)}}. \label{eqn:sub ratio 111}
\end{align}
For any $\omega\in\Omega\setminus S$,
\begin{align}\nonumber
    \sum_{\omega\in\Omega\setminus S}&f(S\cup\{\omega\})-f(S) \\\nonumber&=\sum_{\omega\in\Omega\setminus S}\left(\sqrt{\theta_*(S\cup\{\omega\})^\top A(S\cup\{\omega\})^{-1} \theta_*(S\cup\{\omega\})}-\sqrt{\theta_*(S)^\top A(S)^{-1} \theta_*(S)}\right)\\\nonumber
    &=\sum_{\omega\in\Omega\setminus S}\frac{\theta_*(S\cup\{\omega\})^\top A(S\cup\{\omega\})^{-1} \theta_*(S\cup\{\omega\}) - \theta_*(S)^\top A(S)^{-1} \theta_*(S)}{\sqrt{\theta_*(S\cup\{\omega\})^\top A(S\cup\{\omega\})^{-1}\theta_*(S\cup\{\omega\})}+\sqrt{\theta_*(S)^\top A(S)^{-1}\theta_*(S)}} 
    \\&\ge \frac{\sum_{\omega\in\Omega\setminus S} \left(\theta_*(S\cup\{\omega\})^\top A(S\cup\{\omega\})^{-1} \theta_*(S\cup\{\omega\}) - \theta_*(S)^\top A(S)^{-1} \theta_*(S)  \right)}{\sqrt{\theta_*(S\cup\Omega)^\top A(S\cup\Omega)^{-1}\theta_*(S\cup\Omega)} +\sqrt{\theta_*(S)^\top A(S)^{-1}\theta_*(S)} } . \label{eqn:sub ratio 222}
\end{align}
Combining \eqref{eqn:sub ratio 111} and \eqref{eqn:sub ratio 222} and recalling Definition~\ref{def:submodularity ratio}, we obtain that the submodularity ratio $\gamma$ of $f(\cdot)$ satisfies
\begin{align}
    \gamma\ge\min_{S,\Omega\subseteq[d]} \frac{\sum_{\omega\in\Omega\setminus S}\left(\theta_*(S\cup\{\omega\})^\top A(S\cup\{\omega\})^{-1} \theta_*(S\cup\{\omega\})- \theta_*(S)^\top A(S)^{-1} \theta_*(S) \right)}{\theta_*(S\cup\Omega)^\top A(S\cup\Omega)^{-1} \theta_*(S\cup\Omega) -\theta_*(S)^\top A(S)^{-1} \theta_*(S)} .  \label{eqn:hat gamma}
\end{align}
For the right side of the above inequality, if we define another set function $g(S)=\theta_*(S)^\top A(S)^{-1} \theta_*(S)$, then from Definition~\ref{def:submodularity ratio}, it can be found that this expression is calculating the submodularity of function $g(\cdot)$. Note that the form of function $g(S)=\theta_*(S)^\top A(S)^{-1} \theta_*(S)$ is actually consistent with that of function $R^2$: $R^2_{Z,S}=\mathbf{b}_S^\top(C_S^{-1})\mathbf{b}_S$ in \cite[Definition 12]{das2018approximate}, and the constraints when calculating their maximum values are the same. Furthermore, we can treat $\omega$ in \eqref{eqn:hat gamma} as the variables $X_i$ in \cite[Section 3.1]{das2018approximate}. Therefore, by utilizing \cite[Lemma 13]{das2018approximate}, \eqref{eqn:hat gamma} becomes $\gamma\ge\lambda_\text{min}(A)$.  $\hfill\blacksquare$

\subsection{Proof of Theorem~\ref{thm:regret_general}}

\begin{proof}
The total $\alpha$-regret consists of regret from the Warm-up Phase (Pure Exploration) and the Stable Phase (Exploration + Exploitation). Let $K$ be the random total number of cycles run by the algorithm until time $T$. Let $C_0$ be the cycle where the stopping condition $\Delta_c^{Greedy} > 2 L_{\X} \varepsilon_c$ is first met. Similar to \eqref{eqn:split overall reg}, we can decompose the total $\alpha$-regret as
\begin{align}\nonumber
    &\alpha\text{-}R_T=\left (\alpha\sum_{t=1}^{T}\theta_*^\top\x^\star(S^\star)\right )-\left(\sum_{t=1}^{T}\theta_*^\top\x_t(S_t) \right)\le \alpha\text{-}R_K\\
    &= \underbrace{\sum_{c=1}^{C_0-1} \sum_{k=1}^{d} \left(\alpha\theta_*^\top\x^\star(S^\star) - \theta_*^\top b_k(S_k) \right)}_{\alpha\text{-}R_{PExplore}:\text{\quad Pure Exploration $\alpha$-Regret}} +\underbrace{\sum_{c=C_0}^{K} \sum_{k=1}^{d} \left(\alpha\theta_*^\top\x^\star(S^\star) - \theta_*^\top b_k(S_k) \right)}_{\alpha\text{-}R_{SExplore}:\text{\quad Stable Exploration $\alpha$-Regret}}\nonumber\\&\qquad+\underbrace{\sum_{c=C_0}^{K} c \left(\alpha\theta_*^\top\x^\star(S^\star) - \theta_*^\top a_c \right)}_{\alpha\text{-}R_{SExploit}:\text{\quad Stable Exploitation $\alpha$-Regret}}.\label{eqn:split overall reg2}
\end{align}

Utilizing the same logic as \eqref{eqn:C_0} in the proof of Lemma~\ref{lemma:C_0}, the number of warm-up cycles $C_0$ is bounded by a constant depending on $\Delta_{\min}^{-2}$. 

\textbf{Bounding $\alpha\text{-}R_{PExplore}+\alpha\text{-}R_{SExplore}$.}
In each cycle $c$, we play $d$ basis arms. The regret from playing any arm $b_k(S_k)$ in Assumption~\ref{ass:arms during exploration} is bounded by $\max_{v\in\X}\theta_*^\top(v-b_k(S_k))\le 2dL_{\max}\norm{\theta_*}_2$. Since $\alpha\in(0,1]$, the total exploration $\alpha$-regret (pure exploration $\alpha$-regret and stable exploration $\alpha$-regret) over $K$ cycles is
\begin{align}\label{eqn:total exploration alpha regret}
    \alpha\text{-}R_{PExplore}+\alpha\text{-}R_{SExplore} \le  2dL_{\max}\norm{\theta_*}_2K .
\end{align}

{\bf Bounding $\alpha\text{-}R_{SExploit}$.}
For $c \ge C_0$, Lemma~\ref{lemma:general_consistency} ensures $S_c^G = S^G$. Let $\alpha\text{-}r(c) = \alpha\theta_*^\top \x^\star(S^\star) - \theta_*^\top \x_c^G(S^G)$ be the instantaneous $\alpha$-regret, where $\x^\star(S^\star)$ is the optimal solution defined in \eqref{eqn:def of exact solution}. Recalling $h(S; \theta) \triangleq \max_{\x \in \X, \supp(\x) \subseteq S} \theta^\top \x$ in \eqref{eqn:def of h(S;)}, we fix $\theta$ as $\theta_*$, treat $S$ as a variable, and thus we obtain a set function $h(S;\theta_*)$ regarding the variable $S$ with constraint $S\in[d],|S|\le H$. Utilizing Lemma~\ref{lemma:approximation ratio expression} yields $\theta_*^\top\x^G(S^G)\ge \alpha\theta_*\x^\star(S^\star)$, where we uniformly set $\alpha=1-e^{-\gamma}$ as in \cite{cesa2006prediction,streeter2008online,ye2025online}. Therefore,
\begin{align*}
    \alpha\text{-}r(c) &\le \theta_*^\top \x^G(S^G) - \theta_*^\top \x_c^G(S^G) \\
    &= (\theta_* - \hat{\theta}_c)^\top \x^G(S^G) + \hat{\theta}_c^\top \x^G(S^G) - \theta_*^\top \x_c^G(S^G) \\
    &= (\theta_* - \hat{\theta}_c)^\top \x^G(S^G) + \hat{\theta}_c^\top (\x^G(S^G) - \x_c^G(S^G)) + (\hat{\theta}_c - \theta_*)^\top \x_c^G(S^G).
\end{align*}
Since $\x_c^G(S^G)$ maximizes $\hat{\theta}_c^\top \x$ on the fixed support $S^G$, we have $\hat{\theta}_c^\top \x_c^G(S^G) \ge \hat{\theta}_c^\top \x^G(S^G)$. Thus, the middle term is non-positive. Dropping it gives 
\begin{align}\nonumber
    \alpha\text{-}r(c) &\le (\theta_* - \hat{\theta}_c)^\top \x^G(S^G) + (\hat{\theta}_c - \theta_*)^\top \x_c^G(S^G) \\\nonumber
    &= (\hat{\theta}_c - \theta_*)^\top (\x_c^G(S^G) - \x^G(S^G)) \\
    &\le \norm{\hat{\theta}_c - \theta_*}_2 \norm{\x_c^G(S^G) - \x^G(S^G)}_2.\label{eqn:arc}
\end{align}
Since the support $S^G$ is fixed, the action set restricted to this support is a slice of the strongly convex set $\X$. As stated in \eqref{eqn:lipschitz_valuefunction}, Proposition~\ref{prop:lipschitz_h}, for a strongly convex set, the value function $h(S; \theta)$ has a Lipschitz continuous gradient. The gradient of $h(S; \theta)$ with respect to $\theta$ is exactly the optimal action $\x^\star(S;\theta)$ defined as \eqref{eqn:best sparse action response} in the proof of Theorem~\ref{thm:regret_adaptive}, and $L_{grad}$ is the Lipschitz constant of this gradient mapping (which depends on the geometry of $\X$). Moreover, for any $\theta$, the optimal action depends only on the direction of the parameter vector restricted to the support. Hence,
$$ \x^\star(S^G;\theta) = \x^\star(S^G;\frac{\theta(S^G)}{\norm{\theta(S^G)}_2}). $$
Substituting this into $\norm{\x_c^G(S^G) - \x^G(S^G)}_2=\norm{\x^\star(S^G;\hat{\theta}_c) - \x^\star(S^G;\theta_*)}_2$, we get:
\begin{align*}
    \norm{\x^\star(S^G;\hat{\theta}_c) - \x^\star(S^G;\theta_*)}_2 
    &= \norm{\x^\star(S^G;\frac{\hat{\theta}_c(S^G)}{\norm{\hat{\theta}_c(S^G)}_2})- \x^\star(S^G;\frac{\theta_*(S^G)}{\norm{\theta_*(S^G)}_2})}_2 \\
    &\le L_{grad}\norm{\frac{\hat{\theta}_c(S^G)}{\norm{\hat{\theta}_c(S^G)}_2} - \frac{\theta_*(S^G)}{\norm{\theta_*(S^G)}_2}}_2.
\end{align*}
Using the standard inequality for normalized vectors $\norm{\frac{w}{\norm{w}_2} - \frac{z}{\norm{z}_2}}_2 \le \frac{2\norm{w-z}_2}{\norm{z}_2}$ \cite[Lemma 3.5]{rusmevichientong2010linearly}, with $w=\hat{\theta}_c(S^G)$ and $z=\theta_*(S^G)$:
$$ \norm{\frac{\hat{\theta}_c(S^G)}{\norm{\hat{\theta}_c(S^G)}_2} - \frac{\theta_*(S^G)}{\norm{\theta_*(S^G)}_2}}_2 \le \frac{2\norm{\hat{\theta}_c(S^G) - \theta_*(S^G)}_2}{\norm{\theta_*(S^G)}_2} \le \frac{2\norm{\hat{\theta}_c - \theta_*}_2}{\norm{\theta_*(S^G)}_2}. $$
Combining the above inequalities gives
$$ \norm{\x^\star(S^G;\hat{\theta}_c) - \x^\star(S^G;\theta_*)}_2 \le \frac{2L_{grad}}{\norm{\theta_*(S^G)}_2} \norm{\hat{\theta}_c - \theta_*}_2. $$
Substituting this back into \eqref{eqn:arc}, we have
$$ \alpha\text{-}r(c) \le \frac{2L_{grad}}{\norm{\theta_*(S^G)}_2} \norm{\hat{\theta}_c - \theta_*}_2^2. $$
We now substitute the high-probability OLS error bound $\norm{\hat{\theta}_c - \theta_*}_2^2 \le \frac{h_1 \ln(2dc^2/\delta)}{c}$ (conditioned on event $\mathcal{E}$):
$$ \alpha\text{-}r(c) \le \frac{2L_{grad} h_1 \ln(2dc^2/\delta)}{c \norm{\theta_*(S^G)}_2}. $$
Therefore, the stable exploitation regret $R_{SExploit}$ is bounded by 
\begin{align}\nonumber
    \alpha\text{-}R_{SExploit} &= \sum_{c=C_0}^{K} c \cdot \alpha\text{-}r(c) \\\nonumber
    &\le \sum_{c=C_0}^{K} c \cdot \frac{2L_{grad} h_1 \ln(2dc^2/\delta)}{c \norm{\theta_*(S^G)}_2} \\\nonumber
    &= \frac{2L_{grad} h_1}{\norm{\theta_*(S^G)}_2} \sum_{c=C_0}^{K} \ln(2dc^2/\delta) \\
    &\le \frac{2L_{grad} h_1 \ln(2dK^2/\delta)}{\norm{\theta_*(S^G)}_2} K.\label{eqn:stable exploitation alpha regret}
\end{align}

\textbf{Final Total Regret.}
Sum the components from \eqref{eqn:total exploration alpha regret} and \eqref{eqn:stable exploitation alpha regret} and substitute into \eqref{eqn:split overall reg2}:
\begin{align*}
    \alpha\text{-}R_T &\le 2dL_{\max}\norm{\theta_*}_2 K + \frac{2L_{grad} h_1 \ln(2dK^2/\delta)}{\norm{\theta_*(S^G)}_2} K.
\end{align*}
Setting $K = \Theta(\sqrt{T})$ yields a final $\alpha$-regret bound of $\tilde{\mathcal{O}}\left( d \sqrt{T} \left( \norm{\theta_*}_2 + \frac{1}{\norm{\theta_*}_2} \right) \right)$. 
\paragraph{Proof of the Short Horizon Regime ($T \le d C_0$).}
The analysis for the short horizon case mirrors that of the Euclidean setting. When $T \le d C_0$, the APSEE-G algorithm remains in the pure exploration phase using the basis actions $\{b_k(S_k)\}$. The instantaneous $\alpha$-regret is defined as $\alpha\text{-}r_t = \alpha \cdot OPT - \theta_*^\top \x_t$. Since $\alpha \in (0, 1]$, the $\alpha$-regret is upper bounded by the standard regret:
$$ \alpha\text{-}r_t \le \max_{\x \in \X} \theta_*^\top \x - \theta_*^\top \x_t \le 2 L_{\max} \norm{\theta_*}_2. $$
Substituting the upper bound for the horizon $T \le d C_0$, where $C_0$ is determined by the greedy gap condition (which scales with $\Delta_{\min}^{-2}$ similar to the Euclidean case), we have:
\begin{align*}
    \alpha\text{-}R_T &\le \sum_{t=1}^T 2 L_{\max} \norm{\theta_*}_2 \\
    &\le 2 d L_{\max} \norm{\theta_*}_2 C_0 \\
    &= {\mathcal{O}}\left( \frac{d  L_{\max} \norm{\theta_*}_2  h_1}{\Delta_{\min}^2} \right).
\end{align*}
This concludes the proof for the short horizon regime. The proof of Theorem~\ref{thm:regret_general} is completed.

\end{proof}

\section{Omitted Details for Section~\ref{sec:extensions}}\label{app:Omitted Proofs in sec:extensions}

\subsection{Justification of Positive Submodularity Ratio under Assumption~\ref{ass:convex_hull}}\label{app:submodularity_justification}
we now explicitly show how Assumption~\ref{ass:convex_hull} validates the submodularity ratio $\gamma > 0$ by enabling the contradiction argument used in the proof of Proposition~\ref{prop:submodularity_general} in Appendix~\ref{app:Omitted Proofs in sec:strongly_convex_case} (specifically the ``Case 4'' analysis).

Recall the contradiction argument for Case 4: we assume the \textit{denominator} (global gain) is positive, i.e., $h(S \cup \Omega) > h(S)$, while the \textit{numerator} (sum of marginal gains) is 0. We aim to show this leads to a contradiction.

\textbf{Constructing the Candidate Solution via Convex Combination.}
Let $\x^\star(S)$ and $\x^\star(S \cup \Omega)$ be the optimal action vectors for the supports $S$ and $S \cup \Omega$, respectively. By definition, both vectors belong to the set of optimal sparse solutions $\X^\star$.
Consider a convex combination of these two vectors for some small $\lambda \in (0, 1]$:
$$ \x_\lambda = (1-\lambda)\x^\star(S) + \lambda \x^\star(S \cup \Omega). $$

\textbf{Feasibility via Assumption~\ref{ass:convex_hull}.}
In a general compact set, the point $\x_\lambda$ might lie outside $\X$, rendering it infeasible. However, Assumption~\ref{ass:convex_hull} states that $\text{conv}(\X^\star) \subseteq \X$.
Since $\x^\star(S), \x^\star(S \cup \Omega) \in \X^\star$, their convex combination $\x_\lambda$ lies in $\text{conv}(\X^\star)$.
Therefore, strictly due to this assumption, we are guaranteed that:
$$ \x_\lambda \in \X. $$
Furthermore, the support of $\x_\lambda$ is clearly contained in $S \cup \Omega$. Thus, $\x_\lambda$ is a valid candidate for the maximization problem defining $h(S \cup \Omega; \theta_*)$.

\textbf{Deriving the Contradiction.}
Since $\x_\lambda$ is feasible, the objective value at $\x_\lambda$ serves as a lower bound for the optimal value. Evaluating the objective at $\x_\lambda$:
\begin{align*}
    \theta_*^\top \x_\lambda &= (1-\lambda)\theta_*^\top \x^\star(S) + \lambda \theta_*^\top \x^\star(S \cup \Omega) \\
    &= (1-\lambda)h(S) + \lambda h(S \cup \Omega) \\
    &= h(S) + \lambda \underbrace{(h(S \cup \Omega) - h(S))}_{> 0 \text{ (by assuming the \textit{denominator} (global gain) is positive)}}.
\end{align*}
This implies $\theta_*^\top \x_\lambda > h(S)$.
The existence of a feasible direction $\mathbf{d} = \x_\lambda - \x^\star(S)$ that strictly increases the objective implies that the local marginal gains cannot be uniformly zero. Specifically, if we can move globally from $S$ to $S \cup \Omega$ to gain value, and the path is feasible (guaranteed by Assumption~\ref{ass:convex_hull}), then there must exist at least one component $\omega \in \Omega \setminus S$ that contributes to this gain.
This contradicts the premise that the numerator is zero (which would imply no single element addition provides value).
Thus, if the denominator is positive, the numerator must be positive, ensuring $\gamma > 0$.

\subsection{Modified Version of Algorithm~\ref{alg:adaptive_general_PEE} for General Compact Action Sets}
Unlike the strongly convex case, we cannot utilize \eqref{eqn:lipschitz_gradient} in Proposition~\ref{prop:lipschitz_h}, implying the instantaneous $\alpha$-regret for each step $t$ scales linearly with the estimation error, i.e., $\mathcal{O}(\norm{\hat{\theta}_c - \theta_*}_2)$, rather than quadratically $\mathcal{O}(\norm{\hat{\theta}_c - \theta_*}_2^2)$. To balance this slower rate, we modify Algorithm~\ref{alg:adaptive_general_PEE} to bypass the adaptive warm-up phase entirely (i.e., there is no need for the aforementioned {\it Adaptive Support Recovery}). Instead, we employ a {\it deterministic schedule} where the exploitation length grows sub-linearly with the cycle index. Specifically, in every cycle $c \ge 1$, the algorithm performs $d$ steps of exploration (lines 3-13 of Algorithm~\ref{alg:adaptive_general_PEE}) followed immediately by $\lfloor \sqrt{c} \rfloor$ steps of greedy exploitation (lines 18-22).

\subsection{Proof of Theorem~\ref{thm:regret_general_compact}}
\begin{proof}
In this setting, the algorithm uses a deterministic schedule without a warm-up phase. Let $K$ be the total number of cycles played up to time $T$. In each cycle $c$, the algorithm performs $d$ exploration steps and $m_c = \lfloor \sqrt{c} \rfloor$ exploitation steps.

Similar to \eqref{eqn:split overall reg2}, we decompose the total $\alpha$-regret into exploration and exploitation components:
\begin{align}\label{eqn:split overall reg3}
    \alpha\text{-}R_T \le \alpha\text{-}R_K=  \alpha\text{-}R_{explore} + \alpha\text{-}R_{exploit}.
\end{align}

{\bf Bounding $\alpha\text{-}R_{explore}$.}
In every cycle $c$, we play $d$ basis arms. The maximum regret per step is bounded by the diameter of the set, $2 L_{\max} \norm{\theta_*}_2$. Summing over $K$ cycles:
\begin{equation}\label{eqn:explore reg3}
    \alpha\text{-}R_{explore} \le \sum_{c=1}^{K} \sum_{k=1}^d 2 L_{\max} \norm{\theta_*}_2 = 2 d L_{\max} \norm{\theta_*}_2 K.
\end{equation}

{\bf Bounding $\alpha\text{-}R_{exploit}$.}
In cycle $c$, we play the greedy action $\x_c^G$ for $m_c = \lfloor \sqrt{c} \rfloor$ steps.
The instantaneous $\alpha$-regret is $\alpha\text{-}r(c) = \alpha \theta_*^\top \x^\star(S^\star) - \theta_*^\top \x_c^G(S_c^G)$.
Using the approximation guarantee $\theta^\top \x^G(S^G) \ge \alpha \theta^\top \x^\star(S^\star)$ provided by Assumption~\ref{ass:convex_hull} and Lemma~\ref{lemma:approximation ratio expression}, we have:
\begin{align*}
    \alpha\text{-}r(c) &\le \theta_*^\top \x^G(S^G) - \theta_*^\top \x_c^G(S_c^G) \\
    &= (\theta_* - \hat{\theta}_c)^\top \x^G(S^G) + \hat{\theta}_c^\top (\x^G(S^G) - \x_c^G(S_c^G)) + (\hat{\theta}_c - \theta_*)^\top \x_c^G(S_c^G).
\end{align*}
Since $\x_c^G(S_c^G)$ maximizes the estimated reward $\hat{\theta}_c^\top \x(S_c^G)$ over the support $S_c^G$ obtained by the greedy algorithm, $\hat{\theta}_c^\top (\x^G(S^G) - \x_c^G(S_c^G))\le 0$. Thus,
\begin{align*}
    \alpha\text{-}r(c) &\le (\theta_* - \hat{\theta}_c)^\top \x^G(S^G) + (\hat{\theta}_c - \theta_*)^\top \x_c^G(S_c^G) \\
    &\le \norm{\hat{\theta}_c - \theta_*}_2 \left( \norm{\x^G(S^G)}_2 + \norm{\x_c^G(S_c^G)}_2 \right) \\
    &\le 2 L_{\max} \norm{\hat{\theta}_c - \theta_*}_2.
\end{align*}
Substituting the error bound $\varepsilon_c$ gives:
$$ \alpha\text{-}r(c) \le 2 L_{\max} \sqrt{\frac{h_1 \ln(2dc^2/\delta)}{c}}. $$
Therefore, the total exploitation regret is
\begin{align}\nonumber
    \alpha\text{-}R_{exploit} &= \sum_{c=1}^{K} m_c \cdot \alpha\text{-}r(c) \\\nonumber
    &\le \sum_{c=1}^{K} \sqrt{c} \cdot \left( 2 L_{\max} \sqrt{\frac{h_1 \ln(2dc^2/\delta)}{c}} \right) \\\nonumber
    &= 2 L_{\max} \sqrt{h_1} \sum_{c=1}^{K} \sqrt{\ln(2dc^2/\delta)}\\\nonumber
    &\le 2 L_{\max} \sqrt{h_1} \sum_{c=1}^{K} \sqrt{\ln(2dK^2/\delta)}\\
    &=2 L_{\max} \sqrt{h_1} K \sqrt{\ln(2dK^2/\delta)},\label{eqn:exploit reg3}
\end{align}
where the second inequality is trivial.

\textbf{ Final Total Bound.}
Sum the components from \eqref{eqn:explore reg3} and \eqref{eqn:exploit reg3} and substitute into \eqref{eqn:split overall reg3}:
\begin{align*}
    \alpha\text{-}R_T \le 2 d L_{\max} \norm{\theta_*}_2 K +2 L_{\max} \sqrt{h_1} K \sqrt{\ln(2dK^2/\delta)}.
\end{align*}
Note that the total number of time steps $T$ relates to the number of cycles $K$ as follows:
\begin{align*}
    T &\le \sum_{c=1}^{K} (d + \lfloor \sqrt{c} \rfloor) \le dK + \sum_{c=1}^{K} \sqrt{c}\le dK + K^{3/2}.
\end{align*}
For large $T$, the term $K^{3/2}$ dominates, implying $T = \Theta(K^{3/2})$. Hence, $K = \Theta(T^{2/3})$. Therefore,
\begin{align*}
    \alpha\text{-}R_T = \tilde{\mathcal{O}}(d T^{2/3}),
\end{align*}
which completes the proof.
\end{proof}

\section{Extensions to Time-varying Action Sets}\label{subsec:time_varying_action_set}
As noted in Remark~\ref{remark:fix action set}, many practical applications involve time-varying constraints where the feasible action set $\X_t$ changes at each step $t$. Our framework naturally extends to scenarios where the feasible action set $\X_t$ changes at each step $t$. To ensure theoretical validity, the previously established assumptions must be adapted to this time-varying setting: the boundedness condition (Assumption~\ref{ass:feasible set bound}) must hold uniformly across all time steps (i.e., $\sup_{t \ge 1} \max_{\x \in \X_t} \norm{\x}_2 \le L_{\max}$); the exploration actions (Assumption~\ref{ass:arms during exploration}) must be selected from the current set $\X_t$; and the convex hull regularity (Assumption~\ref{ass:convex_hull}) must be satisfied by each instantaneous set $\X_t$. 

Since the geometry of $\X_t$ may change arbitrarily, the achievable regret bound depends on the specific geometric properties of the sequence $\{\X_t\}$. If $\X_t$ remains strongly convex for all $t$, the gradient of the value function maintains Lipschitz continuity (as per Proposition~\ref{prop:lipschitz_h}), allowing Algorithm~\ref{alg:adaptive_general_PEE} (APSEE-G) to achieve an $\alpha$-regret of $\tilde{\mathcal{O}}(d\sqrt{T})$. Otherwise, if the geometry fluctuates arbitrarily (e.g., between strongly convex sets and polytopes), employing the modified APSEE-G with the deterministic exploitation schedule (as detailed in Theorem~\ref{thm:regret_general_compact}) guarantees a robust bound of $\tilde{\mathcal{O}}(d T^{2/3})$. Developing an algorithm with superior performance and a smaller regret upper bound for such cases of arbitrary geometric fluctuations is left for future work.

\section{Additional Numerical Experiments}
\label{app:additional_experiments}

\begin{figure}[!htbp]
    \centering
    \includegraphics[width=0.5\linewidth]{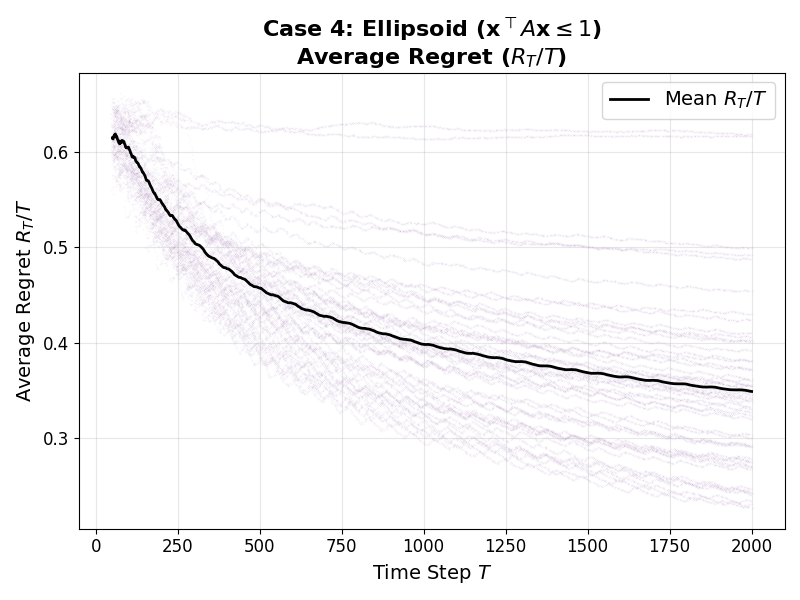}
    \caption{$\alpha\text{-}R_T/T$ vs. $T$ under the ellipsoid case. }
    \label{fig:ellipsoid pro}
\end{figure}
Firstly, to further verify Theorem~\ref{thm:regret_general}, we present the curve of the average $\alpha$-regret ($\alpha\text{-}R_T/T$) versus the time horizon $T$ over a long timescale under the ellipsoid case in Section~\ref{sec:numerical experiments}. To better display as many variation trends as possible, we set the time horizon to $T=2\times 10^5$ and make all other settings remain unchanged. As shown in Figure~\ref{fig:ellipsoid pro}, the average $\alpha$-regret curves exhibit a slow descent over time, which aligns with $\lim_{T \to \infty} R_T/T = 0$ implied by our sub-linear regret bounds in Theorem~\ref{thm:regret_general}.

To proceed, we now present supplementary experimental results validating our framework on three other representative geometries: (1) Euclidean action sets where exact sparse solutions are tractable; (2) General strongly convex action sets ($\ell_{1.5}$-ball) using the greedy approximation; and (3) General compact action sets ($\ell_1$-ball) with the adapted exploitation schedule. These experiments share the same simulation setup described in Section~\ref{sec:numerical experiments}. Additionally, we design a real-world application instance—specifically, a personalized video recommendation system—to further demonstrate the practical utility and versatility of our algorithms under concrete constraints. For all these additional experiments, we have included the average ($\alpha$)-regret ($(\alpha\text{-})R_T/T$) plots with a long horizon of $T=2\times 10^5$.

\begin{figure}[!htbp]
    \centering
    \begin{subfigure}{\textwidth}
        \centering
        \includegraphics[width=0.99\linewidth]{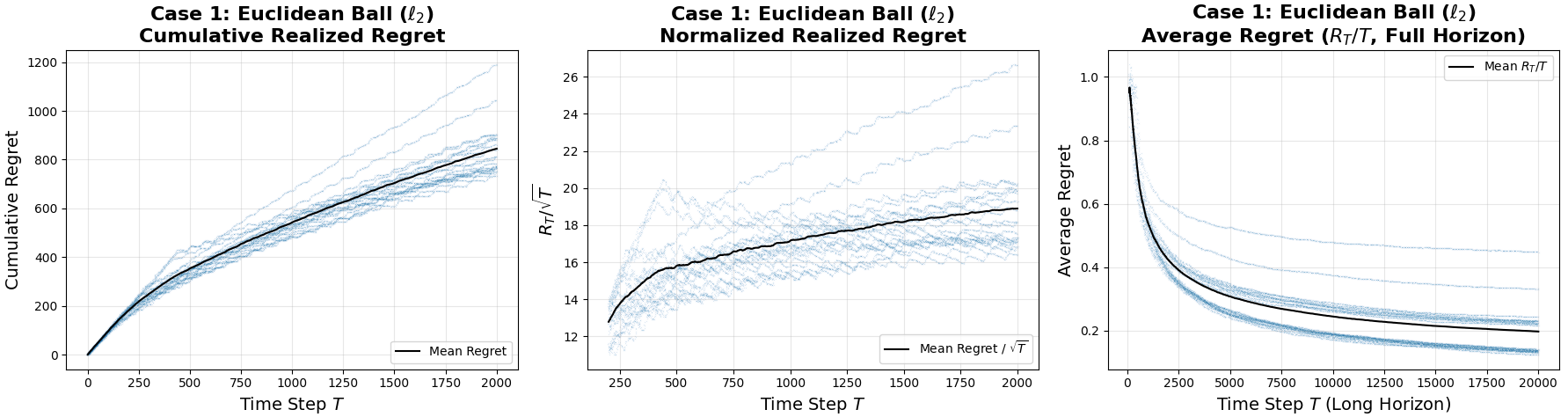}
        \caption{Case 1: $\ell_2$-Ball ($\tilde{\CO}(\sqrt{T})$ Regret)}
        \label{fig:case1_app}
    \end{subfigure}
    \vspace{0.5em}
    
    \begin{subfigure}{\textwidth}
        \centering
        \includegraphics[width=0.99\linewidth]{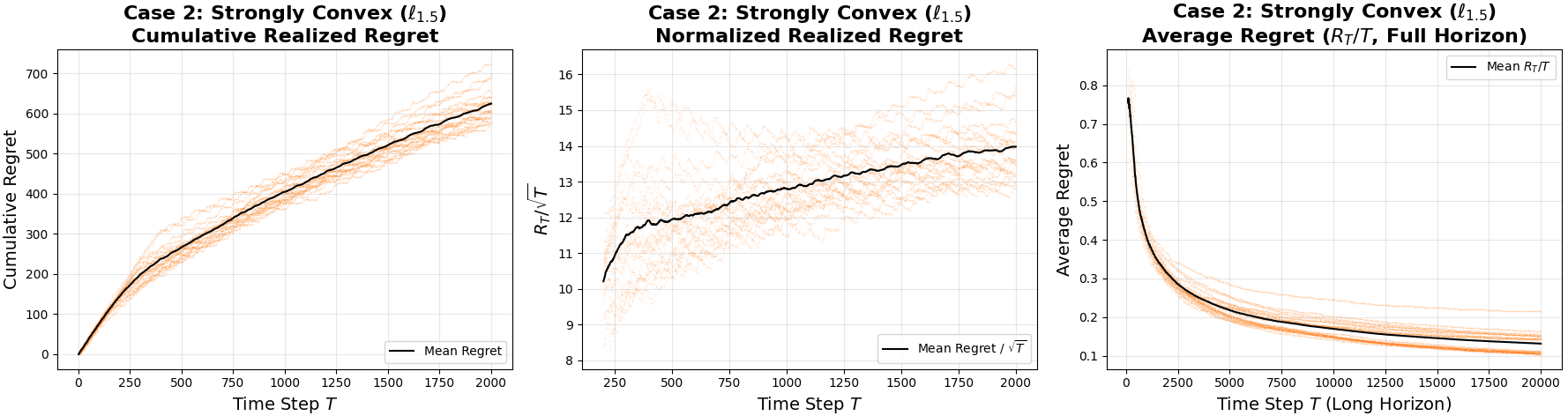}
        \caption{Case 2: $\ell_{1.5}$-Ball ($\tilde{\CO}(\sqrt{T})$ $\alpha$-Regret)}
        \label{fig:case2_app}
    \end{subfigure}
    \vspace{0.5em}
    
    \begin{subfigure}{\textwidth}
        \centering
        \includegraphics[width=0.99\linewidth]{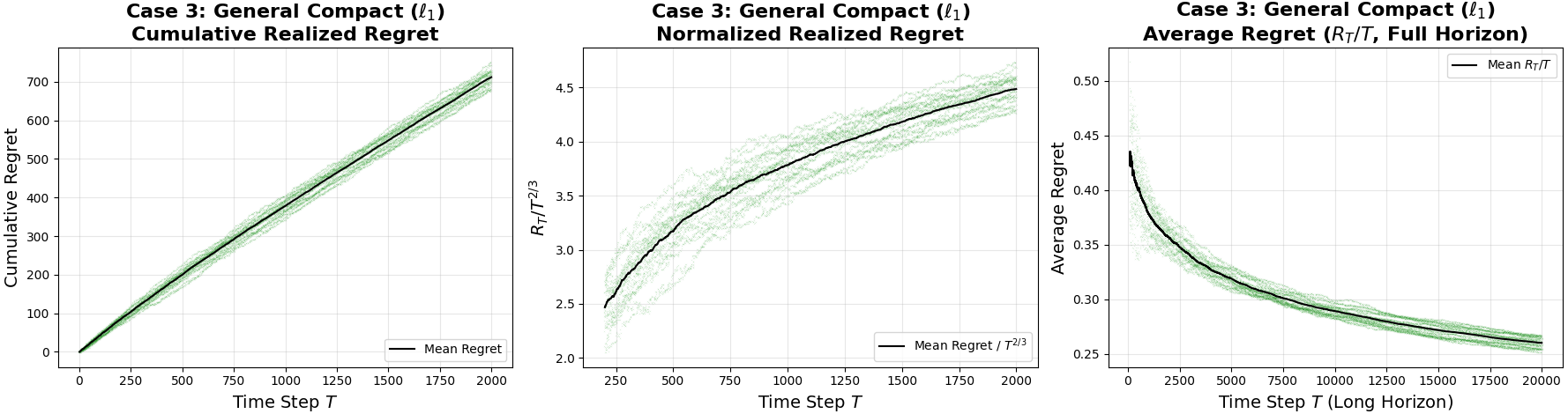}
        \caption{Case 3: $\ell_1$-Ball ($\tilde{\CO}(T^{2/3})$ $\alpha$-Regret)}
        \label{fig:case3_app}
    \end{subfigure}
    
    \caption{Additional numerical simulation results.}
    \label{fig:appendix_experiments}
\end{figure}

\paragraph{Case 1: Euclidean Action Sets ($\ell_2$-Ball)}
~\\
Consider the action set $\X$ is the unit Euclidean ball $\{\x \in \R^d : \norm{\x}_2 \le 1\}$. In this setting, we implement the \textbf{APSEE} algorithm (Algorithm~\ref{alg:adaptive_PEE}). The exact oracle efficiently selects the top-$H$ indices of the estimated parameter $\hat{\theta}_c$ based on absolute magnitude.
Figure~\ref{fig:appendix_experiments} (a) presents the results. The \textit{Left} plot illustrates the cumulative regret, where we observe that the regret grows sub-linearly. The \textit{Middle} plot shows the regret normalized by $\sqrt{T}$, where the curve approaches a logarithmic growth trend, corroborating the $\tilde{\mathcal{O}}(\sqrt{T})$ bound established in Theorem~\ref{thm:regret_adaptive}. The \textit{Right} plot displays the average regret $R_T/T$ over a longer horizon of $T=2\times 10^5$. The curve monotonically decreases towards zero, confirming the algorithm's convergence and the validity of the sub-linear regret bound in Theorem~\ref{thm:regret_adaptive}.

\paragraph{Case 2: Strongly Convex Action Sets ($\ell_{1.5}$-Ball)}
~\\
We consider an $\ell_p$-ball with $p=1.5$, i.e., $\X = \{\x \in \R^d : \norm{\x}_{1.5} \le 1\}$. This set is strongly convex but does not admit a trivial sorting-based solution for the optimal sparse support. Thus, we run \textbf{APSEE-G} (Algorithm~\ref{alg:adaptive_general_PEE}). Note that Algorithm~\ref{alg:adaptive_general_PEE} employs the Greedy algorithm to select the support set $S_{est}$ by maximizing the marginal gain of the function $h(S; \hat{\theta}_c) = \norm{\hat{\theta}_c(S)}_{q}$ (where $q=3$ is the dual norm of $p=1.5$).
Figure~\ref{fig:appendix_experiments} (b) shows the cumulative $\alpha$-regret relative to the greedy solution. The \textit{Left} plot demonstrates sub-linear growth. The \textit{Middle} plot confirms the $\tilde{\mathcal{O}}(\sqrt{T})$ $\alpha$-regret bound, as the normalized $\alpha$-regret ($\alpha\text{-}R_T/\sqrt{T}$) shows a logarithmic growth trend. Finally, the \textit{Right} plot of average $\alpha$-regret ($\alpha\text{-}R_T/T$) over $T=2\times 10^5$ shows a consistent downward trend, further confirming Theorem~\ref{thm:regret_general}.

\paragraph{Case 3: General Compact Action Sets ($\ell_1$-Ball)}
~\\
Now, we consider the cross-polytope (unit $\ell_1$-ball), $\X = \{\x \in \R^d : \norm{\x}_1 \le 1\}$. This set is compact but not strongly convex (it has flat faces), violating the Lipschitz smoothness condition of the optimal action map required for the fast $\sqrt{T}$ rate. Hence, we employ \textbf{APSEE-G} (Algorithm~\ref{alg:adaptive_general_PEE}) with the modified exploitation schedule derived in Theorem~\ref{thm:regret_general_compact}. Specifically, in the exploitation phase of cycle $c$, the greedy action is played for $\lfloor \sqrt{c} \rfloor$ steps instead of linear $c$ steps. Since the submodularity ratio is complicated to calculate, we utilize the cumulative $1$-regret to plot as shown in Figure~\ref{fig:appendix_experiments} (c). The \textit{Left} plot shows the cumulative 1-regret. Due to the lack of strong convexity, the sub-linear growth trend is less obvious than the previous cases. To validate the theoretical bound of $\tilde{\mathcal{O}}(T^{2/3})$ in Theorem~\ref{thm:regret_general_compact}, the \textit{Middle} plot shows the regret normalized by $T^{2/3}$, which presents a slowly increasing trend (logarithmic-like), consistent with the theory. The \textit{Right} plot confirms that the average regret $R_T/T$ still decays to zero over $T=2\times 10^5$, validating the necessity and effectiveness of the modified schedule for balancing the exploration-exploitation trade-off in general compact sets, confirming Theorem~\ref{thm:regret_general_compact}.

\paragraph{Real-world Application: Personalized Video Recommendation}
\label{subsec:video_recommendation}
~\\
To demonstrate the practical utility of our framework, we consider a \textit{Personalized Video Recommendation System}. Consider a platform with a library of videos categorized into $d=20$ distinct genres (e.g., Sports, News, Gaming, Music, etc.). The system interacts with a specific user whose preferences are represented by an unknown latent parameter vector $\theta_* \in [-1, 1]^d$. A positive component $(\theta_*)_i > 0$ indicates the user's affinity for genre $i$, while a negative value implies a dislike.

\begin{figure}[!ht]
    \centering
    \includegraphics[width=0.99\linewidth]{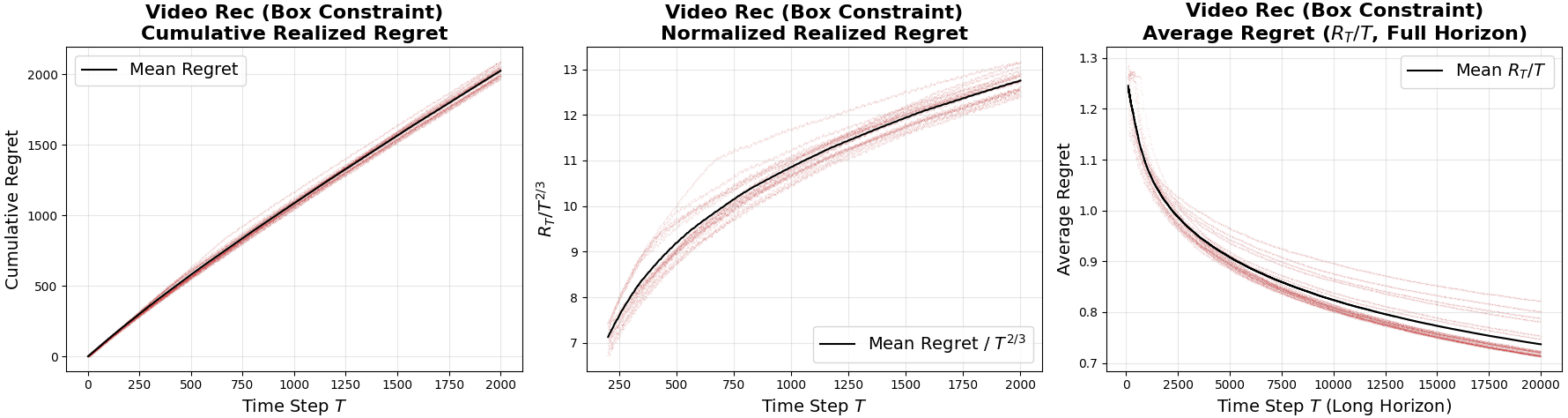}
    \caption{Results of the Video Recommendation scenario. }
    \label{fig:video_rec}
\end{figure}

\textbf{Action Set and Constraints.}
At each time step $t$, the system recommends a list of videos on a webpage. Each element $(\mathbf{x}_t)_i$ of the action vector $\mathbf{x}_t \in \R^d$ represents the \textit{recommended intensity} (e.g., length or airtime) of a video from genre $i$, normalized to the interval $[0, 1]$. Thus, the action set is a $d$-dimensional hypercube (i.e., box constraint):
$$ \X = \{ \mathbf{x} \in \R^d : 0 \le (\x)_i \le 1, \forall i \in [d] \}. $$
Furthermore, due to user interface limitations (e.g., limited screen space or attention span), the system can only display videos from at most $H=5$ distinct genres simultaneously. This imposes the sparsity constraint $|\supp(\mathbf{x})| \le H$.

\textbf{Reward Mechanism.}
The user's engagement (reward) is modeled as the inner product $Y_t = \theta_*^\top \mathbf{x}_t + \eta_t$. Recommending a liked genre ($(\theta_*)_i > 0$) with high intensity ($(\x)_i \to 1$) yields a high reward, whereas recommending a disliked genre results in negative feedback (e.g., immediate skipping or disengagement).
The additive noise term $\eta_t$ captures the inherent stochasticity in user behavior and content variability, such as the quality difference of specific videos within the same genre (e.g., a poorly made video in a liked genre) or the user’s instantaneous mood and attention context (e.g., accidental clicks or external distractions).

\textbf{Algorithm.}
Since the action set $\X$ given above is a convex compact polytope with flat boundaries, lacking the strong convexity property found in $\ell_2$-balls. Consequently, we employ the \textbf{APSEE-G} algorithm with the robust deterministic exploitation schedule ($m_c = \lfloor \sqrt{c} \rfloor$) derived for general compact sets (Theorem~\ref{thm:regret_general_compact}).

It is worth noting that in this specific video recommendation setting, the Greedy strategy recovers the \textit{exact} optimal support ($\alpha=1$) due to the geometry of the action set $\X$. We briefly justify this optimality below. Consider the optimization problem:
\begin{equation}
    \max_{\mathbf{x} \in \mathbb{R}^d} \sum_{i=1}^d (\theta_*)_i (\x)_i \quad \text{s.t.} \quad 0 \le (\x)_i \le 1\ \forall i\in[d], \ |\supp(\mathbf{x})|\le H.
\end{equation}
Since the objective is linear and the box constraints are separable, for any chosen support set $S$, the optimal coordinate-wise action is $(\x)_i = 1$ if $(\theta_*)_i > 0$ and $(\x)_i = 0$ otherwise. This reduces the problem to finding a subset $S \subseteq [d]$ with $|S| \le H$ that maximizes the sum of positive weights $F(S) = \sum_{i \in S} \max(0, (\theta_*)_i)$.
One can show that the set function $F(S)$ is \textit{modular} (i.e., the marginal gain of adding an item $i$ is constant and independent of existing items). For modular functions under a cardinality constraint, the Greedy algorithm—which iteratively selects the item with the largest positive $(\theta_*)_i$—is guaranteed to find the global maximum, yielding $\alpha=1$. Thus, our benchmark is the true optimal strategy.

\textbf{Results.}
Figure~\ref{fig:video_rec} presents the performance of the recommendation system. 
The \textit{Left} plot illustrates the cumulative regret over $T=2,000$ steps. We observe a clear sub-linear growth trend, indicating that the algorithm effectively learns the user's latent preferences and adapts the recommended video duration mix to maximize engagement. Crucially, because the hypercube geometry possesses flat boundaries and lacks curvature (i.e., it is not strongly convex), the gradient of the value function is not Lipschitz continuous. Consequently, we do not expect the $\tilde{\mathcal{O}}(\sqrt{T})$ regret in the Ellipsoid case. Instead, the system falls under the regime of Theorem~\ref{thm:regret_general_compact} for general compact sets.
To verify this, we normalize the regret by $T^{2/3}$ in the \textit{Middle} plot. The resulting curve changes slowly, like a logarithmic curve, empirically corroborating the $\tilde{\mathcal{O}}( T^{2/3})$ regret bound derived in our theoretical analysis.
Finally, the \textit{Right} plot displays the average regret $R_T/T$ over a longer horizon of $T=2\times 10^5$. The curve smoothly descends, confirming that even with the slower $\tilde{\mathcal{O}}(T^{2/3})$ regret rate, the system's average performance consistently improves over time, highlighting the practical viability of the APSEE-G algorithm in real-world applications with rigid, non-smooth constraints while maintaining rigorous theoretical guarantees.

\section{Technical Lemma}\label{app:tech lemmas}

\begin{lemma}[Concentration for Sum of Conditionally Sub-Gaussian Variables]
\label{lemma:sum-abs-subgaussian}
Let $\{\eta_t\}_{t=1}^n$ be a sequence of random variables adapted to a filtration $\{\mathcal{F}_t\}_{t=0}^n$ (i.e., $\eta_t$ is $\mathcal{F}_t$-measurable). Assume that $\{\eta_t\}_{t=1}^n$ is a conditionally $\sigma$-sub-Gaussian martingale difference sequence, meaning that for all $t \in [n]$ and any $\lambda \in \mathbb{R}$:
$$
\mathbb{E}[\eta_t \mid \mathcal{F}_{t-1}] = 0 \quad \text{and} \quad \mathbb{E}[e^{\lambda \eta_t} \mid \mathcal{F}_{t-1}] \le e^{\frac{\lambda^2 \sigma^2}{2}}.
$$
Let $S_n = \sum_{t=1}^n \eta_t$. Then, for any $\delta \in (0, 1)$, with probability at least $1 - \delta$:
\begin{align*}
    |S_n| \le \sqrt{2n\sigma^2\ln{(2/\delta)}}.
\end{align*}

\end{lemma}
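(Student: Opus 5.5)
The plan is the standard Chernoff (exponential-moment) argument, with the martingale structure handled by iterated conditioning. First I will bound the moment generating function of $S_n$. For fixed $\lambda\in\R$, write $e^{\lambda S_n}=e^{\lambda S_{n-1}}e^{\lambda\eta_n}$. Since $S_{n-1}$ is $\mathcal{F}_{n-1}$-measurable, the tower property gives
\begin{align*}
\mathbb{E}[e^{\lambda S_n}] = \mathbb{E}\big[e^{\lambda S_{n-1}}\,\mathbb{E}[e^{\lambda\eta_n}\mid\mathcal{F}_{n-1}]\big]\le e^{\lambda^2\sigma^2/2}\,\mathbb{E}[e^{\lambda S_{n-1}}].
\end{align*}
Inducting downward to $S_0=0$ then yields $\mathbb{E}[e^{\lambda S_n}]\le e^{n\lambda^2\sigma^2/2}$, so $S_n$ is sub-Gaussian with variance proxy $n\sigma^2$. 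No independence is needed; only the conditional sub-Gaussian bound at each step is used.

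Second, I will apply Markov's inequality to $e^{\lambda S_n}$ for $\lambda>0$. This gives $P(S_n\ge \epsilon)\le \exp(-\lambda\epsilon+n\lambda^2\sigma^2/2)$. Choosing $\lambda=\epsilon/(n\sigma^2)$ gives $P(S_n\ge\epsilon)\le \exp(-\epsilon^2/(2n\sigma^2))$. The same argument applied to $-S_n$, which is valid because the MGF bound holds for all $\lambda\in\R$, gives the lower tail. A union bound over the two tails then gives $P(|S_n|\ge\epsilon)\le 2\exp(-\epsilon^2/(2n\sigma^2))$.

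Finally, I will set the right-hand side equal to $\delta$, which gives $\epsilon=\sqrt{2n\sigma^2\ln(2/\delta)}$ and the claimed bound with probability at least $1-\delta$.

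The only step needing care is the first: the conditioning argument requires $S_{n-1}$ to be $\mathcal{F}_{n-1}$-measurable, which follows from adaptedness. Nothing else is a real obstacle. When the lemma is used in Lemma~\ref{lemma:ols-bound-hp} with weights $(b_k(S_k))_i$, the same argument goes through with a deterministic weight $a_t$ on each term, replacing $n\sigma^2$ by $\sigma^2\sum_t a_t^2$.
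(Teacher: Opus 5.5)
Your proposal is correct and takes the same route as the paper's proof: the MGF bound $\mathbb{E}[e^{\lambda S_n}]\le e^{n\lambda^2\sigma^2/2}$ via iterated conditioning, the Chernoff bound with $\lambda=\epsilon/(n\sigma^2)$, the same argument for $-S_n$ followed by a union bound, and solving $2e^{-\epsilon^2/(2n\sigma^2)}=\delta$ for $\epsilon$. Your closing remark about deterministic weights (variance proxy $\sigma^2\sum_t a_t^2$) is exactly the form in which the paper uses the lemma in Lemma~\ref{lemma:ols-bound-hp}, so it also covers the small mismatch between the unweighted statement and that weighted application.
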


\begin{proof}
This result follows the standard concentration bounds for sub-exponential martingales (see, e.g., \cite[Theorem 2.19]{wainwright2019high}). The proof follows the standard Chernoff bound argument extended to martingale difference sequences via the tower property of conditional expectation.

We begin by applying the Chernoff bound method. For any $\lambda > 0$ and $u \ge 0$, Markov's inequality yields:
\begin{equation}
    P(S_n \ge u) = P(e^{\lambda S_n} \ge e^{\lambda u}) \le e^{-\lambda u} \mathbb{E}[e^{\lambda S_n}]. \label{eqn:Chernoff bound}
\end{equation}

Next, we bound the moment generating function (MGF) $\mathbb{E}[e^{\lambda S_n}]$ by iteratively peeling off terms using the tower property of conditional expectation. Defining the partial sum $S_k = \sum_{t=1}^k \eta_t$, we observe that $S_n = S_{n-1} + \eta_n$, where $S_{n-1}$ is $\mathcal{F}_{n-1}$-measurable. Thus,
\begin{align}\nonumber
    \mathbb{E}[e^{\lambda S_n}] &= \mathbb{E}\left[ \mathbb{E}[e^{\lambda (S_{n-1} + \eta_n)} \mid \mathcal{F}_{n-1}] \right] \\
    &= \mathbb{E}\left[ e^{\lambda S_{n-1}} \mathbb{E}[e^{\lambda \eta_n} \mid \mathcal{F}_{n-1}] \right].\label{eqn:e111}
\end{align}
By the assumption that $\eta_n$ is conditionally $\sigma$-sub-Gaussian, we have $\mathbb{E}[e^{\lambda \eta_n} \mid \mathcal{F}_{n-1}] \le e^{\frac{\lambda^2 \sigma^2}{2}}$. Substituting this back into \eqref{eqn:e111} gives
\begin{align*}
    \mathbb{E}[e^{\lambda S_n}] &\le \mathbb{E}\left[ e^{\lambda S_{n-1}} e^{\frac{\lambda^2 \sigma^2}{2}} \right] \\
    &= e^{\frac{\lambda^2 \sigma^2}{2}} \mathbb{E}[e^{\lambda S_{n-1}}].
\end{align*}
Repeating this process recursively for $t = n-1, n-2, \dots, 1$, we find that at each step $t$:
$$
\mathbb{E}[e^{\lambda S_t}] \le e^{\frac{\lambda^2 \sigma^2}{2}} \mathbb{E}[e^{\lambda S_{t-1}}].
$$
Iterating this $n$ times (with $S_0 = 0$) results in the final MGF bound:
\begin{equation}
    \mathbb{E}[e^{\lambda S_n}] \le \left( e^{\frac{\lambda^2 \sigma^2}{2}} \right)^n \mathbb{E}[e^{\lambda S_0}] = e^{\frac{n\lambda^2 \sigma^2}{2}}. \label{eqn:MGF}
\end{equation}

Substituting the MGF bound \eqref{eqn:MGF} into the initial Chernoff bound \eqref{eqn:Chernoff bound}, we obtain
$$
P(S_n \ge u) \le \exp\left( -\lambda u + \frac{n\lambda^2 \sigma^2}{2} \right).
$$
To obtain the tightest bound, we minimize the exponent with respect to $\lambda$. Choosing $\lambda = \frac{u}{n\sigma^2}$ yields the one-sided tail bound:
\begin{equation}
    P(S_n \ge u) \le \exp\left( -\frac{u^2}{2n\sigma^2} \right). \label{eqn:one-sided tail}
\end{equation}

Finally, we extend \eqref{eqn:one-sided tail} to a two-sided bound for the absolute value $|S_n|$. Consider the sequence $\{-\eta_t\}$. Since $\eta_t$ is conditionally $\sigma$-sub-Gaussian, $-\eta_t$ is also conditionally $\sigma$-sub-Gaussian because $\mathbb{E}[e^{\lambda(-\eta_t)} \mid \mathcal{F}_{t-1}] = \mathbb{E}[e^{(-\lambda)\eta_t} \mid \mathcal{F}_{t-1}] \le e^{\frac{(-\lambda)^2 \sigma^2}{2}} = e^{\frac{\lambda^2 \sigma^2}{2}}$. Applying the same logic to $-S_n$, we get $P(S_n \le -u) \le \exp\left( -\frac{u^2}{2n\sigma^2} \right)$. By the union bound, we conclude:
$$
P(|S_n| \ge u) \le P(S_n \ge u) + P(S_n \le -u) \le 2\exp\left( -\frac{u^2}{2n\sigma^2} \right).
$$
Setting $\delta = 2\exp\left( -\frac{u^2}{2n\sigma^2} \right)$ and solving for $u$ gives $u = \sqrt{2n\sigma^2\ln(2/\delta)}$, which completes the proof.
\end{proof}

\end{document}